\pdfoutput=1
\documentclass[Afour,sageh,times]{sagej}

\usepackage{bm}
\usepackage{amssymb}
\usepackage{pifont}
\usepackage{multirow}
\newcommand{\cmark}{\ding{51}}\newcommand{\xmark}{\ding{55}}
\usepackage{microtype}
\usepackage{booktabs}
\usepackage{tabularx}
\newcolumntype{C}{>{\centering\arraybackslash}X}
\AtBeginDocument{%
  \setlength{\abovedisplayskip}{4pt plus 2pt minus 1pt}%
  \setlength{\belowdisplayskip}{4pt plus 2pt minus 1pt}%
  \setlength{\abovedisplayshortskip}{1.5pt plus 2pt}%
  \setlength{\belowdisplayshortskip}{3pt plus 2pt minus 1pt}%
}
\usepackage{algorithm}
\usepackage{algpseudocode}
\usepackage{tikz}
\usetikzlibrary{arrows.meta,positioning,fit,backgrounds,calc}
\definecolor{arrowsteel}{HTML}{5D6884}
\definecolor{pblue}{HTML}{3C5488}
\definecolor{pred}{HTML}{E64B35}
\definecolor{pgreen}{HTML}{00A087}
\definecolor{pgrey}{HTML}{8491B4}
\definecolor{pslate}{HTML}{4A4A4A}
\definecolor{porange}{HTML}{F39B7F}
\tikzset{
  fl/.style={-{Latex[length=1.6mm,width=1.2mm]}, line width=0.6pt, draw=arrowsteel},
  nd/.style={draw, rounded corners=1.4pt, line width=0.45pt, align=center,
             inner sep=2.2pt, font=\sffamily\scriptsize},
  inp/.style={nd, fill=pgrey!14,   draw=pgrey!90},
  stg/.style={nd, fill=pblue!9,    draw=pblue!80},
  brn/.style={nd, fill=pgreen!11,  draw=pgreen!85},
  dec/.style={nd, fill=pred!9,     draw=pred!85},
  res/.style={nd, fill=porange!24, draw=porange!80!black},
  th/.style={nd, inner sep=1.6pt, align=center, text width=22mm,
             font=\sffamily\scriptsize},
}
\usepackage{url}
\usepackage[colorlinks,bookmarksopen,bookmarksnumbered,citecolor=blue,urlcolor=blue,linkcolor=blue]{hyperref}
\usepackage[nameinlink,capitalize,noabbrev]{cleveref}
\makeatletter
\long\def\@makefntext#1{\parindent 1em\noindent\textsuperscript{\@thefnmark}\,#1}
\makeatother
\usepackage{colortbl}
\definecolor{tabhead}{gray}{0.93}
\colorlet{tabbest}{pblue!34}\colorlet{tabsec}{pblue!18}\colorlet{tabthird}{pblue!7}
\definecolor{bestcell}{RGB}{158,190,224}
\definecolor{seccell}{RGB}{214,228,242}

\graphicspath{{figures/}}
\def\journalname{The International Journal of Robotics Research}
\def\volumeyear{2026}

\theoremstyle{plain}
\newtheorem{theorem}{Theorem}
\newtheorem{proposition}{Proposition}

\newtheorem{corollary}{Corollary}
\theoremstyle{definition}
\newtheorem{definition}{Definition}
\newtheorem{assumption}{Assumption}
\newtheorem{problem}{Problem}
\newtheorem{remark}{Remark}
\crefname{remark}{Remark}{Remarks}
\crefname{assumption}{Assumption}{Assumptions}

\newcommand{\SO}{\mathrm{SO}}
\newcommand{\SE}{\mathrm{SE}}

\newcommand{\R}{\mathbb{R}}
\newcommand{\tr}{\mathrm{tr}}
\newcommand{\diag}{\mathrm{diag}}
\newcommand{\rank}{\mathrm{rank}}
\newcommand{\rangeop}{\mathrm{range}}
\newcommand{\nullop}{\mathrm{null}}
\newcommand{\Exp}{\mathrm{Exp}}
\newcommand{\Log}{\mathrm{Log}}
\newcommand{\calV}{\mathcal{V}}
\newcommand{\calS}{\mathcal{S}}
\newcommand{\calR}{\mathcal{R}}
\newcommand{\calL}{\mathtt{L}}
\newcommand{\calC}{\mathtt{C}}
\newcommand{\calB}{\mathtt{B}}
\newcommand{\calD}{\mathcal{D}}
\newcommand{\calF}{\mathcal{F}}
\newcommand{\hatT}{\hat{\bm{T}}}
\newcommand{\skewop}[1]{[#1]_{\times}}
\newcommand{\LamR}{\bm{\Lambda}_{R}}
\newcommand{\LamT}{\bm{\Lambda}_{t}}
\newcommand{\LamRv}{\bm{\Lambda}^{\mathrm{v}}_{R}}
\newcommand{\LamTv}{\bm{\Lambda}^{\mathrm{v}}_{t}}
\newcommand{\MDB}{\mathrm{MDB}}
\newcommand{\NIS}{\mathrm{NIS}}

\usepackage{flafter}                  %

\begin{document}
\widowpenalty=10000\clubpenalty=10000\displaywidowpenalty=10000%
\setcounter{dbltopnumber}{1}%

\runninghead{Hu et al.}

\title{VeriCalib: Verifiable Target-Based LiDAR-Camera Calibration}

\author{Xiangcheng Hu\affilnum{1,2}, Xieyuanli Chen\affilnum{3}\textsuperscript{*}, Feiyi Chen\affilnum{4}, Shengli Ren\affilnum{2}, Shiyang Chen\affilnum{2}, Guoqiang Ye\affilnum{2}, Hao Wang\affilnum{2},  Ping Tan\affilnum{1}, Steven L. Waslander\affilnum{5}, Fuchun Sun\affilnum{6} and Jin Wu\affilnum{7}}

\affiliation{\affilnum{1}Department of Electronic and Computer Engineering, Hong Kong University of Science and Technology, Hong Kong, China\\
\affilnum{2}X Square Robot Technology (Shenzhen) Co., Ltd., Shenzhen, China\\
\affilnum{3}College of Intelligence Science and Technology at National University of Defense Technology, Changsha, Hunan, China\\
\affilnum{4}Independent Researcher\\
\affilnum{5}University of Toronto Institute for Aerospace Studies and the University of Toronto Robotics Institute, Toronto, Canada\\
\affilnum{6}Department of Computer Science and Technology, Tsinghua University, Beijing, China\\
\affilnum{7}School of Intelligent Science and Technology, University of Science and Technology Beijing, Beijing, China}

\corrauth{Xieyuanli Chen, College of Intelligence Science and Technology at National University of Defense Technology, Changsha, Hunan, China} \email{xieyuanli.chen@nudt.edu.cn}

\begin{abstract}
Target-based extrinsic calibration estimates the rigid transform between sensors from a few observations of a known target, and its result is commonly assessed by the root-mean-square error of the fitted residuals. However, this scalar error is computed on the same observations and therefore cannot indicate whether to accept the result, re-extract, recollect or add observations.
This paper presents VeriCalib, which formulates extrinsic calibration as a verifiable estimation problem and reports a calibration state.
The key insight is a two-level extrinsic compensation mechanism, in which the extrinsic compensates the rigid part of each observation error at the residual level, and rotation and translation compensate each other at the parameter level. Consequently, the fitted residual can reveal neither an inconsistency between the observations and the model nor whether the estimated extrinsic is reliable in every direction.
Accordingly, we assess the calibration through a structured covariance that jointly propagates the uncertainty of the camera and LiDAR board features with a known-geometry prior, and estimate its scale as a variance component from the residuals. On this covariance, we decouple rotation and translation to resolve the precision along each direction and guide data collection toward the weakest one. Since this precision presumes consistent observations, we further develop within-view and cross-view checks that examine the internal residual and the rigid discrepancy between observations, respectively, with thresholds that hold their significance level under an unknown noise level. We also extend the minimal detectable bias to the extrinsic to bound the smallest detectable fault along each direction. These criteria form a calibration state that indicates whether to accept the result or how to collect the data.
Experiments in simulation and on real-world robot platforms show that the structured covariance predicts the observed estimation scatter, and that weakest-direction acquisition meets a strict rotation requirement with up to $30\%$ fewer observations than D-, A-, and E-optimal designs. Under an unknown noise magnitude, the consistency checks detect faulty observations while removing few fault-free ones, unlike robust estimators with fixed thresholds. Cross-session validation further confirms a verified calibration state.
The code and datasets will be released to support the future research of verifiable state estimation.\footnote{\url{https://github.com/JokerJohn/VeriCalib}}
\end{abstract}

\keywords{Extrinsic calibration, LiDAR-camera calibration, uncertainty-aware estimation, statistical verification.}

\maketitle

\section{Introduction}
\label{sec:intro}
\subsection{Motivation and Challenges}

Extrinsic calibration, which estimates the rigid transform
between two sensor frames, is a core problem in multi-sensor robotics perception.
Every fused measurement passes through this transform,
and its error propagates into dependent tasks as a systematic bias.
Such a bias degrades perception and planning
in autonomous driving~\citep{bai2022transfusion,dong2023benchmarking},
grasping on embodied platforms~\citep{fang2023anygrasp},
and the trajectory and map references used for evaluation~\citep{hu2024paloc,hu2025mapeval}.
Among sensor pairs, the LiDAR-camera pair is widely adopted,
as it combines 3D geometry with 2D visual
semantics~\citep{lin2022r,koide2024tightly,zhao2024superloc}.
Its extrinsic is commonly estimated from a calibration target with a known pattern,
such as a checkerboard or a circular-hole board,
which provides well-defined cross-modal
correspondences~\citep{mirzaei2012lidar,zhou2018automatic,beltran2022velo2cam,zheng2025fastcalib,hu2026p2calib}.
\begin{figure}[!t]\centering\includegraphics[width=\columnwidth]{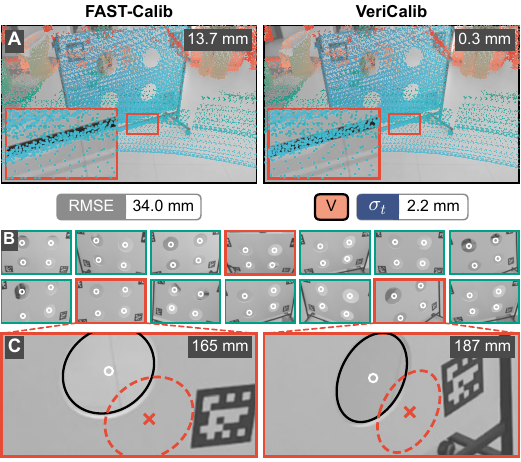}
\caption{Head-pair calibration from $14$ observations, two with hole centers misplaced by automatic rim extraction (\textbf{C}: true rim and center in black and white, extraction in red). \textbf{A}: LiDAR scans projected with the extrinsics of FAST-Calib and VeriCalib (tags: translation deviation from the reference). The RMSE of $34.0$\,mm neither reveals nor locates the errors, whereas VeriCalib removes both and one further observation (\textbf{B}: accepted in teal, removed in red) and reports the state, validation-limited (V), with its translation precision $\sigma_t$.}
\label{fig:teaser}\end{figure}
As there is no external ground truth, the calibration result is commonly assessed by the root-mean-square error (RMSE) of its fitted residuals~\citep{domhof2019extrinsic,zheng2025fastcalib}, its consistency~\citep{yuan2021pixel}, or covariance~\citep{brookshire2012extrinsic,pandey2015automatic,yamada2024probability}.
\cref{fig:teaser} illustrates the gap: on a set with two extraction errors, a conventional pipeline returns a residual, whereas our method identifies the inconsistent observations and reports a calibration state.
Learning-based assessments likewise return only a binary label~\citep{tahiraj2025calornocal} or a conformal interval~\citep{cocheteux2025uncertainty}, and neither specifies the corrective action that a failed calibration requires. Consequently, a large residual provides no criterion
for choosing among accepting the result, re-extracting the features,
recollecting the observations, and adding observations, and even in mass-production calibration the number and placement of targets are determined empirically.

We formulate this task as the \emph{verifiable calibration problem},
which assesses whether the estimated extrinsic satisfies given
precision and detection requirements
and determines the subsequent calibration action accordingly.
Solving this problem involves three challenges.

\begin{itemize}
\item \textbf{Independent and nominal noise modeling.}
Existing uncertainty-aware calibration methods~\citep{koo2020analytic,kummerle2020unified,brookshire2012extrinsic} model the target features as independent measurements with a nominal noise magnitude. However, the camera features on one board share the estimated board pose, the LiDAR features share the fitted board plane, and the actual noise magnitude varies across sensors and scenes. As a result, the reported precision and the test thresholds do not match the actual uncertainty, leading to overconfident acceptance or to the rejection of consistent observations.

\item \textbf{Coupled constraint analysis.}
Existing observability analyses of calibration~\citep{mirzaei2012lidar,lv2022observability,yang2023online} verify only that the extrinsic is observable and, like degeneracy analyses in LiDAR registration~\citep{zhang2016degeneracy,tuna2023x}, do not account for the compensation between rotation and translation. As a small rotation and a compensating translation yield nearly the same target observations, this compensation masks weakly constrained directions. Data collection is thus left without a criterion for which target pose to add and when the collected poses suffice.

\item \textbf{Insufficient model consistency assessment.}
Real target observations contain gross errors such as mis-extracted features and wrong correspondences. Existing calibration methods reject them by fixed robust thresholds~\citep{beltran2022velo2cam,zheng2025fastcalib,wiesmann2024joint}, without a stated significance level or a bound on the smallest fault that passes undetected, which reliability analyses provide only for individual measurements~\citep{baarda1968testing,hafez2025quantifying}. A faulty observation can therefore pass undetected and bias the accepted extrinsic, whereas a consistent one can be discarded.
\end{itemize}

The three analyses depend on one another, since the precision and thresholds rest on the covariance and all three presume consistent observations, whereas existing pipelines address them separately through a scalar residual. We therefore represent the observation as a unified \emph{structured measurement factor}, its residual together with the covariance and the local Jacobian, and resolve the dependence by first selecting the consistent observations under a robust working scale and then evaluating the scale, the directional precision, and the detection limits on the selected set (\cref{subsec:scaleest,subsec:diagnosis}). \cref{fig:overview} follows this procedure on the head pair, from the factor of each observation to the calibration state that feeds back into data collection.

\begin{figure*}[!t]\centering\includegraphics[width=\textwidth]{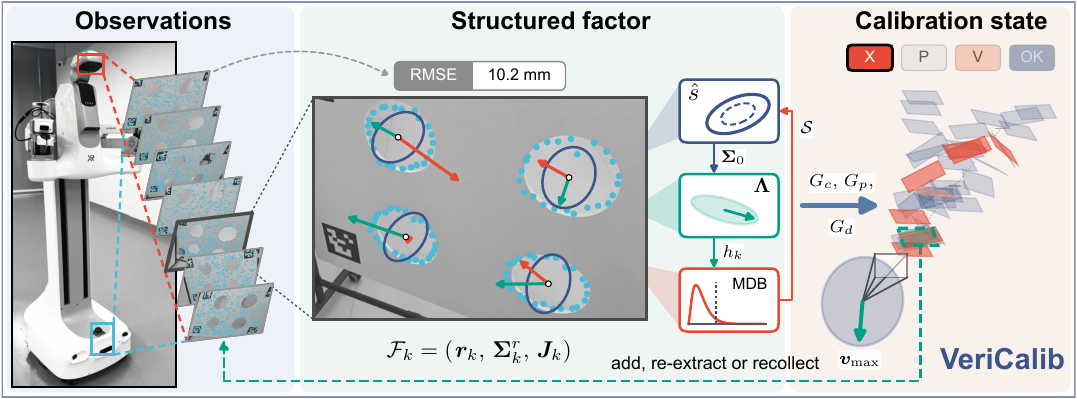}
\caption{VeriCalib overview on the head pair. \textbf{Left}: observations with projected LiDAR points (blue; red box: head camera; cyan boxes: chassis LiDAR and camera) and their fitted RMSE. \textbf{Center}: structured factor $\calF_k$ of one observation with the $1\sigma$ hole-center residual covariance (navy ellipses), the LiDAR boundary points (blue dots), and the absorbed (green) and internal (red) residual parts; the boxes derive the scale $\hat s$, the directional information $\bm\Lambda$, and the MDB, which depend on one another. \textbf{Right}: calibration state (X/P/V/OK: inconsistent, precision-limited, validation-limited, verified), accepted (navy), $p<0.05$ (salmon), and removed (red) board poses, detection-limit ellipse with its weakest direction $\bm v_{\max}$ (green), and recommended next pose (green dashed).}
\label{fig:overview}
\vspace{-0.3cm}
\end{figure*}

\subsection{Contributions}
This paper presents VeriCalib, which formulates target-based extrinsic calibration as a verifiable estimation problem on the structured measurement factor. We identify the root cause that prevents a scalar calibration residual from revealing the extrinsic state as a two-level extrinsic compensation (\cref{sec:factor}). At the residual level, the extrinsic absorbs the rigid part of each observation error, and at the parameter level, rotation and translation compensate each other, so that neither a small residual nor an aligned projection implies an accurate extrinsic. Whereas geodetic reliability analyzes the redundancy of individual measurements~\citep{forstner1987reliability}, this mechanism acts on the rigid-body extrinsic and its coupled rotation and translation, and it determines how the observations of a calibration are weighted, which target poses the calibration requires, and which faults it can detect.

\textbf{Our first contribution} (\cref{sec:method}) is a structured covariance model for target observations, on which every reported precision, threshold, and detection limit rests. Unlike independent feature covariances of nominal magnitude, its shape is derived from the measurement model of each sensor, through the shared board pose in the camera side and the shared board plane with a known-geometry prior in the LiDAR side (\cref{subsec:camcov,subsec:lidarcov}), and its scale is estimated from the absorbed residual components, which alone determine the estimate (\cref{subsec:scaleest}). We further quantify the precision gain of the weighted over the unweighted solver (\cref{thm:gain}).

As the precision of a calibration is limited by its weakest direction, \textbf{our second contribution} (\cref{sec:method:obs}) resolves the precision of the extrinsic by physical direction. Marginalizing the compensation between rotation and translation from the information of this covariance yields directional precisions and a weakest-direction score for selecting the next target pose (\cref{subsec:guidance}). Its closed form for the four-hole target further shows that a common increase of the board distance leaves the rotation precision invariant (\cref{subsec:closedform}).

Directional precision holds only for observations consistent with the model. To this end, \textbf{our third contribution} (\cref{sec:method:tests}) is a model consistency assessment with directional detection limits. A within-view check tests the internal residual of each observation, and a cross-view check tests its agreement with the remaining observations, both at a significance level that holds under an unknown noise magnitude (\cref{subsec:ftest,subsec:diagnosis}). We further extend the minimal detectable bias (MDB) of geodetic reliability~\citep{baarda1968testing,teunissen2018dia} from individual measurements to the directions of the extrinsic, which is, to the best of our knowledge, the first direction-resolved detection limit for target-based calibration (\cref{subsec:mdb}). The three analyses combine into a calibration state, each value of which indicates the next calibration action (\cref{subsec:status}).

The framework requires only a target-specific factor construction, which we provide for circular-hole and checkerboard targets (\cref{subsec:scope}).
\cref{sec:related} reviews related work and \cref{sec:formulation} formulates the problem.
\cref{sec:factor} analyzes the two-level compensation, and \cref{sec:method,sec:method:obs,sec:method:tests} develop the first three contributions in turn. \cref{sec:results} reports the experiments in simulation and on real-world data, \cref{sec:discussion} discusses the results and the scope of the verification,
and \cref{sec:conclusion} concludes the paper.

\section{Related Work}
\label{sec:related}
We first review target-based calibration (\cref{subsec:rw-target}) and then the noise modeling, constraint analysis, and consistency assessment (\crefrange{subsec:rw-uncertainty}{subsec:rw-reliability}), and \cref{tab:related} positions representative methods along these challenges.

\begin{table*}[!t]
\centering\caption{Comparison of extrinsic calibration methods.}
\label{tab:related}\footnotesize\setlength{\tabcolsep}{1.8pt}\renewcommand{\arraystretch}{1.12}\setlength{\aboverulesep}{0pt}\setlength{\belowrulesep}{0pt}\setlength{\extrarowheight}{1pt}\renewcommand{\cmark}{$\checkmark$}\renewcommand{\xmark}{\leavevmode\textcolor{black!50}{$\times$}}
\providecommand{\hdrule}[1]{\arrayrulecolor{tabhead}\specialrule{\dimexpr\cmidrulewidth+0.8pt\relax}{-0.4pt}{-\dimexpr\cmidrulewidth+0.4pt\relax}\arrayrulecolor{black}#1}
\begin{tabularx}{\textwidth}{>{\centering\arraybackslash}p{0.5cm}>{\raggedright\arraybackslash}Xl!{\color{black!30}\vrule width 0.4pt}>{\centering\arraybackslash}p{1.12cm}>{\centering\arraybackslash}p{1.12cm}>{\centering\arraybackslash}p{1.12cm}!{\color{black!30}\vrule width 0.4pt}>{\centering\arraybackslash}p{1.3cm}>{\centering\arraybackslash}p{1.45cm}!{\color{black!30}\vrule width 0.4pt}>{\centering\arraybackslash}p{1.36cm}>{\centering\arraybackslash}p{1.0cm}>{\centering\arraybackslash}p{1.0cm}>{\centering\arraybackslash}p{1.0cm}}
\toprule
\rowcolor{tabhead}[\dimexpr\tabcolsep+0.4pt\relax][\dimexpr\tabcolsep+0.4pt\relax] & & & \multicolumn{3}{c}{Uncertainty modeling} & \multicolumn{2}{c}{Constraint analysis} & \multicolumn{4}{c}{Result assessment}\\
\hdrule{\cmidrule(l{0.1em}r{0.1em}){4-6}\cmidrule(l{0.1em}r{0.1em}){7-8}\cmidrule(lr){9-12}}
\rowcolor{tabhead}[\dimexpr\tabcolsep+0.4pt\relax][\dimexpr\tabcolsep+0.4pt\relax] & & & & & & & & & \multicolumn{2}{c}{\rule{0pt}{2.5ex}Fault test} & \\
\hdrule{\cmidrule(lr){10-11}}
\rowcolor{tabhead}[\dimexpr\tabcolsep+0.4pt\relax][\dimexpr\tabcolsep+0.4pt\relax] & \multirow{-2}{*}{Method} & \multirow{-2}{*}{Venue} & \multirow{-2}{*}{LiDAR} & \multirow{-2}{*}{Camera} & \multirow{-2}{*}{Scale} & \multirow{-2}{*}{Analysis} & \multirow{-2}{*}{Guidance} & \multirow{-2}{*}{Metric} & \rule{0pt}{2.5ex}Within & Cross & \multirow{-2}{*}{\shortstack{Detect.\\limit}}\\
\midrule
& \citet{zhou2018automatic} & IROS & \xmark & \xmark & \xmark & Coupled & Empirical & Ref. & \cmark & \xmark & \xmark\\
& \citet{domhof2019extrinsic} & ICRA & Indep.$^\dagger$ & Indep.$^\dagger$ & Est.$^\dagger$ & \xmark & \xmark & RMSE & \cmark & \xmark & \xmark\\
& velo2cam~\citep{beltran2022velo2cam} & T-ITS & \xmark & \xmark & \xmark & \xmark & Empirical & Repeat. & \cmark & \xmark & \xmark\\
& LCE-Calib~\citep{jiao2023lcecalib} & T-MECH & \xmark & Indep. & \xmark & \xmark & \xmark & Ref. & \cmark & \xmark & \xmark\\
& \citet{persic2021spatiotemporal} & T-RO & Indep. & Indep. & \xmark & \xmark & Empirical & Repeat. & \cmark & \xmark & \xmark\\
& \citet{wiesmann2024joint} & RA-L & Indep. & Indep. & \xmark & Coupled & Empirical & Reproj. & \xmark & \cmark & \xmark\\
\multirow{-7}{*}{\rotatebox[origin=c]{90}{\small\textit{Target-based}}}& FAST-Calib (\hyperlink{cite.zheng2025fastcalib}{Zheng et al.}~\citeyear{zheng2025fastcalib}) & RA-P & \xmark & \xmark & \xmark & \xmark & \xmark & RMSE & \cmark & \xmark & \xmark\\
\midrule
& \citet{pandey2015automatic} & JFR & \xmark & \xmark & \xmark & Coupled & \xmark & Uncert. & \xmark & \xmark & \xmark\\
& \citet{maye2016self} & IJRR & Indep. & n/a & \xmark & Coupled & Info. & Uncert. & \xmark & \cmark & \xmark\\
& \hyperlink{cite.taylor2016motion}{Taylor et al.}~(\citeyear{taylor2016motion}) & T-RO & Indep. & Indep. & Est. & \xmark & \xmark & Ref. & \xmark & \cmark & \xmark\\
& \citet{yuan2021pixel} & RA-L & Indep. & Indep. & \xmark & Coupled & Empirical & Reproj. & \cmark & \xmark & \xmark\\
& OA-LICalib~\citep{lv2022observability} & T-RO & Indep. & n/a & \xmark & Coupled & Info. & Repeat. & \cmark & \cmark & \xmark\\
& TEScalib~\citep{hu2022tescalib} & IROS & Indep. & Indep. & \xmark & Coupled & \xmark & Ref. & \xmark & \cmark & \xmark\\
& DXQ-Net~\citep{jing2022dxq} & IROS & Indep. & \xmark & Learned & \xmark & \xmark & Ref. & \cmark & \xmark & \xmark\\
& \hyperlink{cite.moravec2024online}{Moravec et al.}~(\citeyear{moravec2024online}) & T-RO & \xmark & \xmark & Learned & \xmark & Learned & Valid. & \cmark & \cmark & \xmark\\
& CMRNext (\hyperlink{cite.cattaneo2025cmrnext}{Cattaneo et al.}~\citeyear{cattaneo2025cmrnext}) & T-RO & \xmark & \xmark & \xmark & \xmark & \xmark & Ref. & \cmark & \cmark & \xmark\\
& \citet{cocheteux2025uncertainty} & WACV & \xmark & \xmark & \xmark & \xmark & \xmark & Uncert. & \xmark & \xmark & \xmark\\
\multirow{-11}{*}{\rotatebox[origin=c]{90}{\small\textit{Targetless}}}& \citet{tahiraj2025calornocal} & IROS & \xmark & \xmark & \xmark & \xmark & \xmark & Valid. & \xmark & \xmark & \xmark\\
\midrule
\rowcolor{tabthird}[\dimexpr\tabcolsep+0.4pt\relax][\dimexpr\tabcolsep+0.4pt\relax] & \textbf{VeriCalib (ours)} & -- & \textbf{Joint} & \textbf{Joint} & \textbf{Est.} & \textbf{Decoupled} & \textbf{Info.} & \textbf{State} & \cmark & \cmark & \cmark\\
\bottomrule
\end{tabularx}
\par\vspace{1.5pt}
{\scriptsize\raggedright \textit{Note:} Indep./Joint: covariances independent or correlated within an observation; Est./Learned: noise magnitude estimated or learned; Coupled/Decoupled: rotation and translation analyzed jointly or with their compensation marginalized; Info./Empirical/Learned: pose selection by an information criterion, empirical advice, or learning; Reproj./Ref./Repeat./Uncert./Valid.: reprojection error, reference calibration, repeatability, covariance or interval, validity label; Within/Cross: fault test within or across observations; Detect. limit: smallest detectable fault; State: calibration state; n/a: no camera; $^\dagger$: tool variant.\par}
\end{table*}

\subsection{Target-Based Calibration}
\label{subsec:rw-target}
Target-based methods estimate the extrinsic from observations of a calibration object with known geometry. Planar checkerboards are the most common targets, whose plane constrains the extrinsic~\citep{mirzaei2012lidar} and whose edges and vertices yield line and point correspondences~\citep{zhou2018automatic}. Fiducial markers recover the board pose from a single image, and LiDARTag~\citep{huang2021lidartag} extends their detection to point clouds. Spherical targets~\citep{toth2020automatic,kummerle2020unified} and circular-hole boards instead provide features that both sensors localize directly. Four-hole boards are detected by velo2cam~\citep{beltran2022velo2cam}, fitted with a rigid hole-layout template by \citet{domhof2019extrinsic}, combined with a checkerboard in JointCalib~\citep{yan2022jointcalib}, and registered across mechanical and solid-state scan patterns by FAST-Calib~\citep{zheng2025fastcalib}. These known patterns provide a geometric prior for feature extraction and estimation~\citep{hu2026p2calib}, which targetless and increasingly learning-based methods do not exploit~\citep{pandey2015automatic,yuan2021pixel,koide2023general,petek2024automatic,zhang2025plk,an2024survey}.

\subsection{Noise and Uncertainty Modeling}
\label{subsec:rw-uncertainty}
The precision of an extrinsic depends on both the observation geometry and the measurement noise. Uncertainty-aware methods model the noise with analytic plane covariances~\citep{koo2020analytic} or probabilistic spherical-target fusion~\citep{kummerle2020unified}, but model the target features as independent. Since all camera features on one planar target are derived from a single Perspective-n-Point (PnP) pose, independent blocks count the pose information repeatedly and understate the uncertainty along correlated directions. LiDAR mapping propagates the point noise to the parameters of a fitted plane~\citep{yuan2022efficient}, whereas target-based calibration models each extracted LiDAR feature with an independent variance. Our previous work P$^2$Calib~\citep{hu2026p2calib} improves these features with pattern priors, which this paper propagates into their covariance (\cref{subsec:lidarcov}).

The absolute noise magnitude is rarely known. In LiDAR registration, closed-form covariances derived from white sensor noise~\citep{censi2007accurate} are smaller than the errors observed on real data~\citep{landry2019cello}, since pose-level errors and sensor biases dominate~\citep{brossard2020new,laconte2019lidar}. For the calibration result, Cram\'er-Rao bounds~\citep{brookshire2012extrinsic,pandey2015automatic} and Markov chain Monte Carlo posteriors~\citep{yamada2024probability} depend on assumed noise parameters, and conformal intervals~\citep{cocheteux2025uncertainty} are calibrated on held-out data but lack a physical covariance structure. Geodetic adjustment recovers the magnitude from the residuals by variance component estimation~\citep{koch1999parameter,teunissen2008least}, which \citet{ulrich2024uncertainty} applied to the observation groups of hand-eye calibration. Robot state estimation instead learns noise covariances from sensor data~\citep{liu2018deep}, which a single calibration session does not provide, and \citet{hagemann2022inferring} detect the systematic errors of camera calibration that a white-noise covariance does not represent. Most calibration methods that report a covariance adopt nominal magnitudes~\citep{brookshire2012extrinsic}, whose error propagates into the reported precision, the MDB, and every test referred to a nominal $\chi^2$ threshold. In this paper, we assemble the covariance shape of each observation into a single cross-modal residual covariance and estimate its scale from the residuals as one global variance component (\cref{sec:method}).

\subsection{Constraint Analysis}
\label{subsec:rw-observability}
Observability analysis determines which calibration parameters are constrained by the collected data, and it has been conducted for LiDAR-camera~\citep{mirzaei2012lidar,fu2019observability}, LiDAR-inertial measurement unit (IMU)~\citep{lv2022observability}, and visual-inertial calibration~\citep{yang2023online}. Several methods further select informative data segments or target poses by maximizing a scalar function of the information matrix~\citep{maye2016self,peng2019calibrationwizard,lv2022observability}. However, a calibration in which all parameters are observable may still constrain certain physical directions only weakly, and a scalar criterion does not indicate which directions these are.

Direction-resolved analysis has been studied mainly in LiDAR registration, where degeneracy detection examines the spectrum of the information matrix~\citep{zhang2016degeneracy,nobili2018predicting,hinduja2019degeneracy,hatles2025probab}. X-ICP~\citep{tuna2023x} analyzes the rotation and translation blocks separately, whereas our previous work DCReg~\citep{hu2025dcreg} eliminates their coupling through Schur complements, which has also been adopted for visual-inertial initialization~\citep{xu2026rotation}. These methods assume a nominal noise model and analyze registration rather than calibration. In this paper, we show that the rotation-translation compensation also arises in target-based calibration, as the parameter-level part of a two-level compensation (\cref{sec:factor}), and form the Schur complements from the structured covariance, so that their inverses are calibrated directional precisions (\cref{sec:method:obs}).

\subsection{Consistency Assessment and Reliability}
\label{subsec:rw-reliability}
Reliability theory originates from geodetic adjustment, where Baarda's data snooping~\citep{baarda1968testing} tests individual observations against the assumed model, and the associated MDB quantifies the smallest fault that the test detects at a specified significance level and power. Later studies formalized the distributions of these statistics~\citep{teunissen2018dia,imparato2018minimal,zaminpardaz2025conceptual}, and \citet{forstner1987reliability} applied the analysis to mensuration problems in computer vision, where the redundancy of each measurement determines the fraction of its error that the estimate absorbs. In robotics, integrity monitoring has quantified the localization safety of Kalman filtering~\citep{arana2020integrity}, fixed-lag smoothing, and simultaneous localization and mapping~\citep{hafez2020quantifying,hafez2025quantifying}. \citet{brunson2025analytical} applied the MDB to the pre-analysis of IMU error calibration in the integration of a global navigation satellite system with an IMU. In these applications, faults are defined on individual measurements or landmarks, and no target-based calibration method reports a detectability limit along the directions of the extrinsic.

Calibration results are also checked by repeatability, agreement with a reference calibration, or visual inspection, and none of these assessments provides a threshold calibrated to the noise of the observations. Fault tests in calibration act at two levels (\cref{tab:related}). Within-observation tests reject a board by geometric thresholds or RANSAC on its own features~\citep{beltran2022velo2cam,zheng2025fastcalib}, and cross-observation tests reject a board whose residual against the joint estimate exceeds a robust threshold~\citep{fischler1981ransac,yang2020graduated,wiesmann2024joint}, although a biased observation of high leverage pulls the fit toward itself and leaves a small residual. The learned detector of \citet{tahiraj2025calornocal} instead classifies the whole calibration without a controlled false-alarm rate. Certifiable algorithms~\citep{yang2020teaser,yang2023certifiable,wise2026certifiably} and globally optimal estimators of other inter-sensor misalignments~\citep{hu2025amalign} certify the optimality of the estimate for a given model, but do not assess whether the measurements are consistent with that model.

As \cref{tab:related} shows, a covariance modeled by a prior method is independent across features and mostly of fixed or learned scale, constraints are analyzed on the coupled six degrees of freedom, and no prior method reports a detection limit, so that the three challenges are addressed by separate literatures, each assuming the noise magnitude, the nominal noise model, or the calibrated covariance that another must supply. We therefore formulate all three analyses on a single structured measurement factor (\cref{sec:factor}).

\section{Problem Formulation}
\label{sec:formulation}
\label{sec:prelim}

In the target-based calibration problem, we are given $K$ observations of a known planar target by a LiDAR and a camera, and the goal is to estimate the rigid transform between the two sensors. 

\subsection{Target-Based Calibration}
\label{subsec:extrinsic}

We denote the LiDAR, camera, and board frames by $\calL$, $\calC$, and $\calB$, the transform from frame $\mathtt B$ to frame $\mathtt A$ by $\bm T_{\mathtt{AB}}\in\SE(3)$, and an estimate by a hat $\hat{(\cdot)}$. The unknown extrinsic $\bm T_{\mathtt{CL}}=(\bm R,\bm t)$ maps LiDAR coordinates to camera coordinates, $\delta\bm\xi=(\delta\bm\phi,\delta\bm t)\in\R^6$ denotes a left perturbation on $\SE(3)$, and the subscripts $k$ and $j$ index the observations and the holes within each observation.

The point coordinates and the board poses $\bm T_{\mathtt{CB},k}$ and $\bm T_{\mathtt{LB},k}$ satisfy
\begin{equation}
\bm x^{\mathtt{C}}=\bm R\,\bm x^{\mathtt{L}}+\bm t,\qquad
\bm T_{\mathtt{CB},k}=\bm T_{\mathtt{CL}}\,\bm T_{\mathtt{LB},k},
\label{eq:extrinsic}
\end{equation}
where the second equation states that both sensors observe the same board pose.

\label{subsec:fourhole}
The target consists of $m=4$ circular holes of known radius $r$ at board coordinates $\bm a_j\in\{(\pm a,\pm b,0)\}$, with hole spacings $2a$ and $2b$, whose centers both sensors localize and thus form cross-modal correspondences of known geometry. The formulation also applies to any target with $m\ge3$ non-collinear points of known board coordinates. We refer to each recorded board placement as a target observation and to its acquisition geometry as its target pose.

The classical solver extracts the hole centers in each sensor independently and then aligns them. In the camera, PnP on the detected fiducials yields the board pose $(\bm Q_k,\bm c_k)$ with a $6\times6$ covariance $\bm\Sigma_{\theta,k}$, and the hole centers follow as $\bm p^{\mathtt{C}}_{kj}=\bm c_k+\bm Q_k\bm a_j$. In the LiDAR, the board plane is first fitted by RANSAC, and then the hole boundaries in the plane region are fitted by circles with centers $\bm p^{\mathtt{L}}_{kj}$. When all correspondences $(\bm p^{\mathtt{L}}_{kj},\bm p^{\mathtt{C}}_{kj})$ are correct and the noise is zero-mean, isotropic, Gaussian, and independent across correspondences, the maximum-likelihood extrinsic solves
\begin{equation}
(\bm R,\bm t)=
\arg\min_{\bm R\in\SO(3),\,\bm t}
\sum_{k=1}^{K}\sum_{j=1}^{4}
\lVert\bm R\bm p^{\mathtt{L}}_{kj}+\bm t-\bm p^{\mathtt{C}}_{kj}\rVert^2.
\label{eq:baseline}
\end{equation}
\cref{eq:baseline} has a closed-form global minimizer by orthogonal Procrustes alignment. A single observation ($K{=}1$) already constrains all six degrees of freedom through its four correctly associated, non-collinear hole centers, and multi-observation implementations solve~\cref{eq:baseline} jointly~\citep{beltran2022velo2cam,zheng2025fastcalib}. The classical solver reports the RMSE (\cref{fig:overview}, left). On real target observations, the four camera-side hole centers share one PnP pose, the LiDAR circles are fitted on partial arcs within one fitted board plane, and individual extractions can be grossly wrong, so that these assumptions do not hold in general. \cref{sec:method} models the first two effects in the covariance, and \cref{sec:method:tests} detects the third.

\subsection{Observation Model}
\label{subsec:obsmodel}
Throughout the paper, we analyze the observation residual, obtained by stacking the four correspondence residuals of observation~$k$:
\begin{equation}
\bm r_k(\bm R,\bm t)=\bigl[\,\bm R\bm p^{\mathtt{L}}_{kj}+\bm t-\bm p^{\mathtt{C}}_{kj}\,\bigr]_{j=1}^{4}\in\R^{12}.
\label{eq:residual}
\end{equation}
In \cref{sec:method:cov}, we generalize~\cref{eq:baseline} to the covariance-weighted estimator on the accepted set $\calS$, the subset of observations retained by the consistency assessment (\cref{sec:method:tests}),
\begin{equation}
\hatT_{\mathtt{CL}}=\arg\min_{\bm T\in\SE(3)}\sum_{k\in\calS}\bm r_k(\bm T)^\top\bigl(\bm\Sigma^r_k\bigr)^{-1}\bm r_k(\bm T),
\label{eq:problem}
\end{equation}
by assigning a structured covariance $\bm\Sigma^r_k$ to each residual. Under the left perturbation, the correspondence Jacobian is
\begin{equation}
\bm T_{\mathtt{CL}}=\Exp(\delta\bm\xi^{\wedge})\hatT_{\mathtt{CL}},\qquad
\bm J_{kj}=\bigl[-\skewop{\bm x_{kj}}\;\;\bm I_3\bigr],
\label{eq:jacobian}
\end{equation}
where $\bm x_{kj}=\hat{\bm R}\bm p^{\mathtt{L}}_{kj}+\hat{\bm t}$, and $\bm J_k=[\bm J_{k1};\dots;\bm J_{k4}]\in\R^{12\times6}$ stacks the four correspondences. Rotation and translation are analyzed as distinct physical quantities and are never combined in a common unit.\footnote{The perturbation $\delta\bm t$ is the displacement of the camera-frame origin, in which all translation precisions and requirements are stated, and it differs from the difference of two translation vectors $\bm t$ by $\delta\bm\phi\times\bm t$.} The checkerboard, whose correspondences are its four outer corners, differs only in its target-specific covariance (\cref{sec:checkerboard}).

Linearized at the true extrinsic $\bm T^\circ_{\mathtt{CL}}$, the observation residual satisfies the model, with $\doteq$ denoting first-order equality:
\begin{equation}
\begin{gathered}
\bm r_k\bigl(\Exp(\delta\bm\xi^\wedge)\bm T^\circ_{\mathtt{CL}}\bigr)\doteq\bm J_k\delta\bm\xi+\bm b_k+\bm\varepsilon_k,\\
\bm\varepsilon_k\sim\mathcal N(\bm0,\bm\Sigma^r_k),
\end{gathered}
\label{eq:obsmodel}
\end{equation}
where $\bm\varepsilon_k$ models the measurement noise, whose covariance is constructed in \cref{sec:method}, and $\bm b_k$ is zero for a consistent observation and a fault vector otherwise. A local fault, such as a mis-extracted hole or a wrong correspondence, yields a $\bm b_k$ with a component that no rigid perturbation explains, in addition to an absorbed rigid part (\cref{subsec:factorrep}), whereas a differential fault displaces one group of observations relative to the others. A common-mode fault $\bm b_k=\bm J_k\delta\bm\xi_0$, shared by all observations, is indistinguishable from a different extrinsic. A batch denotes the observations of one calibration run.

\subsection{The Verifiable Calibration Problem}
\label{subsec:quality}
\label{subsec:scope}
\label{sec:formulation:scope}
Formally, we pose the following problem.
\begin{problem}[Verifiable calibration]
\label{prob:verifiable}
Given $K$ target observations, a precision requirement $(\epsilon_R,\epsilon_t)$, and a detectable-fault requirement $(\delta_R,\delta_t)$, estimate the extrinsic and its covariance from the observations consistent with~\cref{eq:obsmodel}, and determine whether 
(i) the accepted observations pass the consistency tests at level $\alpha$, (ii) the precision along every rotation and translation direction meets $(\epsilon_R,\epsilon_t)$, and (iii) every differential fault larger than $(\delta_R,\delta_t)$ is detected with power $1-\beta$ by a validation test on observations held out from the estimation.
\end{problem}
The result is reported as a calibration state, which indicates the first unmet requirement and is paired with a recommended procedure (\cref{subsec:status}).
Since every test in \cref{sec:method:tests} compares observations within the same batch, the verifiable scope is internal, as stated below.
\begin{remark}[Internally unobservable faults]
\label{prop:unobservable}
Let each observation contain the same common-mode fault $\bm b_k=\bm J_k\delta\bm\xi_0$ in \cref{eq:obsmodel}. Then, to first order, the substitution $\delta\bm\xi'=\delta\bm\xi+\delta\bm\xi_0$ reduces the weighted estimation on every observation subset to the problem without bias, and every statistic computed from fitted, cross-validated, or cross-subset residuals is invariant (\cref{app:proofs}). Such faults include the rigid-equivalent component of an intrinsic error or of a target deformation, and a time offset under constant platform motion. They displace the calibration and validation observations equally, even when the validation information has full rank.
\end{remark}
\noindent Beyond \cref{eq:obsmodel}, we make the following assumption.
\begin{assumption}[Local factor model]
\label{ass:model}
(i) The residuals are locally linearizable around $\hatT_{\mathtt{CL}}$ and conditionally Gaussian. (ii) Each derived covariance shape is correct up to one common scale, estimated as one global variance component (\cref{subsec:scaleest}); the estimate and its covariance require this only on the absorbed subspace of \cref{subsec:factorrep} (\cref{prop:scale}), and the within-view and batch tests also on the internal subspace. The accepted residuals have zero conditional mean. (iii) The residuals of different observations are independent, including those of the validation and calibration sets.
\end{assumption}
For simplicity, the covariance $\bm\Sigma^{r}_k$ is frozen at the current estimate, so that its $\bm R$-dependence and the log-determinant of the likelihood are not differentiated, and the likelihood becomes a quasi-likelihood.

\section{Two-Level Extrinsic Compensation}
\label{sec:factor}
This section illustrates the two levels extrinsic compensation mechanism on the factor $\calF_k=(\bm r_k,\bm\Sigma^r_k,\bm J_k)$ : within each observation, where the absorption hides an inconsistency with the model (\cref{subsec:factorrep}), and between rotation and translation, where it hides how reliably each direction is constrained (\cref{subsec:localinfo}, \cref{fig:compensation}).

\begin{figure*}[!t]\centering\includegraphics[width=\textwidth]{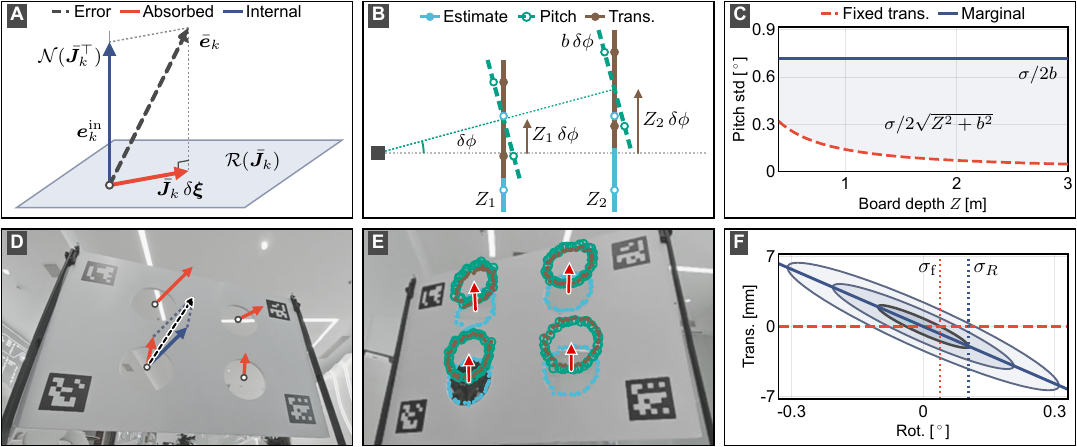}
\caption{Two-level extrinsic compensation (\textbf{A}--\textbf{C}: principle; \textbf{D}--\textbf{F}: chassis-pair data). \textbf{A}, \textbf{D}: a hole-center error split into absorbed and internal parts. \textbf{B}, \textbf{E}: a pitch and the compensating translation yield nearly the same target observation. \textbf{C}, \textbf{F}: rotation precision with the translation fixed ($\sigma_{\mathrm f}$) and marginalized ($\sigma_R$); C: $\sigma=5$\,mm, $b=0.2$\,m.}
\label{fig:compensation}\end{figure*}

\subsection{Residual-Level Compensation}
\label{subsec:factorrep}
Testing an observation without an external reference requires the part of its error that the estimate cannot absorb. At the residual level, the whitened residual space of each observation separates into two orthogonal subspaces (\cref{fig:compensation}A,D). An infinitesimal extrinsic perturbation acts on the first and leaves the second invariant, so that a discrepancy in the first is absorbed by the fitted transform, whereas a discrepancy in the second remains testable within the observation.
Target observation $k$ is represented by the factor
\begin{equation}
\calF_k=\bigl(\bm r_k(\bm T_{\mathtt{CL}}),\,\bm\Sigma^r_k,\,\bm J_k\bigr),
\label{eq:factor}
\end{equation}
whose negative log-likelihood is $\tfrac12\bm r_k^\top(\bm\Sigma^r_k)^{-1}\bm r_k$ up to a constant. We refer to $\bm\Sigma^r_k$, the covariance of the $12$-dimensional hole-center alignment residual under the stated measurement noise model at the true extrinsic, as the observation residual covariance. It is distinct from the estimator covariance $\bm H^{-1}$ in~\cref{eq:estcov} and from the deflated fitted-residual covariance of \cref{eq:deflation}.

Let $\bm L_k^\top\bm L_k=(\bm\Sigma^r_k)^{-1}$ be a whitening factor on the $d_k$-dimensional support of the residual covariance, $\bar{\bm r}_k=\bm L_k\bm r_k$, and $\bar{\bm J}_k=\bm L_k\bm J_k$. For a non-degenerate four-hole observation with $\rank(\bar{\bm J}_k)=6$ and $d_k=11$, since the saddle mode of the four coplanar centers receives no measurement (\cref{subsec:lidarcov,subsec:residcov}),
\begin{equation}
\R^{d_k}=\underbrace{\rangeop(\bar{\bm J}_k)}_{\dim=6}\;\oplus\;\underbrace{\nullop(\bar{\bm J}_k^\top)}_{\dim=d_k-6}.
\label{eq:structured_split}
\end{equation}
If $\bm N_k$ is an orthonormal basis of $\nullop(\bar{\bm J}_k^\top)$, the internal residual and statistic are
\begin{equation}
\begin{gathered}
\bm e^{\mathrm{in}}_k=\bm N_k^\top\bm L_k\bm r_k,\qquad
\bm N_k^\top\bm L_k\bm J_k=\bm0,\\
T^{\mathrm{in}}_k=\lVert\bm e^{\mathrm{in}}_k\rVert^2\sim\chi^2_{d_k-6}
\end{gathered}
\label{eq:inttest}
\end{equation}
under the local nominal model with the scale $s$ of \cref{subsec:scaleest} in $\bm\Sigma^r_k$, and the following result formalizes the separation.
\begin{proposition}[Residual-level compensation]
\label{prop:residual}
Let $\bm b_k$ in \cref{eq:obsmodel} be the fault of a non-degenerate observation, and let $\bar{\bm b}_k=\bm L_k\bm b_k$. The extrinsic perturbation that minimizes $\lVert\bar{\bm b}_k+\bar{\bm J}_k\delta\bm\xi\rVert$ cancels the component of $\bar{\bm b}_k$ in $\rangeop(\bar{\bm J}_k)$ exactly, whereas the internal residual $\bm e^{\mathrm{in}}_k$ is invariant to first order under every $\delta\bm\xi$. Consequently, $T^{\mathrm{in}}_k$ in \cref{eq:inttest} retains its null distribution for any absorbed component.
\end{proposition}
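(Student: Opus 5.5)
The proof is linear least squares in the whitened coordinates, built on the orthogonal decomposition \cref{eq:structured_split}. Write $\bm P_k=\bar{\bm J}_k(\bar{\bm J}_k^\top\bar{\bm J}_k)^{-1}\bar{\bm J}_k^\top$ for the orthogonal projector onto $\rangeop(\bar{\bm J}_k)$. Since the observation is non-degenerate, $\rank(\bar{\bm J}_k)=6$, so $\bar{\bm J}_k^\top\bar{\bm J}_k$ is invertible and $\bm P_k$ is well defined. Its complement is $\bm I-\bm P_k=\bm N_k\bm N_k^\top$, because $\bm N_k$ is an orthonormal basis of $\nullop(\bar{\bm J}_k^\top)=\rangeop(\bar{\bm J}_k)^\perp$ within the $d_k$-dimensional support. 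Decompose the whitened fault as $\bar{\bm b}_k=\bm P_k\bar{\bm b}_k+\bm N_k\bm N_k^\top\bar{\bm b}_k$.

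\textbf{Step 1 (exact cancellation).} Minimizing $\lVert\bar{\bm b}_k+\bar{\bm J}_k\delta\bm\xi\rVert^2$ gives, through the normal equations, the unique minimizer $\delta\bm\xi^\ast=-(\bar{\bm J}_k^\top\bar{\bm J}_k)^{-1}\bar{\bm J}_k^\top\bar{\bm b}_k$. Then $\bar{\bm b}_k+\bar{\bm J}_k\delta\bm\xi^\ast=(\bm I-\bm P_k)\bar{\bm b}_k$, so the range component $\bm P_k\bar{\bm b}_k$ is cancelled exactly and only $\bm N_k\bm N_k^\top\bar{\bm b}_k$ remains. This is the Pythagorean split $\lVert\bar{\bm b}_k\rVert^2=\lVert\bm P_k\bar{\bm b}_k\rVert^2+\lVert\bm N_k^\top\bar{\bm b}_k\rVert^2$. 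One subtlety: $\bm L_k$ is defined only on the support of $\bm\Sigma^r_k$. I would therefore state that $\bar{\bm J}_k$ and $\bar{\bm b}_k$ are taken in that $d_k$-dimensional space. Any component of $\bm b_k$ outside the support (the unmeasured saddle mode) is annihilated by $\bm L_k$ and is irrelevant to the quasi-likelihood.

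\textbf{Step 2 (invariance).} Insert the linearized model \cref{eq:obsmodel} into \cref{eq:inttest}. This gives $\bm e^{\mathrm{in}}_k\doteq\bm N_k^\top\bm L_k(\bm J_k\delta\bm\xi+\bm b_k+\bm\varepsilon_k)=\bm N_k^\top\bar{\bm b}_k+\bm N_k^\top\bm L_k\bm\varepsilon_k$, because $\bm N_k^\top\bar{\bm J}_k=\bm0$ by construction. Hence $\bm e^{\mathrm{in}}_k$ does not depend on $\delta\bm\xi$ to first order. It also does not depend on $\bm P_k\bar{\bm b}_k$, since $\bm N_k^\top\bm P_k=\bm0$. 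This holds in particular for any rigid-equivalent fault $\bm b_k=\bm J_k\delta\bm\xi_0$, where $\bm N_k^\top\bm L_k\bm b_k=\bm0$.

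\textbf{Step 3 (null distribution).} Under \cref{ass:model}, with the scale $s$ included in $\bm\Sigma^r_k$, the whitened noise $\bm L_k\bm\varepsilon_k$ is standard Gaussian $\mathcal N(\bm0,\bm I_{d_k})$ on the support. Since $\bm N_k$ has orthonormal columns, $\bm N_k^\top\bm L_k\bm\varepsilon_k\sim\mathcal N(\bm0,\bm I_{d_k-6})$. When the fault has no internal component, and in particular for any purely absorbed component, Step 2 gives $\bm e^{\mathrm{in}}_k\doteq\bm N_k^\top\bm L_k\bm\varepsilon_k$. Therefore $T^{\mathrm{in}}_k\sim\chi^2_{d_k-6}$ regardless of the absorbed component.

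The main obstacle is bookkeeping rather than depth. First, the rank-deficient covariance means $\bm L_k$ must be treated as a map onto the support, and this is where $d_k=11$ is used. Second, the linearization is taken at $\hatT_{\mathtt{CL}}$ while the model is stated at $\bm T^\circ_{\mathtt{CL}}$. The plan is to note that $\bm J_k$ differs between the two points only at second order in $\delta\bm\xi$, so $\bm N_k^\top\bm L_k\bm J_k=\bm0$ holds to first order at either point.
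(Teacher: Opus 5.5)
Your proposal is correct and follows the paper's proof essentially step for step. It uses the normal-equation minimizer with the projector $\bm P_k$ onto $\rangeop(\bar{\bm J}_k)$, the identity $\bm N_k^\top\bm L_k\bm J_k=\bm0$ for first-order invariance, and whitened noise of covariance $\bm I_{d_k}$ passed through the orthonormal $\bm N_k$ to give $\chi^2_{d_k-6}$. One small slip in your closing remark: $\bm J_k$ itself changes at first order between $\hatT_{\mathtt{CL}}$ and $\bm T^\circ_{\mathtt{CL}}$, since its lever arms $\bm x_{kj}$ move. So with mismatched linearization points $\bm N_k^\top\bm L_k\bm J_k$ is $O(\delta\bm\xi)$ rather than zero, and only its product with $\delta\bm\xi$ is second order, which is all that first-order invariance requires.
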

\noindent The proof is given in \cref{app:proofs}.
For a general target observation with residual dimension $d_k$, the internal dimension is $d_k-\rank(\bar{\bm J}_k)$, which is five for the four-hole target. These five dimensions are the non-rigid in-plane deformations of the four hole centers. Since the camera error and the error of the fitted LiDAR board plane move the four centers rigidly (\cref{subsec:camcov,subsec:lidarcov}), the internal residual of a four-hole observation isolates the in-plane LiDAR extraction error. In a batch, the absorbed part of a fault displaces the common estimate and is distributed over all residuals, so that the fitted residual neither reveals the error of the extrinsic nor locates the corrupted observations (\cref{fig:teaser}). 

\subsection{Parameter-Level Compensation}
\label{subsec:localinfo}
A rotation precision is meaningful only if it accounts for the translation that can compensate the rotation. The residual-level split has a counterpart in the parameter space, where a rotation and a translation compensate each other. Under the frozen-covariance Gaussian approximation, the Gauss-Newton information of the weighted cost of \cref{eq:problem} on the accepted set coincides with the local Fisher information, which we call the local information matrix,
\begin{equation}
\bm H_k=\bm J_k^\top(\bm\Sigma^{r}_k)^{-1}\bm J_k,\qquad
\bm H=\sum_{k\in\calS}\bm H_k=\begin{bmatrix}\bm A&\bm B\\\bm B^\top&\bm C\end{bmatrix},
\label{eq:localinfo}
\end{equation}
with rotation block $\bm A$, translation block $\bm C$, and coupling block $\bm B$. With the residuals linearized at the true extrinsic, the error of the weighted estimate on the accepted set and its covariance are
\begin{equation}
\bm\Sigma_{\delta\hat{\bm\xi}}\doteq\bm H^{-1},
\qquad
\delta\hat{\bm\xi}=-\bm H^{-1}\sum_{k\in\calS}\bm J_k^\top(\bm\Sigma^r_k)^{-1}\bm r_k.
\label{eq:estcov}
\end{equation}

A pitch $\delta\phi$ about a lateral axis displaces a board at depth $Z$ by approximately $Z\delta\phi$, which a translation reproduces up to an offset $b\,\delta\phi$ independent of $Z$, with $b$ the lateral distance of a hole from the board center (\cref{fig:compensation}B), and the rim points projected after a $5^\circ$ pitch and after the equivalent translation nearly coincide (\cref{fig:compensation}E). The off-diagonal block $\bm B$ in \cref{eq:localinfo} encodes this coupling (\cref{fig:compensation}F), and an aligned projection of the target therefore does not establish the rotation. Holding the translation fixed yields the conditional rotation information $\bm A$, whereas leaving it free yields, by the block inversion of $\bm H$, the rotation-marginal covariance $[\bm H^{-1}]_{\phi\phi}=(\bm A-\bm B\bm C^{-1}\bm B^\top)^{-1}$. The marginal information matrices are therefore the Schur complements adopted in DCReg~\citep{hu2025dcreg}, here formed from the structured covariance of \cref{sec:method}
\begin{equation}
\LamR=\bm A-\bm B\bm C^{-1}\bm B^\top,\qquad
\LamT=\bm C-\bm B^\top\bm A^{-1}\bm B .
\label{eq:lamR}
\end{equation}

\begin{proposition}[Parameter-level compensation]
\label{prop:parameter}
For a full-rank $\bm H$, $\LamR\preceq\bm A$ and $\LamT\preceq\bm C$, with equality if and only if $\bm B=\bm 0$. The difference $\bm B\bm C^{-1}\bm B^\top$ is the rotation information that a translation perturbation can compensate, and $\bm B^\top\bm A^{-1}\bm B$ the translation information that a rotation perturbation can compensate.
\end{proposition}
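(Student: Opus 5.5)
Because $\bm H=\sum_{k\in\calS}\bm J_k^\top(\bm\Sigma^r_k)^{-1}\bm J_k$ is a sum of positive semidefinite terms and is assumed full rank, it is positive definite. Its principal blocks $\bm A$ and $\bm C$ are therefore positive definite and invertible, so both Schur complements in \cref{eq:lamR} are well defined. The plan is to prove the ordering and the equality case directly from the definiteness of these blocks, and then obtain the interpretive claims from the variational characterization of the Schur complement.

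For the inequality, write $\bm A-\LamR=\bm B\bm C^{-1}\bm B^\top$. For any $\bm u\in\R^3$ we have $\bm u^\top\bm B\bm C^{-1}\bm B^\top\bm u=(\bm B^\top\bm u)^\top\bm C^{-1}(\bm B^\top\bm u)\ge0$ because $\bm C^{-1}\succ0$, so $\LamR\preceq\bm A$. The same argument with $\bm A^{-1}\succ0$ gives $\LamT\preceq\bm C$. For the equality case, if $\bm B=\bm0$ both differences vanish. Conversely, suppose $\bm B\bm C^{-1}\bm B^\top=\bm0$. Taking the trace gives $\tr\bigl((\bm C^{-1/2}\bm B^\top)^\top(\bm C^{-1/2}\bm B^\top)\bigr)=\lVert\bm C^{-1/2}\bm B^\top\rVert_F^2=0$, hence $\bm B=\bm0$. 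The argument for $\bm B^\top\bm A^{-1}\bm B$ is identical. Each equality is therefore equivalent to $\bm B=\bm0$, and so are both together.

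For the interpretation, I would use the identity
\[
\min_{\delta\bm t}\begin{bmatrix}\delta\bm\phi\\\delta\bm t\end{bmatrix}^{\!\top}\bm H\begin{bmatrix}\delta\bm\phi\\\delta\bm t\end{bmatrix}=\delta\bm\phi^\top\LamR\,\delta\bm\phi,
\]
which holds with minimizer $\delta\bm t^\star=-\bm C^{-1}\bm B^\top\delta\bm\phi$ and is checked by completing the square. The quadratic form of $\bm H$ is the increase of the weighted cost in \cref{eq:problem}, that is, the whitened residual energy $\sum_k\lVert\bar{\bm J}_k\delta\bm\xi\rVert^2$ produced by a perturbation. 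With the translation held fixed, a rotation $\delta\bm\phi$ produces the energy $\delta\bm\phi^\top\bm A\,\delta\bm\phi$. Once the optimal compensating translation $\delta\bm t^\star$ is applied, the energy drops by exactly $\delta\bm\phi^\top\bm B\bm C^{-1}\bm B^\top\delta\bm\phi$. This drop is precisely the rotation information that a translation perturbation can compensate, matching the pitch–translation picture of \cref{fig:compensation}B. Exchanging the roles of rotation and translation, with minimizer $\delta\bm\phi^\star=-\bm A^{-1}\bm B\,\delta\bm t$, gives the corresponding statement for $\bm B^\top\bm A^{-1}\bm B$. Combined with block inversion, $[\bm H^{-1}]_{\phi\phi}=\LamR^{-1}$, this confirms that $\LamR$ is the marginal information.

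None of these steps poses a real obstacle. The only care needed is the converse of the equality case, which requires strict positive definiteness of $\bm C$, and that definiteness follows from the full-rank assumption on $\bm H$.
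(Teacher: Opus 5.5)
Your proof is correct and follows the paper's argument. Positive definiteness of $\bm C$ (resp.\ $\bm A$) gives $\bm B\bm C^{-1}\bm B^\top\succeq\bm 0$, and equality forces $\bm C^{-1/2}\bm B^\top=\bm 0$, hence $\bm B=\bm 0$. For the interpretation, the paper uses block inversion, $[\bm H^{-1}]_{\phi\phi}=\LamR^{-1}$ against $\bm A^{-1}$, rather than your completing-the-square minimization over $\delta\bm t$; the paper states that same variational form in its appendix on rank-deficient information.
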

\noindent The proof is given in \cref{app:proofs}. A scalar summary of $\bm H$ combines the two subspaces and their different units $\mathrm{rad}^{-2}$ and $\mathrm{m}^{-2}$, and cannot attribute a weak direction to rotation or translation. \cref{prop:parameter} thus states that a rotation precision computed with the translation held fixed overstates the precision of a calibration in which the translation is estimated, as the smaller fixed-translation standard deviation $\sigma_{\mathrm f}$ of \cref{fig:compensation}F shows on a chassis-pair session.

\section{Structured Covariance Modeling}
\label{sec:method}
\label{sec:method:cov}
Since a fitted residual neither reveals the error of the extrinsic nor locates a corrupted observation (\cref{sec:factor}), the precision and the test thresholds are evaluated from the information of the factors, which requires a covariance $\bm\Sigma^r_k$ (\cref{fig:overview}, blue ellipses) of correct shape and scale. In this section, we first quantify how the covariance weighting determines the precision of the extrinsic (\cref{subsec:shapescale}). We then derive the shape of the camera and LiDAR blocks (\cref{subsec:camcov,subsec:lidarcov}), combine them into the joint residual covariance (\cref{subsec:residcov}), estimate its scale in two stages (\cref{subsec:scaleest}), and finally adapt the factor construction to a checkerboard target (\cref{sec:checkerboard}).

\subsection{Covariance Weighting Theorem}
\label{subsec:shapescale}
The precision of the extrinsic follows from the observation geometry through $\bm J_k$ and from the measurement noise through $\bm\Sigma^r_k$ (\cref{eq:localinfo}), and an error in the covariance shape or scale degrades the estimate or its reported precision. We write each covariance as $\bm\Sigma^r_k=s\,\bm\Sigma^r_{0,k}$, with a shape $\bm\Sigma^r_{0,k}$ and a common scale $s>0$, and analyze the cost of an error in each. Suppose the estimator of \cref{eq:estcov} is instead weighted by $\tilde{\bm\Sigma}^r_k$ while the residuals follow $\bm\Sigma^r_k$, and write $\tilde{\bm H}=\sum_k\bm J_k^\top(\tilde{\bm\Sigma}^r_k)^{-1}\bm J_k$ for the resulting information matrix. Propagating the true residual covariance through this suboptimal linear estimator yields the sandwich covariance
\begin{equation}
\bm V(\tilde{\bm\Sigma})
=\tilde{\bm H}^{-1}\Bigl(\sum_{k}\bm J_k^\top(\tilde{\bm\Sigma}^r_k)^{-1}\bm\Sigma^r_k(\tilde{\bm\Sigma}^r_k)^{-1}\bm J_k\Bigr)\tilde{\bm H}^{-1},
\label{eq:sandwich}
\end{equation}
so that $\bm V(\bm\Sigma)=\bm H^{-1}$, and $\bm V_{\mathrm u}=\bm V(\bm I)$ is the covariance of the unweighted solver~\cref{eq:baseline}. Let $\delta\hat{\bm\xi}$ and $\delta\hat{\bm\xi}_{\mathrm u}$ denote the errors of the correctly weighted and of the unweighted estimate. Weighting improves the estimate only where the observations differ in precision, and only if the weights follow these differences. The following theorem quantifies both conditions to first order.
\begin{theorem}[Weighting efficiency]
\label{thm:gain}
Assume (i) the residuals follow~\cref{eq:obsmodel} with $\bm b_k=\bm 0$, independently across observations, (ii) the weights are fixed and $\tilde{\bm\Sigma}^r_k\succ\bm 0$ on the support of $\bm\Sigma^r_k$, and (iii) $\rank(\bm H)=6$. Then:\\
(a) the covariance reduction of the correct weights is
\begin{equation}
\bm V_{\mathrm u}-\bm H^{-1}=\operatorname{Cov}\bigl(\delta\hat{\bm\xi}_{\mathrm u}-\delta\hat{\bm\xi}\bigr)\succeq\bm 0,
\label{eq:gain}
\end{equation}
with equality if and only if $\rangeop(\bm\Sigma\bm J)\subseteq\rangeop(\bm J)$;\\
(b) if $\lambda(\bm\Sigma^r_k,\tilde{\bm\Sigma}^r_k)\in[\mu,\omega\mu]$ for all $k$, then
\begin{equation}
\bm H^{-1}\preceq\bm V(\tilde{\bm\Sigma})\preceq\frac{(1+\omega)^2}{4\omega}\,\bm H^{-1},
\label{eq:kantorovich}
\end{equation}
and for a common factor $\tilde{\bm\Sigma}^r_k=c\,\bm\Sigma^r_k$ the estimate coincides with the correctly weighted one, and only the reported covariance $\tilde{\bm H}^{-1}=c\,\bm H^{-1}$ is scaled by $c$;\\
(c) for Gaussian noise and every unit direction $\bm v$,
\begin{equation}
\begin{gathered}
\Pr\bigl(\lvert\bm v^\top\delta\hat{\bm\xi}\rvert<\lvert\bm v^\top\delta\hat{\bm\xi}_{\mathrm u}\rvert\bigr)=\frac12+\frac1\pi\arctan\frac{\sqrt{\vartheta_{\bm v}-1}}{2},\\
\vartheta_{\bm v}=\frac{\bm v^\top\bm V_{\mathrm u}\bm v}{\bm v^\top\bm H^{-1}\bm v}.
\end{gathered}
\label{eq:winprob}
\end{equation}
Here $\bm\Sigma$ and $\bm J$ stack the covariances and Jacobians of all observations, and $\lambda(\cdot,\cdot)$ denotes the generalized eigenvalues on the support.
\end{theorem}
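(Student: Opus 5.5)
Throughout, I work with the stacked linear model $\bm r=\bm J\delta\bm\xi+\bm\varepsilon$, $\bm\varepsilon\sim\mathcal N(\bm0,\bm\Sigma)$, where $\bm\Sigma$ is block diagonal by assumption (i). A linear estimator with fixed weights $\tilde{\bm W}=\tilde{\bm\Sigma}^{-1}$ (a pseudo-inverse on the support) reads $\delta\hat{\bm\xi}_{\tilde W}=-\tilde{\bm H}^{-1}\bm J^\top\tilde{\bm W}\bm r$. It is unbiased because $\tilde{\bm H}^{-1}\bm J^\top\tilde{\bm W}\bm J=\bm I$. Its covariance is exactly the sandwich \cref{eq:sandwich}.

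\textbf{Part (a).} I follow the Gauss--Markov argument. Write $\bm G=\bm H^{-1}\bm J^\top\bm\Sigma^{-1}$ and $\bm G_{\mathrm u}=(\bm J^\top\bm J)^{-1}\bm J^\top$. Both satisfy $\bm G\bm J=\bm G_{\mathrm u}\bm J=\bm I$, so the difference of the errors is $\bm D\bm\varepsilon$ with $\bm D=\bm G_{\mathrm u}-\bm G$ and $\bm D\bm J=\bm 0$. The cross-covariance is then
\[
\bm D\bm\Sigma\bm G^\top=\bm D\bm J\bm H^{-1}=\bm 0,
\]
so the errors decompose orthogonally: $\bm V_{\mathrm u}=\bm H^{-1}+\operatorname{Cov}(\bm D\bm\varepsilon)$. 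This gives \cref{eq:gain}. For the equality case, note that $\operatorname{Cov}(\bm D\bm\varepsilon)=\bm 0$ if and only if $\bm D\bm\Sigma=\bm 0$, that is, $\bm G_{\mathrm u}$ coincides with $\bm G$ on the support. This in turn holds if and only if $\bm J^\top\bm\Sigma^{-1}$ and $\bm J^\top$ have the same row space after right-multiplication by $\bm\Sigma$, i.e.\ $\rangeop(\bm\Sigma\bm J)\subseteq\rangeop(\bm J)$. This is Kruskal's / Zyskind's condition. I would verify both directions by writing $\bm\Sigma\bm J=\bm J\bm M$ and substituting.

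\textbf{Part (b).} The lower bound follows from the same orthogonality argument with $\bm G_{\mathrm u}$ replaced by $\tilde{\bm G}$. For the upper bound, I plan to whiten by $\bm\Sigma^{-1/2}$ blockwise. Set $\bm X=\bm\Sigma^{-1/2}\bm J$ and $\bm P=\bm\Sigma^{1/2}\tilde{\bm W}\bm\Sigma^{1/2}$. Then
\[
\bm V=(\bm X^\top\bm P\bm X)^{-1}\bm X^\top\bm P^2\bm X(\bm X^\top\bm P\bm X)^{-1},\qquad \bm H^{-1}=(\bm X^\top\bm X)^{-1}.
\]
The eigenvalues of $\bm P$ are the reciprocal generalized eigenvalues and therefore lie in $[1/(\omega\mu),1/\mu]$. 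The required bound is the matrix Kantorovich inequality (Marshall--Olkin / Bloomfield--Watson form):
\[
\bm X^\top\bm P^2\bm X\preceq\tfrac{(1+\omega)^2}{4\omega}\,(\bm X^\top\bm P\bm X)(\bm X^\top\bm X)^{-1}(\bm X^\top\bm P\bm X).
\]
I would cite this inequality, or prove it from the scalar fact $(\bm P-m\bm I)(\bm P-M\bm I)\preceq\bm0$. That fact gives $\bm X^\top\bm P^2\bm X\preceq(m+M)\bm X^\top\bm P\bm X-mM\bm X^\top\bm X$. Completing the square in the congruence by $\bm A=\bm X^\top\bm P\bm X$ then yields the constant $(m+M)^2/(4mM)$.

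I expect this completing-the-square step to be the main technical obstacle. The block-diagonal structure allows per-block whitening, so the eigenvalue bounds transfer to the full $\bm P$. The common-factor case is immediate: $c$ cancels in $\tilde{\bm G}$, and $\tilde{\bm H}^{-1}=c\bm H^{-1}$.

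\textbf{Part (c).} Project onto a unit direction $\bm v$: let $x=\bm v^\top\delta\hat{\bm\xi}$ and $y=\bm v^\top\delta\hat{\bm\xi}_{\mathrm u}=x+z$. By part (a), $x$ and $z$ are independent zero-mean Gaussians with variances $a=\bm v^\top\bm H^{-1}\bm v$ and $(\vartheta_{\bm v}-1)a$. The event $|x|<|x+z|$ is equivalent to $z(2x+z)>0$. After normalising, $x=\sqrt a\,u$ and $z=\sqrt{(\vartheta_{\bm v}-1)a}\,w$ with $u,w$ i.i.d.\ standard normal. The pair $(u,w)$ is rotationally symmetric, so the probability is the angular measure of the wedge between the lines $w=0$ and $2u+\kappa w=0$, where $\kappa=\sqrt{\vartheta_{\bm v}-1}$. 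Computing this angle gives $\tfrac12+\tfrac1\pi\arctan(\kappa/2)$. The degenerate case $\vartheta_{\bm v}=1$ gives $z=0$ almost surely; I would treat it separately as the limit value $1/2$ of the convention.
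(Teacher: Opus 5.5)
Your proposal is correct and follows the paper's proof essentially step for step: the uncorrelated-difference decomposition for (a) and for the lower bound of (b), the Zyskind--Rao condition $\bm\Sigma\bm J=\bm J\bm M$ for equality, the matrix Kantorovich inequality for the upper bound, and the double-wedge angle for (c). The one real difference is that you whiten by $\bm\Sigma^{-1/2}$ instead of $\tilde{\bm\Sigma}^{-1/2}$ and derive the Kantorovich bound yourself from $(\bm P-m\bm I)(\bm P-M\bm I)\preceq\bm 0$ by completing the square, where the paper cites Marshall--Olkin.

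Two small corrections. First, the eigenvalues of your $\bm P=\bm\Sigma^{1/2}\tilde{\bm\Sigma}^{-1}\bm\Sigma^{1/2}$ are the generalized eigenvalues themselves, not their reciprocals, since $\bm P$ is similar to $\tilde{\bm\Sigma}^{-1}\bm\Sigma$. This is harmless because $(m+M)^2/(4mM)$ is unchanged under $[m,M]\mapsto[1/M,1/m]$. Second, the step $\bm D\bm\Sigma\bm G^\top=\bm D\bm J\bm H^{-1}$ with a singular $\bm\Sigma$ requires $\rangeop(\bm J)$ to lie in the support of $\bm\Sigma$. The paper secures this by noting that the four-hole Jacobian has no component along the removed saddle mode, which also leaves the unweighted estimator unchanged by the restriction.
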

\noindent The proof is given in \cref{app:precision}. Parts (a) and (b) adapt the efficiency of generalized least squares~\citep{aitken1935least} and the matrix Kantorovich inequality~\citep{marshall1990matrix} to the structured factor. The covariance reduction of part (a) is the scatter of the unweighted estimate about the weighted one. It vanishes for homogeneous observations, e.g., $\bm\Sigma^r_k=\sigma^2\bm I$ with a common $\sigma^2$, and grows with the spread of the covariances across observations and directions. In part (b), a shape error, unlike a common factor, alters the relative weights and the directional analysis of \cref{sec:method:obs}, and loses at most the factor $(1+\omega)^2/(4\omega)$ of variance, which is $12.5\%$ for $\omega=2$. The lower bound is attained exactly when the weighted estimator is the best linear unbiased one~\citep{zyskind1967canonical,rao1967least}, which includes a covariance error confined to the internal subspace of \cref{eq:structured_split}. Omitting the cross-hole blocks of the camera covariance violates this condition (\cref{subsec:expcov}). For the unweighted solver, $\omega$ is the condition number of $\bm\Sigma$ on its support. Hence, a derived shape with ratio $\tilde\omega$ improves on the unweighted solver along $\bm v$ whenever $\vartheta_{\bm v}>(1+\tilde\omega)^2/(4\tilde\omega)$. Part (c) concerns a single calibration. At $\vartheta_{\bm v}=2$ the weighted estimate is the closer one in $65\%$ of the calibrations, and \cref{subsec:expcov} therefore evaluates the improvement over repeated trials and splits.

\cref{thm:gain} thus separates the two parts of the covariance. The shape determines the estimate but cannot be estimated from the single residual of each observation, and is therefore derived from the measurement process of each sensor (\cref{subsec:camcov,subsec:lidarcov}). The scale is one parameter of the batch, estimated from its residuals (\cref{subsec:scaleest}).

In addition, an added observation consistent with the model never decreases the precision of the correctly weighted estimate, since $(\bm H+\bm H_{K+1})^{-1}\preceq\bm H^{-1}$. No such property holds for the unweighted solver, since the unweighted mean of $K$ scalar measurements loses precision when a measurement whose variance exceeds $(2+1/K)$ times their mean variance is added (\cref{app:precision}). Additional observations therefore cannot degrade the weighted estimate, provided that they are consistent with the model, and the weakest-direction score of \cref{subsec:guidance} presupposes this property.
\begin{remark}[Faults on precise observations]
\label{rem:scope}
A fault $\bm b_k$ displaces the weighted estimate by $-\bm H^{-1}\bm J_k^\top(\bm\Sigma^r_k)^{-1}\bm b_k$ to first order, which grows with the weight of the observation, so that a fault on a precise observation can make the weighted estimate less accurate than the unweighted one. The same covariance determines the thresholds of \cref{sec:method:tests}, which compare each deviation with the noise of its own observation, and the weighting and the consistency assessment are therefore applied together.
\end{remark}

\subsection{Shared-Pose Camera Covariance}
\label{subsec:camcov}
\begin{figure}[!t]\centering\includegraphics[width=\columnwidth]{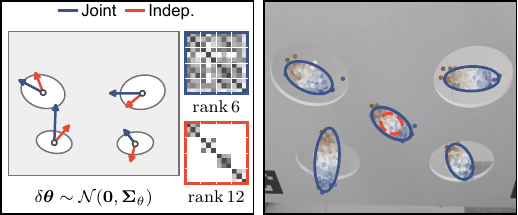}
\caption{Shared-pose camera covariance. \textbf{Left}: $3\sigma$ hole displacements and correlations of the shared-pose (Joint) and independent-hole (Indep.) models. \textbf{Right}: hole centers of PnP solutions from noise-perturbed marker corners on a head-pair observation ($\times60$), with $3\sigma$ ellipses of each hole and of their centroid (dashed: Indep.).}
\label{fig:covcamera}\end{figure}
Uncertainty-aware calibration models the camera features as independent, although all camera features on one board are computed from one board pose. We therefore propagate the covariance of this pose to the four hole centers, which are reconstructed from a single estimated board pose $\bm\theta_k=(\bm\phi_k,\bm c_k)$ returned by PnP, through the rigid map $\bm p^{\mathtt{C}}_{kj}=\bm c_k+\bm Q_k(\bm\phi_k)\bm a_j$. A first-order perturbation of the pose, $\delta\bm\theta=(\delta\bm\phi,\delta\bm c)$, displaces hole $j$ by
\begin{equation}
\delta\bm p^{\mathtt{C}}_{kj}=\delta\bm c+\skewop{\delta\bm\phi}\,(\bm Q_k\bm a_j)
=\underbrace{[-\skewop{\bm Q_k\bm a_j}\;\;\bm I_3]}_{\bm F_{kj}}\,\delta\bm\theta,
\label{eq:camjac}
\end{equation}
so these six pose errors displace all four holes through the lever arms $\bm Q_k\bm a_j$. Stacking $\bm F_k=[\bm F_{k1};\dots;\bm F_{k4}]\in\R^{12\times6}$ and propagating the $6\times6$ PnP covariance $\bm\Sigma_{\theta,k}$ yields the camera block
\begin{equation}
\bm\Sigma^{\mathtt{C}}_k=\bm F_k\,\bm\Sigma_{\theta,k}\,\bm F_k^\top,\qquad\rank(\bm\Sigma^{\mathtt{C}}_k)\le6 .
\label{eq:camcov}
\end{equation}
The pose covariance itself is obtained from the image measurement model. If $\bm u_{ki}$ are the detected fiducial or corner pixels, $\bm h_i(\bm\theta_k)$ their perspective projection, $\bm G_k=\partial\bm h/\partial\bm\theta$ the PnP Jacobian, and $\bm\Sigma^u_k$ the pixel covariance, then the Gauss-Newton covariance is
\begin{equation}
\bm\Sigma_{\theta,k}=\bigl(\bm G_k^\top(\bm\Sigma^u_k)^{-1}\bm G_k\bigr)^{-1}.
\label{eq:pnpcov}
\end{equation}
In all experiments we set the pixel covariance from a nominal marker-corner noise, and its absolute magnitude is calibrated by the global scale of \cref{subsec:scaleest}. \cref{eq:camcov} then yields the cross-hole covariance $[\bm\Sigma^{\mathtt{C}}_k]_{ij}=\bm F_{ki}\bm\Sigma_{\theta,k}\bm F_{kj}^\top$, which is generally nonzero for every pair $(i,j)$. Twelve coordinates therefore have only six independent error degrees of freedom, and a single pose perturbation displaces all four holes by the same rigid transformation, so that the perturbed PnP solutions of the four holes vary together and the scatter of their centroid exceeds its independent-hole ellipse (\cref{fig:covcamera}). Since a perturbation of the extrinsic also displaces the four centers rigidly, the camera error of an observation lies entirely in the six-dimensional absorbed subspace $\rangeop(\bar{\bm J}_k)$ of \cref{prop:residual}. It displaces the estimate without appearing in the internal residual, and only a comparison with the other observations reveals it. The common alternative, retaining only the four diagonal $3\times3$ blocks of~\cref{eq:camcov}, discards the off-diagonal lever-arm correlations and thereby counts the PnP information once for each hole.

\subsection{Shared-Plane LiDAR Covariance}
\label{subsec:lidarcov}
A LiDAR hole center is fitted from few boundary points on a partial arc, and the four centers of an observation lie on one fitted board plane. We therefore propagate the edge-point noise through a radius-regularized circle fit and through the shared plane. Edge-point noise on the board plane follows the two-parameter range-incidence model
\begin{equation}
\sigma^{2}_{L}(z,\eta)=\sigma^{2}_{L,0}+\sigma^{2}_{L,1}\,z^{2}/\cos^{2}\eta ,
\label{eq:lidarnoise}
\end{equation}
in which the ranging variance grows with distance $z$ and grazing incidence $\eta$. The noise floor $\sigma_{L,0}$ and the dimensionless range-growth coefficient $\sigma_{L,1}$ are fixed from the sensor datasheet and a one-time edge-residual calibration (\cref{sec:results:setup}). A beam covariance for each LiDAR return, as adopted in LiDAR odometry back ends~\citep{yuan2022efficient,zheng2024fast}, yields similar conclusions (\cref{app:lidarnoise}).
We fit a hole center to the $N$ extracted boundary points $\{\bm b_l\}$. On a short boundary arc the free three-parameter circle fit is ill-conditioned, since a radial displacement of the center is correlated with a variation of the radius~\citep{hu2026p2calib}. The effective radius is known from target design or a separate sensor-class calibration. It is therefore incorporated as a prior $\varrho\sim\mathcal N(r,\sigma_r^2)$ on the joint center-radius fit of each hole, which regularizes the fitted center and the conditional covariance of that hole under the nominal radius. With radial unit vectors $\hat{\bm u}_l=(\bm b_l-\bm c)/\lVert\bm b_l-\bm c\rVert$, the signed distance of $\bm b_l$ to the circle is $\hat{\bm u}_l^\top(\bm b_l-\bm c)-\varrho$, whose gradient in $(\bm c,\varrho)$ is $[-\hat{\bm u}_l^\top\;\;{-}1]$. As an accumulated cloud re-observes the same physical edge, the $N$ boundary points are correlated and provide the independent information of only
\begin{equation}
N_{\mathrm{eff}}=\frac{N}{1+(N-1)\,\bar\rho}
\label{eq:neff}
\end{equation}
effective points, with a fixed equicorrelation $\bar\rho$ (\cref{sec:results:setup}). The center and radius are estimated jointly by minimizing
\begin{equation}
\frac{N_{\mathrm{eff}}}{N\sigma_L^2}\sum_l\bigl(\lVert\bm b_l-\bm c\rVert-\varrho\bigr)^2
+\frac{(\varrho-r)^2}{\sigma_r^2},
\label{eq:circlemap}
\end{equation}
so that the prior is weighted against the same likelihood that defines the covariance, and the radial unit vectors are evaluated at the fitted center. The joint $(\bm c,\varrho)$ information of~\cref{eq:circlemap} is
\begin{equation}
\bm I_{(\bm c,\varrho)}=\frac{N_{\mathrm{eff}}}{N\sigma_L^{2}}\begin{bmatrix}\sum_l\hat{\bm u}_l\hat{\bm u}_l^\top & \sum_l\hat{\bm u}_l\\[2pt] \sum_l\hat{\bm u}_l^\top & N\end{bmatrix}+\begin{bmatrix}\bm0&\bm0\\\bm0&\sigma_r^{-2}\end{bmatrix},
\label{eq:circinfo}
\end{equation}
whose radius entry is $N_{\mathrm{eff}}/\sigma_L^2+\sigma_r^{-2}$. Marginalizing the nuisance radius $\varrho$ by the Schur complement leaves the in-plane center covariance
\begin{equation}
\left\{
\begin{aligned}
&\bm\Sigma^{\mathtt{L},\mathrm{pl}}_{kj}=\frac{N}{N_{\mathrm{eff}}}\,\sigma_L^{2}\Bigl(\textstyle\sum_l\hat{\bm u}_l\hat{\bm u}_l^\top\\
&\qquad\qquad-\alpha_{r,kj}\,\textstyle(\sum_l\hat{\bm u}_l)(\sum_l\hat{\bm u}_l)^\top\Bigr)^{-1},\\
&\alpha_{r,kj}=\frac{N_{\mathrm{eff}}}{N\bigl(N_{\mathrm{eff}}+\sigma_L^{2}\sigma_r^{-2}\bigr)},
\end{aligned}
\right.
\label{eq:radiuscov}
\end{equation}
which is rotated to $\calL$ and completed by the out-of-plane error. The four hole centers lie on the single board plane fitted to all $N_k$ boundary points of the observation. We therefore model their out-of-plane error as the error of that plane, namely one offset and two tilts shared by the four centers. Let $\bm U_k=[\bm t_{k1}\;\bm t_{k2}]$ be an orthonormal basis of the fitted board plane, $\bm n_k$ its normal, and $(u,v)$ the in-plane coordinates. With the design rows $[1\;u_l\;v_l]$ of the boundary points stacked into $\bm A_k$, those of the four centers stacked into $\bm B_k\in\R^{4\times3}$, and the point scatter $s^2_{\perp,k}$ about the plane, the plane-parameter covariance and the LiDAR covariance in~\cref{eq:residcov} are
\begin{equation}
\left\{
\begin{aligned}
&\bm\Sigma^{\pi}_k=\frac{N_k}{N_{\mathrm{eff},k}}\,s^2_{\perp,k}\,(\bm A_k^\top\bm A_k)^{-1},\\
&\bm\Sigma^{\mathtt{L}}_{k}=\operatorname{blkdiag}_j\bigl(\bm U_k\bm\Sigma^{\mathtt{L},\mathrm{pl}}_{kj}\bm U_k^\top\bigr)
+\bigl(\bm B_k\bm\Sigma^{\pi}_k\bm B_k^\top\bigr)\otimes\bm n_k\bm n_k^\top ,
\end{aligned}
\right.
\label{eq:lidar3d}
\end{equation}
where the propagation of the point noise to the plane parameters follows the plane uncertainty model of LiDAR mapping~\citep{yuan2022efficient}. The first term has rank eight and the second rank three, so that $\rank(\bm\Sigma^{\mathtt{L}}_k)=11$. Of the eight in-plane coordinates, three describe a rigid motion within the board plane and five deform the hole layout, and the three plane parameters move the four centers rigidly out of the plane. The extrinsic absorbs the six dimensions of a rigid motion, and only the five dimensions of a layout deformation reach the internal residual of \cref{prop:residual}. The twelfth direction is the saddle mode, in which the four centers leave their common plane with alternating signs. Since the LiDAR centers lie on the fitted plane and the camera centers on the estimated board, no measurement exists along this mode, and it is removed from the residual instead of being assigned a variance (\cref{subsec:residcov}). An independent normal variance for each hole would assign a measurement to the saddle mode. \cref{fig:covlidar} compares the free and the regularized fit. On the synthetic short arc (left), the free-radius covariance extends along the radial axis and the radius prior contracts this axis, whereas on the partial-arc hole of the head pair (right) the prior covariance is the larger one, since it counts only $N_{\mathrm{eff}}$ effective points, whereas the radius prior still contracts the radial axis. The two limits of the prior standard deviation and the conservative margin of \cref{eq:radiuscov} along the radial axis are given in \cref{app:radiusprior}.

\begin{figure}[!t]\centering\includegraphics[width=\columnwidth]{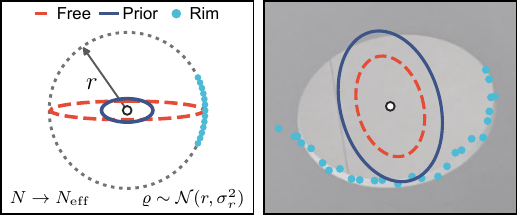}
\caption{Short-arc LiDAR center covariance ($3\sigma$) of the free-radius fit (Free) and of the radius prior with $N_{\mathrm{eff}}$ points (Prior); Rim: boundary points. \textbf{Left}: synthetic $\pm25^\circ$ arc. \textbf{Right}: partial-arc hole of a head-pair observation ($\times12$).}
\label{fig:covlidar}\end{figure}

The free-fit radius remains available to monitor the nominal radius, since the prior only shrinks a short-arc center toward the nominal-radius solution. As the effective radius of a sensor class is one quantity shared by all holes, the batch-level diagnostic averages the free-fit radii $\hat\varrho_{kj}$, with marginal variances $\sigma^2_{\hat\varrho,kj}$, over the $n_h$ holes of the accepted set,
\begin{equation}
z^{r}=\frac{\bar{\hat\varrho}-r}{\bigl(\sigma_r^2+\overline{\sigma^2_{\hat\varrho}}/n_h\bigr)^{1/2}},
\label{eq:radiuspool}
\end{equation}
referred to the standard normal distribution at level $\alpha$. The prior standard deviation $\sigma_r$ in the denominator is a tolerance, and a class-level deviation below it is a systematic error that the covariance does not represent. Its induced center displacement is reported as a bias budget outside $\bm H^{-1}$, and the diagnostic of each hole is given in \cref{app:radius}. The gain in point-estimate accuracy from this prior on partial arcs is given in \cref{app:shortarc} and by \citet{hu2026p2calib}.

\subsection{Joint Residual Covariance Estimation}
\label{subsec:residcov}

With independent modalities, the joint residual covariance of each observation combines the camera block of \cref{subsec:camcov} and the LiDAR block of \cref{subsec:lidarcov},
\begin{equation}
\bm\Sigma^{r}_{0,k}=\bm\Sigma^{\mathtt{C}}_k+(\bm I_4\!\otimes\!\bm R)\,\bm\Sigma^{\mathtt{L}}_k\,(\bm I_4\!\otimes\!\bm R)^\top,\qquad
\bm\Sigma^{r}_k=s\,\bm\Sigma^{r}_{0,k}.
\label{eq:residcov}
\end{equation}
The combined covariance is projected onto the complement of the saddle mode of \cref{subsec:lidarcov} and has rank $d_k=11$. For the four-hole target and for rigid checkerboard corners alike, all inverse and whitening operations below are restricted to the supported subspace of \cref{eq:supportwhitening}.

\subsection{Two-Stage Scale Estimation}
\label{subsec:scaleest}
The two blocks fix the covariance shape, whereas the noise magnitude $s$ is rarely known, since the pixel noise of the marker corners and the edge noise of the LiDAR depend on the detector, the lighting, the range, and the motion. By \cref{thm:gain}, $s$ leaves the estimate invariant, but it enters every reported quantity,
\begin{equation}
\begin{aligned}
\operatorname{Cov}(\delta\hat{\bm\xi})&=s\,\bm H_0^{-1},\qquad \MDB(\bm v)\propto\sqrt s,\\
\mathbb E\bigl[\bm r_k^\top(\bm\Sigma^{r}_{0,k})^{-1}\bm r_k\bigr]&=s\,d_k,
\end{aligned}
\label{eq:scaleeffect}
\end{equation}
with $\bm H_0=\sum_{k\in\calS}\bm J_k^\top(\bm\Sigma^r_{0,k})^{-1}\bm J_k$ and $\bm r_k$ the residual of a fault-free observation at the true extrinsic, whose fitted counterpart has the mean $s(d_k-h_k)$ (\cref{eq:deflation}). A nominal value $s=1$ therefore understates the reported covariance and the detection limits whenever the actual noise exceeds the nominal magnitude, and a threshold fixed under this magnitude falsely detects fault-free observations, as the fixed-threshold baselines show at twice the nominal noise (\cref{subsec:expcons}). Calibration methods that report a covariance usually fix this magnitude a priori~\citep{brookshire2012extrinsic}, and the a posteriori variance factor of geodetic adjustment uses all residual dimensions~\citep{koch1999parameter,teunissen2008least}. We instead estimate $s$ from the absorbed components alone, since by \cref{prop:residual} only these components enter the estimate. With $\bm L_k$ and the orthonormal basis $\bm V_k$ of $\rangeop(\bar{\bm J}_k)$ computed from the shape $\bm\Sigma^r_{0,k}$ instead of $\bm\Sigma^r_k$, and $\bm e^{\mathrm{ab}}_k=\bm V_k^\top\bm L_k\hat{\bm r}_k$ the absorbed component of the fitted residual, the estimate for a fixed accepted set is
\begin{equation}
\hat s=\frac{1}{d_{\mathrm{ab}}}\sum_{k\in\calS}\lVert\bm e^{\mathrm{ab}}_k\rVert^2,\qquad
d_{\mathrm{ab}}=\textstyle\sum_{k\in\calS}\rank(\bar{\bm J}_k)-6,
\label{eq:scale}
\end{equation}
where the six extrinsic parameters remove six absorbed dimensions from the total.
\begin{proposition}[Scale of the estimate]
\label{prop:scale}
Let the whitened residual $\bar{\bm r}_k$ of each accepted observation have the covariance $s\,\bm I$ on $\rangeop(\bar{\bm J}_k)$ and any positive-definite covariance on $\nullop(\bar{\bm J}_k^\top)$, with the two components uncorrelated and the observations independent. Then, to first order, the weighted estimate of~\cref{eq:problem} is the best linear unbiased estimate, its covariance is $s\,\bm H_0^{-1}$ with $\bm H_0$ of \cref{eq:scaleeffect} summed over $\calS$, and $\mathbb E[\hat s]=s$.
\end{proposition}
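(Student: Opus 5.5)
The plan is to work entirely in the whitened coordinates of each observation and to split the whitened residual along the decomposition \cref{eq:structured_split}. Write $\bar{\bm r}_k=\bm V_k\bm a_k+\bm N_k\bm c_k$, where $\bm a_k=\bm V_k^\top\bar{\bm r}_k$ and $\bm c_k=\bm N_k^\top\bar{\bm r}_k$. By hypothesis, $\operatorname{Cov}(\bm a_k)=s\bm I_{r_k}$ with $r_k=\rank(\bar{\bm J}_k)$, the vector $\bm c_k$ has an arbitrary positive-definite covariance, and $\bm a_k$ and $\bm c_k$ are uncorrelated.

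\emph{The estimate depends only on the absorbed components.} Linearized at the true extrinsic, \cref{eq:estcov} gives $\delta\hat{\bm\xi}=-\bm H_0^{-1}\sum_k\bar{\bm J}_k^\top\bar{\bm r}_k$. Since $\bar{\bm J}_k^\top\bm N_k=\bm0$, this reduces to $-\bm H_0^{-1}\sum_k\bar{\bm J}_k^\top\bm V_k\bm a_k$. This is the residual-level argument of \cref{prop:residual} applied to noise instead of faults.

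\emph{Covariance and BLUE.} Using independence across observations,
\[
\operatorname{Cov}(\delta\hat{\bm\xi})=\bm H_0^{-1}\Bigl(\sum_k\bar{\bm J}_k^\top\bm V_k(s\bm I)\bm V_k^\top\bar{\bm J}_k\Bigr)\bm H_0^{-1}=s\,\bm H_0^{-1},
\]
because $\bm V_k\bm V_k^\top\bar{\bm J}_k=\bar{\bm J}_k$. For the BLUE claim I would invoke the Zyskind/Rao condition already cited after \cref{thm:gain}. In whitened coordinates the true covariance $\bar{\bm\Sigma}_k$ maps $\rangeop(\bar{\bm J}_k)$ into itself, since it equals $s\bm I$ there and has no cross term with the null part. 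Hence $\rangeop(\bar{\bm\Sigma}\bar{\bm J})\subseteq\rangeop(\bar{\bm J})$, which is the equality case of \cref{thm:gain}(a). The unit-weighted whitened estimator, which is \cref{eq:problem} with shape $\bm\Sigma^r_{0,k}$, therefore has the same covariance as the GLS estimator under the true covariance, and so it is BLUE. The fact that \cref{thm:gain}(a) is phrased with $\bm V_{\mathrm u}$ only requires replacing the identity weighting by the shape weighting in the same argument.

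\emph{Unbiasedness of $\hat s$.} The fitted residual is $\hat{\bar{\bm r}}_k=\bar{\bm r}_k+\bar{\bm J}_k\delta\hat{\bm\xi}$. Its absorbed part is $\bm e^{\mathrm{ab}}_k=\bm a_k+\bm V_k^\top\bar{\bm J}_k\delta\hat{\bm\xi}$, and its null part is unaffected. Stack $\bm a=[\bm a_k]$ and $\bm M=\operatorname{blkdiag}(\bm V_k^\top\bar{\bm J}_k)$, so that $\bm M^\top\bm M=\bm H_0$ and $\delta\hat{\bm\xi}=-\bm H_0^{-1}\bm M^\top\bm a$. Then
\[
\bm e^{\mathrm{ab}}=(\bm I-\bm M\bm H_0^{-1}\bm M^\top)\,\bm a=\bm P\bm a,
\]
where $\bm P$ is an orthogonal projector of rank $\sum_k r_k-6$, using $\rank(\bm H_0)=6$. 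It follows that
\[
\mathbb E\lVert\bm e^{\mathrm{ab}}\rVert^2=\tr\bigl(\bm P\,(s\bm I)\bigr)=s\,d_{\mathrm{ab}},
\]
which gives $\mathbb E[\hat s]=s$. The null components never enter this expression, so their unknown covariance is irrelevant.

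\emph{Main obstacle.} The difficult step is making the linearization and the notion of ``fitted residual'' precise. Two things need care. First, the bases $\bm V_k$ and $\bm L_k$ are evaluated at the estimate rather than at the truth; I would argue that this changes $\hat s$ only at second order. Second, the whitening lives on the supported 11-dimensional subspace, with the saddle mode removed, so every identity above must be stated on that support. A further point is that \cref{eq:problem} weights by $s\bm\Sigma^r_{0,k}$, whereas the proposition only fixes the covariance on the absorbed subspace. The estimate is nevertheless well defined because the common factor $s$ cancels, by \cref{thm:gain}(b).
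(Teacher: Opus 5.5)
Your proof is correct and follows the paper's argument. The estimate depends only on the absorbed components, its covariance is $s\bm H_0^{-1}$ because $\bm V_k\bm V_k^\top\bar{\bm J}_k=\bar{\bm J}_k$, and $\hat s$ is the residual sum of squares of the stacked absorbed linear model with a rank-six design, while the BLUE step differs only cosmetically: you invoke Zyskind's range condition, whereas the paper checks directly that $\bm W_k=s^{-1}\bm P_k+\bm N_k\bm\Omega_k^{-1}\bm N_k^\top$ satisfies $\bar{\bm J}_k^\top\bm W_k=s^{-1}\bar{\bm J}_k^\top$, so the GLS normal equations coincide.
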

\noindent The proof is given in \cref{app:proofs}. Consequently, the scale of the internal components enters neither the estimate nor its covariance. A scale combined over all residual dimensions is correct only if both subspaces share one scale, which the batch test of \cref{subsec:constests} examines. The scale of~\cref{eq:scale} remains correct when they differ.
As the accepted set is itself an outcome of the consistency assessment, the scale is estimated in two stages. With the normalized innovation squared (NIS) $\NIS_k=\hat{\bm r}_k^\top(\bm\Sigma^{r}_{0,k})^{-1}\hat{\bm r}_k$ of the fitted residual under the shape, a first, median-type estimate on all observations provides the working scale of the consensus stage of \cref{sec:method:tests}; it uses the full NIS, since the median requires one statistic per observation, and it serves only to scale the consensus threshold. With the leverage $h_k$ of \cref{eq:deflation}, $\NIS_k/(d_k-h_k)$ has null mean $s$ and null median approximately $s\,c_k$, with $c_k=\operatorname{med}(\chi^2_{d_k-h_k})/(d_k-h_k)$, and
\begin{equation}
\hat s_1=\frac{\operatorname{med}\{\NIS_k/(d_k-h_k)\}}{\operatorname{med}\{c_k\}}
\label{eq:scale1}
\end{equation}
is approximately unbiased under the null and less inflated by gross faults than the mean-type estimate. Once $\calS$ is fixed, $\hat s$ is re-estimated on $\calS$ alone by \cref{eq:scale}, so that an observation rejected as inconsistent does not inflate the scale on which the reported precision, the detection limits, and the calibration state depend.
The scale $\hat s$ thus corrects a proportional variance mismatch (\cref{fig:covscale}). In \cref{fig:covscale} (left), a single scale aligns the joint covariance with the ground truth, whereas no common scale aligns the independent-hole covariance, whose shape differs from the true one. A range-dependent shape error is monitored by the trend test of \cref{subsec:constests}. The weighted estimate of \cref{eq:problem} on the accepted set and its covariance $\bm H^{-1}$ then follow from \cref{eq:localinfo,eq:estcov}, and the estimate reduces to the unweighted solver~\cref{eq:baseline} for $\bm\Sigma^{r}_k\propto\bm I$. If the factor model is correct, $\bm H^{-1}$ predicts the estimator scatter, and the tests of \cref{sec:method:tests} decide whether it is reported as a verified uncertainty (\cref{fig:covscale}, right).\label{subsec:wls}

\begin{figure}[!t]\centering\includegraphics[width=\columnwidth]{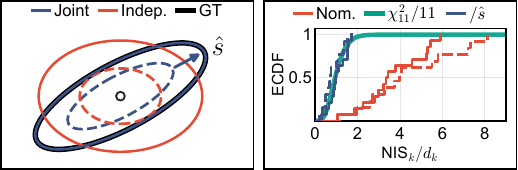}
\caption{Covariance shape and scale. \textbf{Left}: nominal (dashed) and $\hat s$-scaled (solid) covariances of the joint and independent-hole models against the ground truth (GT). \textbf{Right}: ECDF of $\mathrm{NIS}_k/d_k$ before (Nom.) and after ($/\hat s$) scale estimation (solid: head; dashed: chassis).}
\label{fig:covscale}\end{figure}

\subsection{Checkerboard-Target Instantiation}
\label{sec:checkerboard}
For a standard checkerboard, only the target-specific factor construction differs, and the estimation, the directional analysis, and the tests are reused. The LiDAR corners are those of one rigid rectangle of the known outer size fitted to the scanned boundary, so that, as for the soft radius prior, a known-geometry prior regularizes a weakly constrained LiDAR feature fit. The four LiDAR corners and the four camera corners are therefore both generated by six pose parameters, and the residual covariance of a checkerboard observation has rank $d_k=6$ instead of $11$ (\cref{app:cornercov}). A non-degenerate rigid board thus leaves no within-observation degrees of freedom, and the batch consistency criterion relies on the cross-observation test of \cref{eq:crosssubset}. The closed-form directional expressions~\crefrange{eq:single_rot}{eq:multi} are specific to the four-hole target, and the checkerboard directional information is evaluated from the numerical Schur complement of~\cref{eq:lamR}.

\section{Directional Information and Acquisition Guidance}
\label{sec:method:obs}
The parameter-level compensation of \cref{prop:parameter} makes the diagonal information blocks overstate the precision, and observability analysis does not reveal which physical direction limits a calibration. In this section, we define the precision of each physical direction from the marginal information of \cref{eq:lamR}, derive its closed form for the four-hole target (\cref{subsec:closedform}), and select the next target pose along the weakest direction (\cref{subsec:guidance}).

The predicted standard deviations along a unit rotation or translation direction $\bm v$ are
\begin{equation}
\sigma_R(\bm v)=\sqrt{\bm v^\top\LamR^{-1}\bm v},\qquad
\sigma_t(\bm v)=\sqrt{\bm v^\top\LamT^{-1}\bm v},
\label{eq:dirstd}
\end{equation}
with the worst case at the smallest eigenvalue of $\LamR$ and $\LamT$, respectively. For rank-deficient information the complements are defined variationally through pseudo-inverses (\cref{app:rankdef}), and directions outside the range of the marginal information have no finite variance and are reported as unconstrained.

\subsection{Closed-Form Directional Information}
\label{subsec:closedform}
Target poses are usually chosen heuristically, without a model of how each pose contributes to each direction. For the symmetric four-hole target, the decoupled information has a closed form, which shows how each pose contributes and yields design guidelines for the target-pose geometry.
\begin{proposition}[Single-observation information]
\label{prop:single}
For a frontal on-axis board at depth $Z$ with isotropic hole-center noise $\sigma^2\bm I_3$,
\begin{gather}
\LamR=\tfrac{1}{\sigma^2}\diag\!\bigl(4b^2,\,4a^2,\,4(a^2{+}b^2)\bigr),\label{eq:single_rot}\\
\LamT=\tfrac{1}{\sigma^2}\diag\!\Bigl(\frac{4a^2}{Z^2+a^2},\,\frac{4b^2}{Z^2+b^2},\,4\Bigr).\label{eq:single_trans}
\end{gather}
\end{proposition}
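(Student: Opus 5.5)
The plan is to evaluate the blocks $\bm A$, $\bm B$, $\bm C$ of \cref{eq:localinfo} in closed form for this geometry and then apply the Schur complements of \cref{eq:lamR}. With isotropic residual covariance $\sigma^2\bm I_3$ per hole, $\bm H=\sigma^{-2}\sum_{j}\bm J_j^\top\bm J_j$. By \cref{eq:jacobian}, $\bm J_j=[-\skewop{\bm x_j}\;\;\bm I_3]$, and the frontal on-axis placement gives the camera-frame hole centers $\bm x_j=(\pm a,\pm b,Z)$. I drop the observation index $k$ since $K=1$.

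First I expand the per-hole block $\bm J_j^\top\bm J_j$:
\begin{equation*}
\bm J_j^\top\bm J_j=\begin{bmatrix}\skewop{\bm x_j}^\top\skewop{\bm x_j} & \skewop{\bm x_j}\\ \skewop{\bm x_j}^\top & \bm I_3\end{bmatrix},
\end{equation*}
using $-\skewop{\bm x}^\top=\skewop{\bm x}$. Summing over the four holes, I use the identity $\skewop{\bm x}^\top\skewop{\bm x}=\lVert\bm x\rVert^2\bm I-\bm x\bm x^\top$ together with the sign symmetry of the layout. Every cross moment $\sum_j x_{j,1}x_{j,2}$, $\sum_j x_{j,1}Z$, $\sum_j x_{j,2}Z$ vanishes, so $\sum_j\bm x_j\bm x_j^\top=\diag(4a^2,4b^2,4Z^2)$. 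This gives
\begin{equation*}
\sigma^2\bm A=\diag\bigl(4(b^2{+}Z^2),\,4(a^2{+}Z^2),\,4(a^2{+}b^2)\bigr),\qquad \sigma^2\bm B=4\skewop{\bm c},\qquad \sigma^2\bm C=4\bm I_3,
\end{equation*}
where $\bm c=(0,0,Z)$ is the board center.

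Next I form the two complements.
\begin{itemize}
\item For rotation, $\sigma^2\bm B\bm C^{-1}\bm B^\top=4\skewop{\bm c}\skewop{\bm c}^\top=4(Z^2\bm I-\bm c\bm c^\top)=\diag(4Z^2,4Z^2,0)$. Subtracting it from $\sigma^2\bm A$ cancels the depth terms exactly and yields \cref{eq:single_rot}. This cancellation is also the algebraic form of the depth invariance mentioned in the contributions.
\item For translation, I use $\skewop{\bm c}\bm e_1=Z\bm e_2$, $\skewop{\bm c}\bm e_2=-Z\bm e_1$ and $\skewop{\bm c}\bm e_3=\bm0$. Then $\sigma^2\bm B^\top\bm A^{-1}\bm B=16Z^2\,\diag\bigl([\sigma^2\bm A]_{22}^{-1},[\sigma^2\bm A]_{11}^{-1},0\bigr)$, which equals $\diag\bigl(4Z^2/(a^2{+}Z^2),\,4Z^2/(b^2{+}Z^2),\,0\bigr)$. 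Subtracting it from $4\bm I_3$ gives \cref{eq:single_trans}.
\end{itemize}

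The computation is routine, so the main risk is bookkeeping rather than insight. Three points need care. The first is the sign convention of $\bm B$, which affects no final result because $\bm B$ enters quadratically. The second is the index swap in the translation term: $\delta t_x$ couples to pitch about $y$, so it picks up $[\bm A]_{22}$, which contains $a^2$. The third is justifying that the idealized model ignores the structured covariance and the saddle-mode restriction of \cref{subsec:residcov}. I would state explicitly that the proposition assumes the full $12$-dimensional isotropic covariance, so that no projection is needed, and that $\bm x_j$ is evaluated at the true extrinsic.
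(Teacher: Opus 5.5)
Your proposal is correct and matches the paper's proof step for step. You use the same symmetry sums to obtain the same blocks $\bm A$, $\bm B=\tfrac{4}{\sigma^2}\skewop{\bm c}$, $\bm C=\tfrac{4}{\sigma^2}\bm I_3$, and the same two Schur eliminations; your explicit index-swap computation of $\bm B^\top\bm A^{-1}\bm B$ spells out the step the paper summarizes only as ``the same elimination.''
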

\begin{proof}
The holes are located at $\bm x_j=(\pm a,\pm b,Z)$ with hole Jacobian $\bm J_j=[-\skewop{\bm x_j}\;\;\bm I_3]$. Summing $\bm J_j^\top\bm J_j/\sigma^2$ and applying the symmetry sums $\sum_j\bm x_j=(0,0,4Z)$ and $\sum_j\bm x_j\bm x_j^\top=\diag(4a^2,4b^2,4Z^2)$ yields
\begin{gather*}
\bm A=\tfrac{1}{\sigma^2}\diag\!\bigl(4(Z^2{+}b^2),\,4(Z^2{+}a^2),\,4(a^2{+}b^2)\bigr),\\
\bm C=\tfrac{4}{\sigma^2}\bm I_3,\qquad
\bm B=\tfrac{1}{\sigma^2}\skewop{(0,0,4Z)}.
\end{gather*}
The coupling term $\bm B\bm C^{-1}\bm B^\top=\frac{4Z^2}{\sigma^2}\diag(1,1,0)$ then cancels the $Z^2$ entries of $\bm A$ exactly, leaving~\cref{eq:single_rot}. The same elimination applied to $\LamT$ yields~\cref{eq:single_trans}.
\end{proof}
The rotation precision of a frontal board is thus independent of its depth, whereas its lateral translation precision decreases with depth. Lateral translation information decreases as $1/(Z^2{+}a^2)$, since a distant pitch and a lateral shift become indistinguishable at large $Z$, whereas the depth component remains $Z$-invariant under constant noise. The conditional and marginal blocks can also rank the directions differently. For $Z>a>b$ the conditional block $\bm A$ of the proof is smallest about the $z$ axis, whereas $\LamR$ in~\cref{eq:single_rot} is smallest about the $x$ axis. An analysis of the diagonal block therefore reports the wrong weakest rotation. Rotation precision is determined by the hole baseline alone, and isotropy is required only for the diagonal constants in \cref{prop:single}.

The conditional block $\bm A$ grows with $Z^2$ about the two lateral axes, which explains the intuition that distant boards improve the rotation, as distant structure constrains the rotation in stereo odometry~\citep{kaess2009flow}, whereas the coupling term removes exactly this growth once the translation is estimated (\cref{fig:compensation}C). The multi-observation information decomposes according to the parallel-axis theorem,
\begin{equation}
\LamR=\tfrac{1}{\sigma^2}\Bigl[\textstyle\sum_k\bm Q_k\bm I_{\bm a}\bm Q_k^\top+m\sum_k\bigl(\lVert\bm d_k\rVert^2\bm I_3-\bm d_k\bm d_k^\top\bigr)\Bigr],
\label{eq:multi}
\end{equation}
with the hole-layout inertia $\bm I_{\bm a}$ and the centered board positions $\bm d_k$ (\cref{app:multi}).
\begin{corollary}[Standoff invariance]
\label{cor:standoff}
For residual covariances that do not depend on the board position, translating every board by a common offset leaves $\LamR$ invariant, and the rotation precision is thus determined by the orientation diversity and the relative spread of the board centers.
\end{corollary}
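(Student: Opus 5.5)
The plan is to avoid the closed form \cref{eq:multi} and prove invariance directly from the structure of the Jacobian in \cref{eq:jacobian}, using a change of parametrization. Let every board be translated by a common offset $\bm d\in\R^3$. Each correspondence point then becomes $\bm x_{kj}'=\bm x_{kj}+\bm d$. By hypothesis the residual covariances $\bm\Sigma^r_k$ do not change; the board orientations, and hence the camera lever arms $\bm Q_k\bm a_j$, are also unchanged. Since $\skewop{\cdot}$ is linear, the correspondence Jacobian transforms as
\begin{equation*}
\bm J_{kj}'=\bigl[-\skewop{\bm x_{kj}}-\skewop{\bm d}\;\;\bm I_3\bigr]=\bm J_{kj}\,\bm P,\qquad
\bm P=\begin{bmatrix}\bm I_3&\bm0\\ -\skewop{\bm d}&\bm I_3\end{bmatrix}.
\end{equation*}
The key point is that $\bm P$ is the same for all $k$ and $j$. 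Stacking gives $\bm J_k'=\bm J_k\bm P$, and therefore $\bm H'=\bm P^\top\bm H\bm P$. Here $\bm P$ is invertible and unit block lower triangular, and it corresponds to the reparametrization $\delta\bm t\mapsto\delta\bm t-\skewop{\bm d}\delta\bm\phi$, which moves the translation reference point.

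The second step uses the characterization, from \cref{subsec:localinfo}, of $\LamR$ as the inverse of the rotation block of $\bm H^{-1}$. This gives $(\bm H')^{-1}=\bm P^{-1}\bm H^{-1}\bm P^{-\top}$. The inverse $\bm P^{-1}$ is again block lower triangular with identity first block row, $\bm P^{-1}=\begin{bmatrix}\bm I_3&\bm0\\ \skewop{\bm d}&\bm I_3\end{bmatrix}$. Its top-left $3\times3$ block of the congruence is therefore $\bm E^\top\bm H^{-1}\bm E$ with $\bm E=[\bm I_3;\bm 0]$, because $\bm P^{-\top}\bm E=\bm E$. Hence $[(\bm H')^{-1}]_{\phi\phi}=[\bm H^{-1}]_{\phi\phi}$, and $\LamR'=\LamR$.

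A variational argument reaches the same conclusion and covers the rank-deficient case of \cref{app:rankdef}. Write $\bm\phi^\top\LamR\bm\phi=\min_{\bm\tau}[\bm\phi;\bm\tau]^\top\bm H[\bm\phi;\bm\tau]$. The substitution $\bm\tau\mapsto\bm\tau-\skewop{\bm d}\bm\phi$ is a bijection of the minimization variable, so the minimum is unchanged.

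Finally, I interpret the result through \cref{eq:multi}. A common offset leaves every orientation $\bm Q_k$ unchanged, and it leaves the centered positions $\bm d_k$ unchanged because the mean shifts by the same offset. So only the orientation diversity and the relative spread enter $\LamR$, matching the second clause of the statement.

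The main obstacle is checking that $\bm\Sigma^r_k$ is truly unchanged, which is why it appears as the hypothesis. The camera block depends only on $\bm Q_k\bm a_j$ and $\bm\Sigma_{\theta,k}$. The PnP covariance and the LiDAR range model \cref{eq:lidarnoise}, however, do depend on depth. The statement's assumption rules this out, and I will note explicitly that the corollary is conditional on it.

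I also note that $\LamT$ is not invariant under the offset. Its marginal is taken against a rotation block that $\bm P$ mixes into the translation coordinates, which is consistent with \cref{eq:single_trans}.
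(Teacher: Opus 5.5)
Your proof is correct and follows the paper's argument. The common offset right-multiplies every Jacobian by the same unit block-lower-triangular matrix, so $\bm H$ transforms by congruence, and because the first block row of that matrix's inverse is $[\bm I_3\;\bm 0]$, the block $[\bm H^{-1}]_{\phi\phi}=\LamR^{-1}$ is unchanged. Your variational substitution also covers the rank-deficient case of \cref{app:rankdef}, which the paper's proof leaves implicit.
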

\begin{proof}
A common offset $\bm d$ replaces each Jacobian $\bm J_k$ by $\bm J_k\bm T$ with $\bm T=\bigl[\begin{smallmatrix}\bm I_3&\bm0\\-\skewop{\bm d}&\bm I_3\end{smallmatrix}\bigr]$, so that $\bm H$ becomes $\bm T^\top\bm H\bm T$. As the first block row of $\bm T^{-1}$ is $[\bm I_3\;\bm0]$, the rotation-marginal covariance $[\bm H^{-1}]_{\phi\phi}=\LamR^{-1}$ is invariant. For the isotropic model, \cref{eq:multi} shows this through the centered positions $\bm d_k$ and orientations $\bm Q_k$, which the offset preserves.
\end{proof}
\noindent The corollary contradicts the common-standoff version of the far-board-first practice and is consistent with the observation of \citet{zhou2018automatic} that parallel planar targets with parallel boundaries provide the same line and plane constraints, of which it is the counterpart for the marginal rotation information. Within a fixed field of view, distant boards can also be spread farther apart, which increases the second term of \cref{eq:multi}, and this spread, not the distance itself, improves the rotation precision. Rotation information is therefore gained by varying the board orientations and spreading the board centers, and the score of \cref{subsec:guidance} selects the next pose that strengthens the weakest direction. Under the range-incidence noise model, distant boards are penalized further, as they also inflate $\sigma_L$. The closed forms of \cref{prop:single} and \cref{eq:multi} assume the symmetric four-hole geometry and isotropic noise, and the directional information of the full structured covariance is evaluated numerically from~\cref{eq:lamR}.

\subsection{Weakest-Direction Data-Collection Guidance}
\label{subsec:guidance}
Data collection has to determine which target pose to add next. The D-, A-, and E-optimal criteria~\citep{fedorov1972theory}, applied in calibration as entropy, covariance-trace, and smallest-singular-value selection~\citep{maye2016self,peng2019calibrationwizard,lv2022observability}, maximize a scalar function of the coupled information. Its rotation and translation blocks have different units, so that these criteria do not indicate which requirement fails. We instead define the score on the marginal block that fails the requirement. A directional precision requirement $(\epsilon_R,\epsilon_t)$ bounds the worst-direction standard deviation of each block and is satisfied only if
\begin{equation}
\lambda_{\min}(\LamR)\ge 1/\epsilon_R^2,\qquad
\lambda_{\min}(\LamT)\ge 1/\epsilon_t^2 ,
\label{eq:precisiontarget}
\end{equation}
that is, $\sigma_R(\bm v)\le\epsilon_R$ and $\sigma_t(\bm v)\le\epsilon_t$ along every unit direction or a specified one. It bounds each direction separately, not a joint three-dimensional confidence region.
\begin{definition}[Weakest-direction score]
\label{def:score}
When one block fails, let $\bm v_\star$ be the weakest eigenvector of the failing Schur information. For a candidate target pose $q$ with information increment $\bm H_q$, the myopic weakest-direction score is defined on the Schur increments $\Delta\LamR(q)$ and $\Delta\LamT(q)$,
\begin{equation}
\begin{gathered}
S(q)=\bm v_\star^\top\Delta\bm\Lambda_b(q)\,\bm v_\star,\\
\Delta\bm\Lambda_b(q)=\bm\Lambda_b(\bm H+\bm H_q)-\bm\Lambda_b(\bm H),
\end{gathered}
\label{eq:weakdir_score}
\end{equation}
where $b\in\{R,t\}$ denotes the block that fails~\cref{eq:precisiontarget}, and $\bm\Lambda_R=\LamR$, $\bm\Lambda_t=\LamT$.
\end{definition}
\noindent For a simple smallest eigenvalue, $S(q)$ is the first-order increase of $\lambda_{\min}(\bm\Lambda_b)$, and by the Rayleigh quotient, $\lambda_{\min}(\bm\Lambda_b+\Delta\bm\Lambda_b)\le\lambda_{\min}(\bm\Lambda_b)+S(q)$ for any increment. When both blocks fail, the score is computed for each block separately. For a recorded candidate observation, $\bm H_q$ is the information of its factor at the current estimate, and for a planned pose it follows from the target geometry and the covariance model of \cref{sec:method}. The score ranks feasible target poses ($\arg\max_q S(q)$, subject to the radius and consistency diagnostics). It guides the operator toward a target orientation and range that strengthen the failing direction.

\section{Consistency Assessment}
\label{sec:method:tests}
The covariance and the directional precision above hold only for observations consistent with~\cref{eq:obsmodel}. By \cref{prop:residual}, the absorbed part of a fault is invisible within its observation, so that the internal residual is examined within each observation and the absorbed part by comparison with the other observations. This section follows the order of \cref{fig:pipeline}. We first derive test thresholds that hold their level under the unknown noise magnitude (\cref{subsec:ftest}). With them, the diagnosis selects a consensus set and removes inconsistent observations by a within-view check of each observation and a cross-view check (\cref{subsec:diagnosis}), and the batch test examines the accepted set (\cref{subsec:constests}). The smallest fault that remains undetected in each direction is given in \cref{subsec:mdb}, and the consistency, precision, and detectability criteria together form the calibration state of \cref{subsec:status}.

\begin{figure}[!t]\centering
\definecolor{npgnavy}{HTML}{3C5488}%
\begin{tikzpicture}[node distance=3.6mm, fl/.append style={draw=black},
  box/.style={nd, font=\sffamily\small, text=black, minimum width=60mm, minimum height=6.2mm, inner sep=2pt,
              fill=black!4, draw=black!65},
  fin/.style={box, fill=npgnavy!14, draw=npgnavy, line width=0.6pt},
  sec/.style={font=\sffamily\scriptsize, text=black, anchor=east, inner sep=3pt},
  lab/.style={font=\footnotesize, text=black, right, inner sep=3pt, align=left}]
\node[box] (a) {Structured measurement factors};
\node[box,below=of a] (b) {Observation diagnosis};
\node[box,below=of b] (c) {Estimation and batch test};
\node[box,below=of c] (d) {Directional information};
\node[box,below=of d] (e) {Directional detection limit};
\node[fin,below=of e] (f) {Calibration state};
\foreach \n/\l in {a/\ref*{sec:factor}--\ref*{sec:method},b/\ref*{subsec:diagnosis},c/\ref*{subsec:constests},d/\ref*{sec:method:obs},e/\ref*{subsec:mdb},f/\ref*{subsec:status}}
  \node[sec] at (\n.east) {\S\l};
\draw[fl] (a) -- node[lab]{$\calF_k$} (b);
\draw[fl] (b) -- node[lab]{$\calS$} (c);
\draw[fl] (c) -- node[lab]{$\hatT_{\mathtt{CL}},\,\bm H,\,\hat s$;\ \ $G_c$} (d);
\draw[fl] (d) -- node[lab]{$\LamR,\,\LamT$;\ \ $G_p$} (e);
\draw[fl] (e) -- node[lab]{$\MDB^{\max}$;\ \ $G_d$} (f);
\draw[fl,dashed] (f.west) -- ++(-5mm,0) |- node[pos=0.25,left,rotate=90,anchor=south,font=\sffamily\scriptsize,text=black]{add, re-extract, or recollect} (a.west);
\end{tikzpicture}
\caption{Verification pipeline (\cref{alg:pipeline}).}
\label{fig:pipeline}
\end{figure}

\subsection{Calibrated Test Thresholds}
\label{subsec:ftest}

The conventional in-sample residual check is not statistically calibrated. To first order, the fitted residual $\hat{\bm r}_k$ of observation $k$ and its in-sample $\NIS_k$ satisfy
\begin{equation}
\begin{gathered}
\bm\Sigma^{\hat r}_k{}\doteq\bm\Sigma^{r}_k-\bm J_k\bm H^{-1}\bm J_k^\top\preceq\bm\Sigma^{r}_k,\\
\mathbb E[\NIS_k]{}=s(d_k-h_k)<s\,d_k,
\end{gathered}
\label{eq:deflation}
\end{equation}
with $\bm\Sigma^{\hat r}_k$ the covariance of $\hat{\bm r}_k$, leverage $h_k=\tr(\bm H^{-1}\bm H_k)$, and $\sum_kh_k=6$ (\cref{app:insample}). The fitted residual is therefore deflated, most severely at small $K$, and a $\chi^2$ threshold applied to it is exceeded less often than its nominal significance level (\cref{fig:constests}, left).
The deflation and the unknown scale both prevent a fitted residual from being compared with a $\chi^2$ threshold, and a scale estimated from the same residuals does not remove this bias (\cref{app:tests}). Every residual quadratic-form test is therefore referred to an $F$ distribution with an independent scale estimate $\hat s_{\mathrm{ref}}$ on $\nu_s$ degrees of freedom, and a statistic $Q$ of nominal distribution $\chi^2_d$ is rejected when
\begin{equation}
\frac{Q}{d\,\hat s_{\mathrm{ref}}}>F^{-1}_{d,\nu_s}(1-\alpha),
\label{eq:ftest}
\end{equation}
which recovers the $\chi^2$ threshold as $\nu_s\to\infty$. The construction parallels Student's $t$ test, which replaces the unknown variance of a $z$ test by an independent estimate. The structured split of \cref{eq:structured_split} and cross-fitting provide these independent scales without additional data.

\subsection{Observation Diagnosis}
\label{subsec:diagnosis}
\label{sec:method:pipeline}

The diagnosis determines which observations disagree with the factor model and whether they can be removed, and the batch test of \cref{subsec:constests} then examines the accepted set.

Contamination that affects several observations together is addressed by a RANSAC-style consensus search~\citep{fischler1981ransac}, which selects the dominant set $\calS_0$ of~\cref{eq:badview} under the working scale $\hat s_1$ (\cref{app:tests}). If two comparable, mutually inconsistent consensus sets remain, the procedure reports inconsistent measurements and recommends recollecting the observations. This is the breakdown regime of correlated majority contamination. The final extrinsic is always the weighted least-squares solution~\cref{eq:problem} over the accepted set, which is efficient for fault-free data, with reported covariance $\bm H^{-1}$ of that set.

\paragraph{Within-view check.}
The within-view check tests the internal structure of each observation without reference to the others, and it is the null-space component of the split~\cref{eq:structured_split}. It is sensitive to the internal component of single-correspondence gross errors, within-observation differential hole-center errors, wrong correspondences, and inconsistent boundary extraction, that is, of every error that no infinitesimal rigid extrinsic perturbation explains completely. A rigid perturbation of a single observation lies in $\rangeop(\bm L_k\bm J_k)$, is removed by the null-space projection, and is left to the cross-observation tests.

\begin{figure}[!t]\centering\includegraphics[width=\columnwidth]{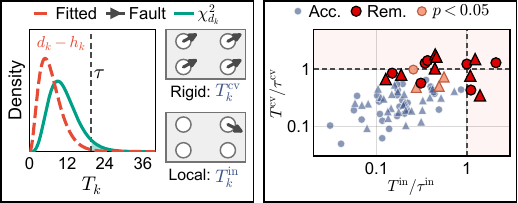}
\caption{Consistency tests. \textbf{Left}: null and fitted NIS densities with threshold $\tau$, and rigid and local faults. \textbf{Right}: within-view and cross-view statistics of the head (circles) and chassis (triangles) observations, normalized by their thresholds.}
\label{fig:constests}\end{figure}

\paragraph{Cross-view check.}
The cross-view check tests whether each observation agrees with the extrinsic estimated from the remaining observations. The cross-view statistic is based on the prediction residual $\bm\zeta_k=\bm r_k(\hatT_{-k})$,
\begin{equation}
T^{\mathrm{cv}}_k=\bm\zeta_k^\top\bigl(\bm\Sigma^{r}_k+\bm J_k\bm H_{-k}^{-1}\bm J_k^\top\bigr)^{-1}\bm\zeta_k\;\sim\;\chi^2_{d_k},
\label{eq:cvnis}
\end{equation}
which detects observation-level rigid shifts, PnP pose errors, synchronization errors, or variations of the extrinsic between sessions (\cref{fig:constests}, right). The studentized fitted residual $T^{\mathrm{sd}}_k$, the multivariate form of Baarda's data snooping~\citep{baarda1968testing}, is retained as an additional diagnostic when a leave-one-out re-estimation is unstable. Each observation-level statistic is referred to an $F$ distribution with an independent scale (\cref{app:tests}).

\paragraph{Removal criterion.}
The classical leave-one-out influence diagnostic of regression analysis~\citep{cook1977detection,belsley1980regression} ranks observations by the displacement of the solution upon their removal and is often applied for outlier detection, but it is not a calibrated bias detector. With the observation score $\bm g_k$, the displacement $\delta\bm\xi_{-k}$ and its information-normed magnitude $I_k$ are
\begin{equation}
\begin{gathered}
\bm g_k=\bm J_k^\top(\bm\Sigma^{r}_k)^{-1}\hat{\bm r}_k,\qquad
\delta\bm\xi_{-k}\approx\bm H_{-k}^{-1}\bm g_k,\\
I_k=\bm g_k^\top\bm H^{-1}\bm g_k,
\end{gathered}
\label{eq:influence}
\end{equation}
and at the true extrinsic, $\mathbb E[\bm g_k\bm g_k^\top]=\bm H_k$, so that a fault-free observation has the expected influence $\mathbb E[I_k]=\tr(\bm H^{-1}\bm H_k)=h_k$, its leverage (\cref{app:influence}).
Influence hence measures the effect of an observation on the estimate, but it does not separate a biased observation from an informative one (\cref{fig:consdiag}). In the influence-inconsistency plane of \cref{fig:consdiag} (left), the leave-one-out selection chooses observations along the influence axis, whereas the removal criterion of \cref{eq:badview} selects them along the inconsistency axis. Ranking by influence places first the tilted, widely spread, highly informative observations that a well-conditioned set requires (the classical distinction between influence and outliers~\citep{cook1977detection}).

\begin{figure}[!t]\centering\includegraphics[width=\columnwidth]{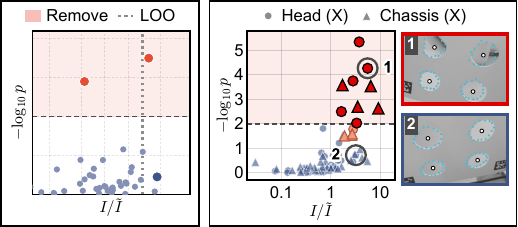}
\caption{Observation diagnosis in the influence-inconsistency plane. \textbf{Left}: removal criterion against leave-one-out (LOO) selection. \textbf{Right}: head (circles) and chassis (triangles); blue: accepted; salmon: $p<0.05$; red: removed; X: inconsistent.}
\label{fig:consdiag}\end{figure}
An observation detected as statistically inconsistent is removed if the remaining set meets the geometric condition:
\begin{equation}
\begin{gathered}
\calR=\Bigl\{k\in\calS_0:\ \max_i\frac{T^{i}_k}{\tau^{i}_k}>1,\ \ 
\lambda_{\min}(\bm\Lambda_{-k})\ge\rho_{\min}\Bigr\},
\end{gathered}
\label{eq:badview}
\end{equation}
where $i\in\{\mathrm{in},\mathrm{cv},\mathrm{sd}\}$ runs over the three observation-level statistics, each threshold $\tau^{i}_k$ is the level-$\alpha_{\mathrm{bias}}$ point of the $F$ distribution of the statistic (\cref{subsec:ftest}), $\bm\Lambda_{-k}$ is the directional information~\cref{eq:lamR} after removing $k$, which requires $\rank(\bm H_{-k})=6$, and $\rho_{\min}=\lambda_{\min}(\bm\Lambda)/2$ is half the value of the full set, for both blocks, evaluated for each observation. A detected observation is removed independently of its influence, since the detection rejects the zero-mean condition of \cref{ass:model}(ii) on which the scale and the reported precision rest, and the influence $I_k$ is reported with each decision. The rank condition preserves the directional information, and a detected observation that it retains makes the batch inconsistent (\cref{subsec:status}).

\subsection{Batch Consistency Test}
\label{subsec:constests}
The batch test aggregates the within-view energies $T^{\mathrm{in}}_k$ of~\cref{eq:inttest} over the batch, here formed with the shape $\bm\Sigma^r_{0,k}$ in place of $\bm\Sigma^r_k$. The batch internal energy then has nominal distribution $s\chi^2_{d_{\mathrm{in}}}$,
\begin{equation}
T^{\mathrm{in}}_{\calS}=\sum_{k\in\calS}T^{\mathrm{in}}_k,\qquad
d_{\mathrm{in}}=\sum_{k\in\calS}\bigl(d_k-\rank(\bar{\bm J}_k)\bigr).
\label{eq:batchstat}
\end{equation}
Its scale reference is the scale $\hat s$ of \cref{eq:scale}, which is formed from the absorbed components and is independent of every internal component under the null (\cref{app:tests}). The batch test is therefore the variance-ratio test of the within- and between-observation energies, exact for a fixed linear Gaussian design before selection,
\begin{equation}
\frac{\hat s_{\mathrm{in}}}{\hat s}\sim F_{d_{\mathrm{in}},\,d_{\mathrm{ab}}},\qquad
\hat s_{\mathrm{in}}=\frac{T^{\mathrm{in}}_{\calS}}{d_{\mathrm{in}}},
\label{eq:batchint}
\end{equation}
evaluated two-sided at level $\alpha$, since the factor model assigns one scale to both subspaces.

The upper tail detects internal energy larger than its expected value, that is, faults within observations, and fails the consistency criterion. The lower tail detects absorbed energy larger than its expected value, that is, rigid discrepancy between the observations beyond their covariance. The scale of \cref{eq:scale} is estimated from the absorbed components and therefore includes the mean of this excess in the reported covariance, exactly if the excess is isotropic on $\rangeop(\bar{\bm J}_k)$ (\cref{prop:scale}). A lower-tail rejection alone therefore leaves the consistency criterion satisfied (\cref{subsec:status}).

The ratio $\hat s_{\mathrm{in}}/\hat s$ is reported with the test and identifies the rejected tail. With $g$ batch-level tests at level $\alpha$, the family-wise false-alarm probability of the consistency criterion is bounded by $g\alpha$ under the stated model, and the effect of the selection steps is measured in \cref{subsec:expstate}. The observation-level statistics of \cref{subsec:diagnosis} identify observations for removal at the stricter level $\alpha_{\mathrm{bias}}$ and are not counted against the significance level of the acceptance decision.

\begin{algorithm}[!t]
\caption{VeriCalib: verifiable target-based calibration}
\label{alg:pipeline}
\begin{algorithmic}[1]
\Require observations $\calD$; requirements $(\epsilon_R,\epsilon_t)$ and $(\delta_R,\delta_t)$; levels $\alpha$, $\beta$, $\alpha_{\mathrm{bias}}$; optional validation set $\calV$
\Ensure extrinsic $\hatT_{\mathtt{CL}}$, covariance $\bm H^{-1}$, calibration state, detection limit $\MDB^{\max}$
\Statex // \textit{Structured measurement factors} (\cref{sec:factor,sec:method})
\State $\calF_k\gets(\bm r_k,\bm\Sigma^{r}_{0,k},\bm J_k)$, $k\in\calD$ \Comment{\cref{eq:factor}}
\Statex // \textit{Observation diagnosis} (\cref{subsec:diagnosis})
\State $\hat s_1\gets$ working scale on $\calD$ \Comment{\cref{eq:scale1}}
\State $\calS_0\gets$ consensus set under $\hat s_1\bm\Sigma^{r}_{0,k}$ \Comment{ambiguity: state X}
\State $\calS\gets\calS_0\setminus\calR$ \Comment{\cref{eq:badview} at $\alpha_{\mathrm{bias}}$}
\Statex // \textit{Estimation and batch test} (\cref{subsec:scaleest,subsec:constests})
\State $(\hatT_{\mathtt{CL}},\bm H)\gets$~\cref{eq:problem} on $\calS$;\ \ $\hat s\gets$~\cref{eq:scale}
\State $G_c\gets$ upper tail of~\labelcref{eq:batchint}, radius and trend tests~\labelcref{eq:radiuspool,eq:trend}, consensus ambiguity, rank-retained detections
\Statex // \textit{Directional information} (\cref{sec:method:obs})
\State $\LamR,\LamT\gets$~\cref{eq:lamR};\ \ $G_p\gets$~\cref{eq:precisiontarget}
\Statex // \textit{Directional detection limit} (\cref{subsec:mdb})
\If{$\calV\neq\emptyset$}
  \State $\MDB^{\max}\gets\max_{\bm v}\MDB_{\mathrm{eff}}(\bm v)$ on $\calV$ at $\hatT_{\mathtt{CL}}$; validation tests at level $\alpha$
\Else
  \For{$m=1,\dots,M$ random halves $\calS=\calS^{m}_{\mathrm f}\,\dot\cup\,\calS^{m}_{\mathrm c}$}
    \State $\mu_m\gets\max_{\bm v}\MDB_{\mathrm{eff}}(\bm v)$~\cref{eq:mdbeff} on $\calS^{m}_{\mathrm c}$ for the fit on $\calS^{m}_{\mathrm f}$
  \EndFor
  \State $\MDB^{\max}\gets\operatorname{med}_m\mu_m$ \Comment{needs $\calV$ after a lower tail}
\EndIf
\State $G_d\gets(\MDB^{\max}_R\le\delta_R)\wedge(\MDB^{\max}_t\le\delta_t)$
\State \Return $\hatT_{\mathtt{CL}}$, $\bm H^{-1}$, $\operatorname{state}(\calD)$ of~\cref{eq:state}, $\MDB^{\max}$
\end{algorithmic}
\end{algorithm}

The batch test of \cref{eq:batchint} assigns one scale to the internal and absorbed subspaces. Since the camera block of \cref{eq:camcov} lies in $\rangeop(\bm J_k)$, a camera covariance too small relative to the LiDAR block increases only the absorbed energy, as does an independent rigid error per observation.
\begin{proposition}[Variance-ratio ambiguity]
\label{prop:ratio}
Let
\begin{equation}
\bm G_k=\begin{bmatrix}\bm I_3&\bm0\\\skewop{\bm c_k}&\bm I_3\end{bmatrix}
\label{eq:gk}
\end{equation}
with the board center $\bm c_k$ of \cref{eq:camjac}, so that $\bm F_k=\bm J_k\bm G_k$ to first order, and let each batch observation contain, in addition to the factor model, an independent zero-mean Gaussian rigid error $\bm J_k\bm\beta_k$ with $\operatorname{Cov}(\bm\beta_k)=\gamma\,\bm G_k\bm\Sigma_{\theta,k}\bm G_k^\top$, $\gamma>0$. Then the batch residuals follow the factor model with the camera block $\bm\Sigma^{\mathtt{C}}_k$ of \cref{eq:residcov} replaced by $(1+\gamma)\,\bm\Sigma^{\mathtt{C}}_k$, and the weighted estimate remains unbiased.
\end{proposition}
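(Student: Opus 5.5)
The plan is to first verify the factorization $\bm F_k=\bm J_k\bm G_k$ to first order, then absorb the added rigid error into the camera block, and finally read off unbiasedness from \cref{eq:estcov}.

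\textbf{Factorization.} The two Jacobians differ only in their lever arms. In \cref{eq:jacobian} the lever arm is $\bm x_{kj}$, the hole center in the camera frame at the estimate. In \cref{eq:camjac} it is $\bm Q_k\bm a_j$, the same hole relative to the board center. To first order, $\bm x_{kj}\doteq\bm p^{\mathtt C}_{kj}=\bm c_k+\bm Q_k\bm a_j$. Writing $\bm J_{kj}\bm G_k$ blockwise gives
\begin{equation*}
\bm J_{kj}\bm G_k=\bigl[-\skewop{\bm x_{kj}}+\skewop{\bm c_k}\;\;\bm I_3\bigr]\doteq\bigl[-\skewop{\bm Q_k\bm a_j}\;\;\bm I_3\bigr]=\bm F_{kj}.
\end{equation*}
Here we use linearity of $\skewop{\cdot}$. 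Stacking over $j$ yields $\bm F_k\doteq\bm J_k\bm G_k$.

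\textbf{Camera block.} Since $\bm\beta_k$ is independent of the factor-model noise $\bm\varepsilon_k$ and zero-mean Gaussian, the total residual noise $\bm\varepsilon_k+\bm J_k\bm\beta_k$ is zero-mean Gaussian. Its covariance is
\begin{equation*}
\bm\Sigma^r_k+\gamma\,\bm J_k\bm G_k\bm\Sigma_{\theta,k}\bm G_k^\top\bm J_k^\top\doteq\bm\Sigma^r_k+\gamma\,s\,\bm F_k\bm\Sigma_{\theta,k}\bm F_k^\top.
\end{equation*}
By \cref{eq:camcov} the added term is $\gamma\bm\Sigma^{\mathtt C}_k$, up to the common scale $s$ of \cref{eq:residcov}. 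We will treat $\gamma$ as defined relative to the scaled camera block, or equivalently absorb $s$ into the definition of $\operatorname{Cov}(\bm\beta_k)$, so that the camera block in \cref{eq:residcov} becomes $(1+\gamma)\bm\Sigma^{\mathtt C}_k$ while the LiDAR block is unchanged. The added errors $\bm\beta_k$ are independent across $k$, so the residuals of different observations stay independent. This gives \cref{ass:model}(iii) and the factor-model form of \cref{eq:obsmodel} with $\bm b_k=\bm 0$.

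\textbf{Unbiasedness.} The estimator of \cref{eq:estcov} is linear in the residuals, $\delta\hat{\bm\xi}=-\bm H^{-1}\sum_k\bm J_k^\top(\bm\Sigma^r_k)^{-1}\bm r_k$, with fixed weights given by the frozen covariance (whichever camera block it uses). The residuals at the true extrinsic now have zero mean, since both $\bm\varepsilon_k$ and $\bm J_k\bm\beta_k$ are zero-mean. Hence $\mathbb E[\delta\hat{\bm\xi}]=\bm 0$ to first order, whatever weights are used. Its covariance is the sandwich form \cref{eq:sandwich} with the modified camera block.

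\textbf{Expected obstacle.} The main difficulty is the first-order bookkeeping in the factorization. It requires matching $\bm x_{kj}$, evaluated at $\hat{\bm T}_{\mathtt{CL}}$ from LiDAR points, with the PnP-based $\bm c_k+\bm Q_k\bm a_j$. The two agree only up to terms of the order of the noise, and these contribute second-order corrections to the covariance, which we neglect under \cref{ass:model}(i). The interpretive remark on ambiguity follows directly: because $\rangeop(\bm F_k)\subseteq\rangeop(\bm J_k)$, the extra error raises only the absorbed energy, never the internal energy.
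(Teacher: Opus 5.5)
Your proof is correct and follows the paper's argument: you check $\bm J_{kj}\bm G_k=\bm F_{kj}$ blockwise under $\bm x_{kj}\doteq\bm c_k+\bm Q_k\bm a_j$, add $\gamma\bm J_k\bm G_k\bm\Sigma_{\theta,k}\bm G_k^\top\bm J_k^\top=\gamma\bm\Sigma^{\mathtt C}_k$ as an independent Gaussian term to the residual covariance, and obtain unbiasedness from the linearity of the estimator in zero-mean residuals. Your attention to the scale $s$ is a fair refinement that the paper's proof glosses over: strictly, the camera block of the shape is inflated by $1+\gamma/s$ unless $\gamma$ is measured against $s\bm\Sigma^{\mathtt C}_k$. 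However, the factor $s$ on the right-hand side of your displayed covariance is spurious, since $\gamma\bm J_k\bm G_k\bm\Sigma_{\theta,k}\bm G_k^\top\bm J_k^\top\doteq\gamma\bm F_k\bm\Sigma_{\theta,k}\bm F_k^\top$ exactly as in \cref{eq:camcov}.
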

\noindent The proof is given in \cref{app:aux}. Consequently, independent rigid errors can be indistinguishable from a camera covariance that is too small, and no statistic of the batch separates the two. Both leave the estimate unbiased, and the scale of \cref{eq:scale} includes their mean in the reported covariance. A rigid offset shared by part of the observations also increases mainly the absorbed energy but biases the estimate. A held-out validation set resolves this ambiguity (\cref{subsec:mdb,subsec:status}).

In addition to the batch test, a residual-trend test regresses the normalized residual $\NIS_k/d_k$ on the LiDAR range to monitor a range-dependent error of the shape~\cref{eq:lidarnoise} (\cref{app:tests}).

\subsection{Directional Minimal Detectable Bias}
\label{subsec:mdb}
\begin{figure}[!t]\centering\includegraphics[width=\columnwidth]{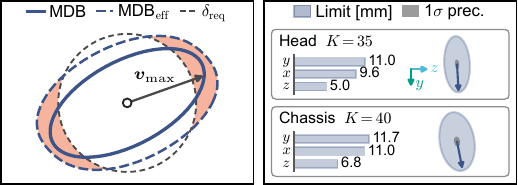}
\caption{Directional detection limits. \textbf{Left}: two-dimensional section (shaded: validation-limited). \textbf{Right}: translation detection limits of the head and chassis pairs along the principal axes, with the $1\sigma$ precision core and weakest direction.}
\label{fig:consmdb}\end{figure}
A passed test does not show that a fault of a given size would have been detected. Geodetic reliability quantifies this limit by the minimal detectable bias of an individual measurement~\citep{baarda1968testing,teunissen2018dia}. We resolve it along the rotation and translation directions of the extrinsic, on which the fault acts, with the compensating motion marginalized by the Schur complement. The validation information is built from held-out hole-center residuals, identical in form to~\cref{eq:problem} with the Jacobian of~\cref{eq:jacobian}. Their covariance is given by~\cref{eq:residcov}, and they are independent of the calibration observations by construction. The held-out information is decoupled as in~\cref{eq:lamR} into $\LamRv,\LamTv$.

The detection limit is stated for a differential fault. In \cref{eq:obsmodel}, $\bm b_k=\bm J_k\,\delta\bm v$ on the held-out validation observations and $\bm b_k=\bm0$ on the calibration observations, so that the validation data are displaced relative to the assessed extrinsic. A specified differential fault $\delta$ along unit direction $\bm v$ induces in the focused validation test the non-centrality $\delta^2\bm v^\top\bm\Lambda^{\mathrm v}\bm v$.
\begin{definition}[Directional minimal detectable bias]
\label{def:mdb}
The directional minimal detectable bias along a unit rotation or translation direction $\bm v$ is the smallest differential fault the focused validation test detects at level $\alpha$ with power $1-\beta$,
\begin{equation}
\MDB(\bm v;\alpha,\beta)=\sqrt{\frac{\lambda_{\alpha,\beta,1}}{\bm v^\top\bm\Lambda^{\mathrm v}\bm v}},
\label{eq:mdb}
\end{equation}
with $\lambda_{\alpha,\beta,1}=(z_{1-\alpha/2}+z_{1-\beta})^2$.
\end{definition}
\noindent \cref{def:mdb} converts a passed test into a quantitative statement, since, in the normal approximation, the focused validation test detects a differential fault larger than $\MDB(\bm v)$ along $\bm v$ with probability at least $1-\beta$. At $(\alpha,1{-}\beta)=(0.05,0.8)$, $\lambda_{\alpha,\beta,1}\approx7.85$ is the normal-approximation value. The focused test is referred to the $F_{1,d_{\mathrm{ab}}}$ distribution of \cref{eq:focused}, whose exact non-centrality for the same level and power increases the limit by at most $6\%$ for $d_{\mathrm{ab}}\ge18$, the smallest value of a four-observation set, and by about $1\%$ for $d_{\mathrm{ab}}\ge78$. If $\bm v^\top\bm\Lambda^{\mathrm v}\bm v=0$, the validation geometry is insensitive to the direction and $\MDB(\bm v)=\infty$. A common-mode fault lies outside the differential fault model and is discussed in \cref{subsec:scope}. The same expression on the full accepted-set information is not a detection limit, since the re-estimation absorbs a fault shared by all accepted observations.

The state report compares $\MDB$ directly with the application-specified detectable-fault requirement (\cref{fig:consmdb}). In \cref{fig:consmdb} (right), the detection-limit surface of each pair is elongated along its weakest direction and encloses the $1\sigma$ precision core, so that the smallest detectable fault is several times the reported precision. One finite-sample refinement is applied in all experiments. When the assessed estimate is itself fitted on a finite selection split, its estimation error propagates into the held-out residuals, and the null covariance of the focused score inflates from $\bm\Lambda^{\mathrm v}$ to $\bm\Lambda^{\mathrm v}+\bm\Lambda^{\mathrm v}\bm\Lambda_{\mathrm f}^{-1}\bm\Lambda^{\mathrm v}$ in the decoupled block. This inflation defines the corrected limit
\begin{equation}
\begin{gathered}
\MDB_{\mathrm{eff}}(\bm v){}=\kappa_{\bm v}\,\MDB(\bm v),\\
\kappa_{\bm v}^2{}=1+\frac{\bm v^\top\bm\Lambda^{\mathrm v}\bm\Lambda_{\mathrm f}^{-1}\bm\Lambda^{\mathrm v}\bm v}{\bm v^\top\bm\Lambda^{\mathrm v}\bm v}\ge1,
\end{gathered}
\label{eq:mdbeff}
\end{equation}
where $\bm\Lambda_{\mathrm f}$ is the decoupled information of the fit set in the same block, the inverse of the corresponding marginal block of $\bm H_{\mathrm f}^{-1}$. As the fit and certification subsets are disjoint halves of the accepted set, the limit refers to the assessed estimate of the fit half, so that a fault must exceed $\MDB(\bm v)$ by the factor $\kappa_{\bm v}$ to be detected in the validation data. As the limit depends on the random split, $\max_{\bm v}\MDB_{\mathrm{eff}}(\bm v)$ is evaluated on $M$ independent random splits and reported as the median over splits. For $|\calS|<8$ each observation certifies the fit on the others in turn and the largest limit is reported. For equal halves, the detectability requirement $(\delta_R,\delta_t)=3(\epsilon_R,\epsilon_t)$ calls for about $3.5$ times the observations that meet the precision requirement, so that the validation-limited state is expected between the observation counts that meet the precision and the detectability requirements (\cref{app:focused}).

The random-split limit is an internal reliability diagnostic of the accepted set and refers to a fit on half of it. For a validation set with the information of the certification half, the limit of the final estimate is smaller, since its fit information $\bm\Lambda_{\mathrm f}$ in \cref{eq:mdbeff} is larger, and the random-split limit is thus a conservative substitute for it. Independent confirmation instead holds out a validation set $\calV$ before observation selection and evaluates it at the final estimate with the same measurement model and scale. Its detection limit follows from \cref{eq:mdb,eq:mdbeff} with $\bm H_{\mathrm f}=\bm H$, and the focused tests along the worst directions and a six-degree-of-freedom validation test are run at level $\alpha$ (\cref{app:holdout}). A rejected validation test is reported with the state, which is then not confirmed by independent data (\cref{subsec:status}).

\subsection{Calibration State and Recommended Procedure}
\label{subsec:status}
\label{sec:method:report}
A residual alone does not indicate which action follows a calibration. The calibration state therefore indicates the first unmet requirement together with its procedure. The consistency criterion $G_c$ holds when the batch-level tests accept the accepted set, and the precision criterion $G_p$ when~\cref{eq:precisiontarget} holds. The detectability criterion $G_d$ holds when $\MDB_R^{\max}\le\delta_R$ and $\MDB_t^{\max}\le\delta_t$, and it fails in particular when the validation geometry is insensitive to a required direction and its detection limit is infinite (\cref{subsec:mdb}). The data-count criterion $G_K$, $\rank(\bm H)=6$ and enough observations for the required splits ($K\ge4$ in our implementation), is a precondition of the state. \begin{definition}[Calibration state]
\label{def:state}
For a batch that satisfies $G_K$, with the criteria evaluated in the order listed, the calibration state of $\calD$ is the first row of
\begin{equation}
\operatorname{state}(\calD)=
\begin{cases}
\mathrm{X}, & \neg G_c,\\
\mathrm{P}, & \neg G_p,\\
\mathrm{V}, & \neg G_d,\\
\mathrm{OK}, & G_c\wedge G_p\wedge G_d,
\end{cases}
\label{eq:state}
\end{equation}
whose condition holds, that is, inconsistent, precision-limited, validation-limited, and verified under the requirements.
\end{definition}
\noindent Evaluating the criteria in this order ensures that a consistency failure is reported even when the precision or the detection limit also fails, and the criterion vector $(G_c,G_p,G_d)$ is reported with the state. $G_c$ combines the upper tail of the batch test~\cref{eq:batchint}, the cross-observation consensus check, the radius diagnostic, and the trend test~\cref{eq:trend}, and also fails when the rank condition retains a detected observation.

For a rigid checkerboard the within-view check is replaced by the cross-subset test of \cref{eq:crosssubset} and the radius test is not applicable, and for the four-hole factor the cross-subset statistic~\cref{eq:subsettest} is reported as a diagnostic outside $G_c$. The detection limits of $G_d$ are evaluated on the held-out validation set when one exists and on the random splits otherwise. After a lower-tail rejection of the batch test, the random-split limits are not accepted (\cref{prop:ratio}), and $G_d$ then requires a held-out validation set, whose rejection fails the consistency criterion. Without one, the state is validation-limited when the precision criterion holds.

The failed criterion determines the recommended procedure. An inconsistent batch calls for re-extracting the features after an upper-tail rejection of the batch test or when the rank condition retains a detected observation. It calls for a revised LiDAR noise model after a rejected radius or trend test, and for recollection after a consensus ambiguity or a rejected validation test. A precision-limited state calls for an observation along the weakest Schur direction (\cref{def:score}). A validation-limited state calls for validation observations along the largest detection-limit direction, held out from the estimation after a lower-tail rejection of the batch test, or, when no target pose constrains that direction, for external checks or a modified design.

Within the internal scope of \cref{prop:unobservable}, a verified state establishes consistency with the stated model, nominal directional precision, and the detectability of the specified differential faults along the tested directions under \cref{ass:model}, and the level and power after observation selection are measured in \cref{subsec:expstate}.
\cref{alg:pipeline} summarizes the complete consistency-guided calibration.

\section{Experiments}
\label{sec:results}
\label{sec:results:sim}
\label{sec:results:real}
This section evaluates VeriCalib in simulation, where the extrinsic is known, and on real-world data recorded on three sensor platforms. \cref{subsec:expcov,subsec:expinfo,subsec:expcons} evaluate the structured covariance, the directional information, and the consistency assessment, and \cref{subsec:ablation} reports an ablation study and the runtime. \cref{subsec:expreal,sec:results:checkerboard,subsec:expstereo} then examine the calibration state across repeated sessions, on a checkerboard target, and against a stereo reference that involves no LiDAR data.

\begin{figure*}[!t]\centering\includegraphics[width=\textwidth]{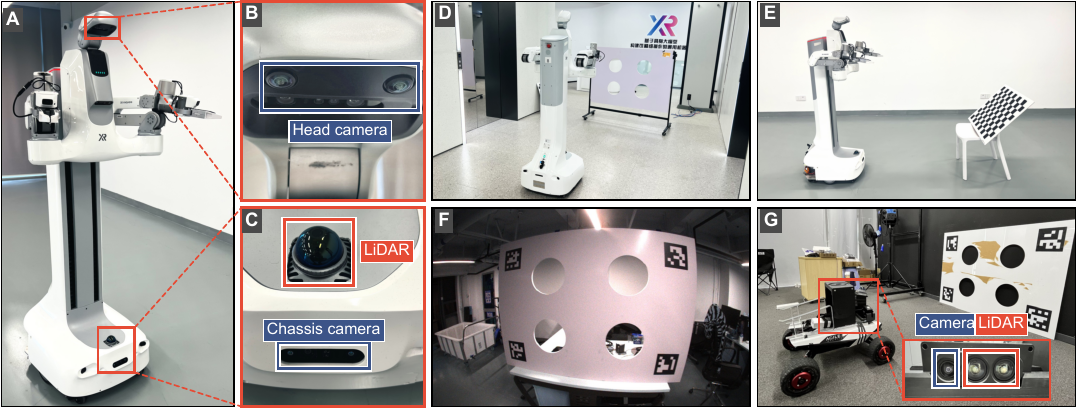}
\caption{Platforms and targets (navy: cameras; red: LiDARs). \textbf{A}--\textbf{C}: mobile manipulator with the head and chassis cameras. \textbf{D}, \textbf{E}: four-hole target and checkerboard. \textbf{F}: handheld platform with stereo cameras and an Ouster LiDAR. \textbf{G}: FS platform with a flash LiDAR.}
\label{fig:setup}\end{figure*}

\subsection{Experimental Setup}
\label{sec:results:setup}
\paragraph{Data and platforms.}
Following the synthetic benchmarks of robust registration~\citep{yang2020teaser} and velo2cam~\citep{beltran2022velo2cam}, a simulator generates hole-center correspondences under a known extrinsic with the noise model of \cref{sec:method} and injects faults. Each synthetic board ($a{=}0.25$\,m, $b{=}0.20$\,m) lies at a depth of $2$ to $5$\,m, with a lateral offset of up to $\pm1.2$\,m and a tilt of up to $40^\circ$ about a random axis. The four camera centers share one board-pose error with a standard deviation of $0.4^\circ$ and $4$\,mm on each axis, and the LiDAR centers have independent in-plane errors that grow with range and incidence, together with the error of a board plane fitted to $240$ rim points. For the circle-fit analysis, LiDAR boundary points are sampled on partial arcs and fitted as on real data. The injected faults are an offset of the four LiDAR centers of one observation, a swap of two hole correspondences, independent or shared offsets of several observations, and one extrinsic perturbation common to all observations.

\label{subsec:datasets}The real-world sets of \cref{tab:datasets} are recorded on four platforms (\cref{fig:setup}). On a mobile manipulator whose head and chassis cameras share one Livox Mid-360 LiDAR (\cref{fig:setup}A--C), S1 and S2 observe the four-hole target on Robot-A five weeks apart, with S2 recorded on consecutive days S2a and S2b, and S3 and S4 observe a checkerboard and the four-hole target on Robot-B, a second unit of the same design. S5 is recorded on a handheld platform with two global-shutter cameras and a spinning Ouster LiDAR, and FS-B and FS-C with a low-cost flash LiDAR~\citep{hu2026p2calib} (\cref{fig:setup}F,G). The four-hole targets share hole centers at $\pm0.25$\,m and $\pm0.20$\,m and marker centers at $\pm0.55$\,m and $\pm0.35$\,m, with a marker side of $0.16$\,m ($0.20$\,m on the FS targets) and a hole radius of $120$\,mm. We exclude acquisitions without a detection in either modality.

\begin{table}[!t]
\centering
\caption{Sensors and observation counts of the real-world sets.}
\label{tab:datasets}\footnotesize\setlength{\tabcolsep}{2.2pt}\renewcommand{\arraystretch}{1.15}
\begin{tabularx}{\columnwidth}{@{}llll>{\hsize=1.25\hsize}C>{\hsize=0.75\hsize}C>{\hsize=1.25\hsize}C>{\hsize=0.75\hsize}C}
\toprule
\rowcolor{tabhead} & & & & \multicolumn{2}{c}{Camera 1} & \multicolumn{2}{c}{Camera 2}\\
\cmidrule(lr){5-6}\cmidrule(lr){7-8}
\rowcolor{tabhead}Platform & Set & LiDAR & Target & FOV [$^\circ$] & $K$ & FOV [$^\circ$] & $K$\\
\midrule
Robot-A & S1 & Mid-360 & Four-hole & $83{\times}53$ & $14$ & $107{\times}91$ & $14$\\
 & S2 & & Four-hole & & $21$ & & $26$\\
\addlinespace[2pt]
Robot-B & S3 & Mid-360 & Checkerboard & $83{\times}53$ & $37$ & $107{\times}91$ & $20$\\
 & S4 & & Four-hole & & $94$ & & $101$\\
\addlinespace[2pt]
Handheld & S5 & Ouster & Four-hole & $80{\times}70$ & $147$ & $80{\times}70$ & $92$\\
\addlinespace[2pt]
FS & FS-B & Flash & Four-hole & $111{\times}78$ & $15$ & -- & --\\
 & FS-C & & Four-hole & & $16$ & -- & --\\
\bottomrule
\end{tabularx}
\par\vspace{1.5pt}
{\scriptsize\raggedright \textit{Note:} FOV: field of view; Camera 1/2: head/chassis (robots), stereo pair (handheld), single camera (FS); $K$: observations per camera-LiDAR pair; H1 to H3: held-out handheld scenes of \cref{app:projection}.\par}
\end{table}

\paragraph{Baselines.}
All methods are applied to the same observations and board configuration.
\begin{itemize}
\item \textbf{FAST-Calib}~\citep{zheng2025fastcalib}\footnote{\url{https://github.com/hku-mars/FAST-Calib}}, run end to end from its official release with default parameters on the board region of the cloud, its LiDAR centers matched to the camera centers by minimum distance;
\item \textbf{velo2cam}~\citep{beltran2022velo2cam}\footnote{\url{https://github.com/beltransen/velo2cam_calibration}} and \textbf{Dom2019}~\citep{domhof2019extrinsic}\footnote{\url{https://github.com/tudelft-iv/multi_sensor_calibration}}, given the camera hole centers of VeriCalib, with their unmodified LiDAR detectors applied to a virtual $128$-ring scan of the cropped board region~\citep{hu2026p2calib} (\cref{app:baselines});
\item \textbf{Ours-U}, \textbf{Indep.}, \textbf{Ours-W}, and \textbf{Ours-WC}, ablated variants on the same observations: the unweighted registration of free-radius circle centers by singular value decomposition (SVD) that the three calibrators above share, none of which reports a covariance or a calibration state; the weighted estimate with an independent camera block for each hole; the weighted estimate without the consistency assessment; and the weighted estimate after the consensus stage alone;
\item \textbf{LOO}, \textbf{RANSAC}, and \textbf{Huber} for outlier removal. LOO removes the observation with the largest leave-one-out influence~\citep{cook1977detection}. RANSAC~\citep{fischler1981ransac} is the correspondence-based implementation of Open3D~\citep{Zhou2018o3d}\footnote{\url{https://github.com/isl-org/Open3D}}, which classifies the hole-center correspondences with an inlier bound of three nominal standard deviations and removes an observation when one of its correspondences is an outlier. Huber reimplements the pose optimization of ORB-SLAM~\citep{mur2017orbslam2,campos2021orbslam3}\footnote{\url{https://github.com/UZ-SLAMLab/ORB_SLAM3}} under the nominal covariance. It applies a Huber kernel at the $95\%$ point of the $\chi^2$ distribution and then classifies each observation by a $\chi^2$ test;
\item \textbf{Random}, \textbf{Far-first}, which adds the most distant candidate first following the intuition that distant structure constrains the rotation~\citep{kaess2009flow}, and three optimal-design criteria~\citep{fedorov1972theory} for acquisition: the entropy of online self-calibration~\citep{maye2016self}\footnote{\url{https://github.com/ethz-asl/aslam_incremental_calibration}} (D-optimal), the covariance trace of next-best-pose camera calibration~\citep{peng2019calibrationwizard}\footnote{\url{https://github.com/pengsongyou/CalibrationWizard}} (A-optimal), and the smallest singular value of the information matrix in observability-aware LiDAR-IMU calibration~\citep{lv2022observability} (E-optimal). Each criterion ranks the candidates greedily on the same extrinsic information as our score (\cref{app:baselines}).
\end{itemize}
Ours denotes the complete procedure.

\paragraph{Metrics.}
\textit{Accuracy and precision.} The rotation and translation errors are $e_R=\lVert\delta\bm\phi\rVert$ and $e_t=\lVert\delta\bm t\rVert$, with $(\delta\bm\phi,\delta\bm t)=\Log(\hatT\,T^{\star-1})$~\citep{yang2020teaser}, where $T^\star$ is the known transform or, on real data, the weighted solution on the correctly extracted observations. An RMSE is computed over hole centers, against the true centers in simulation and between the aligned modalities on real data. The precision is reported as the worst-direction standard deviations $\sigma_R$ and $\sigma_t$ of \cref{eq:dirstd}. The Variance Ratio (VR) divides the empirical estimator variance along a direction by the variance that $\bm H^{-1}$ predicts, and it equals one for a consistent estimator~\citep{barshalom2001estimation,huang2010observability}; for split-half differences, it is the mean squared six-dimensional difference normalized by its predicted covariance, divided by six (\cref{tab:shape}). The error reduction of an estimator is the decrease of its RMS error relative to Ours-U.

\begin{table}[!t]
\centering\caption{Split-half repeatability of the extrinsic under three weightings on the real-world sets.}
\label{tab:shape}\footnotesize\setlength{\tabcolsep}{2.4pt}\renewcommand{\arraystretch}{1.15}
\begin{tabularx}{\columnwidth}{@{}l@{\hspace{3pt}}c*{8}{C}@{}}
\toprule
\rowcolor{tabhead}& & \multicolumn{3}{c}{Rot. [$^\circ$] $\downarrow$} & \multicolumn{3}{c}{Trans. [mm] $\downarrow$} & \multicolumn{2}{c}{VR}\\
\cmidrule(lr){3-5}\cmidrule(lr){6-8}\cmidrule(lr){9-10}
\rowcolor{tabhead}Set & $|\calS|$ & {\scriptsize\mbox{Ours-U}} & {\scriptsize Indep.} & {\scriptsize Joint} & {\scriptsize\mbox{Ours-U}} & {\scriptsize Indep.} & {\scriptsize Joint} & {\scriptsize Indep.} & {\scriptsize Joint}\\
\midrule
S1-H & $14$ & $0.237$ & $0.217$ & \cellcolor{tabbest}$0.211$ & $5.5$ & $4.6$ & \cellcolor{tabbest}$3.7$ & $2.03$ & $1.47$\\
S1-C & $13$ & $0.444$ & $0.342$ & \cellcolor{tabbest}$0.297$ & $12.0$ & $11.1$ & \cellcolor{tabbest}$8.5$ & $2.09$ & $1.37$\\
S2-H & $18$ & $0.141$ & $0.136$ & \cellcolor{tabbest}$0.133$ & $6.1$ & \cellcolor{tabbest}$5.9$ & \cellcolor{tabbest}$5.9$ & $1.51$ & $1.41$\\
S2-C & $23$ & $0.371$ & $0.138$ & \cellcolor{tabbest}$0.126$ & $16.7$ & $5.7$ & \cellcolor{tabbest}$5.1$ & $1.24$ & $0.96$\\
S4-H & $79$ & \cellcolor{tabbest}$0.080$ & $0.093$ & $0.085$ & \cellcolor{tabbest}$2.7$ & $3.6$ & $2.8$ & $2.03$ & $1.57$\\
S4-C & $89$ & $0.135$ & $0.101$ & \cellcolor{tabbest}$0.098$ & $6.0$ & $3.8$ & \cellcolor{tabbest}$3.7$ & $1.53$ & $1.32$\\
FS-B & $15$ & $0.518$ & \cellcolor{tabbest}$0.214$ & $0.298$ & $18.5$ & \cellcolor{tabbest}$7.2$ & $10.7$ & $0.99$ & $1.07$\\
S5-0 & $107$ & \cellcolor{tabbest}$0.204$ & $0.489$ & $0.238$ & \cellcolor{tabbest}$3.3$ & $9.3$ & $4.3$ & $1.42$ & $1.02$\\
S5-1 & $67$ & \cellcolor{tabbest}$0.156$ & $0.412$ & $0.192$ & \cellcolor{tabbest}$3.0$ & $7.9$ & $3.5$ & $1.32$ & $1.08$\\
\bottomrule
\end{tabularx}
\par\vspace{1.5pt}
{\scriptsize\raggedright \textit{Note:} H/C: head/chassis pair; 0/1: camera; median split-half difference over $200$ random splits ($60$ on S4 and S5); Indep./Joint: independent or shared-plane LiDAR covariance; VR: variance ratio (\cref{sec:results:setup}); \colorbox{tabbest}{\strut best} in each row and block.\par}
\end{table}

\textit{Repeatability without ground truth.} Following the held-out evaluation of calibration~\citep{peng2019calibrationwizard}, the Split-Half Difference (SHD) is the median difference between two extrinsics estimated from disjoint random halves of an accepted set. Equal halves minimize the variance of this difference (\cref{app:halfsplit}), which measures the estimator variance and not a bias common to both halves. The delete-one jackknife variance estimate~\citep{efron1979bootstrap} is a second repeatability measure, and perturbing the marker corners or resampling the observations yields the empirical scatter. The Held-out Alignment Error (HAE) of an observation is the mean distance between its camera and LiDAR hole centers under the extrinsic estimated without it, and its Single-Observation Deviation (SOD) is the translation of its own unweighted extrinsic relative to the joint unweighted solution.

\begin{figure*}[!t]\centering\includegraphics[width=\textwidth]{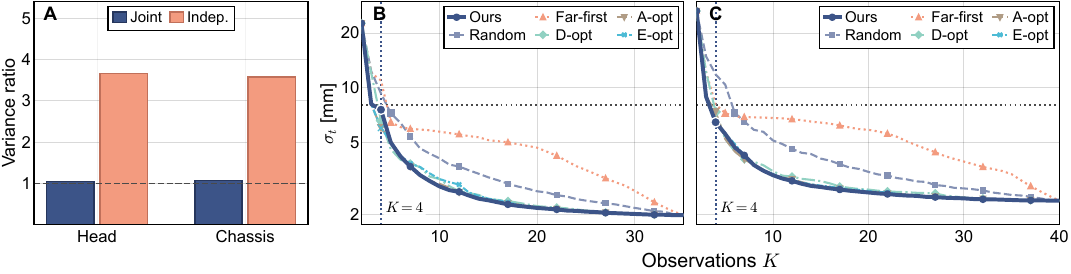}
\caption{Real-data covariance and acquisition. \textbf{A}: variance ratio of the joint and independent-hole (Indep.) camera covariances (dashed: unity). \textbf{B}, \textbf{C}: worst-direction $\sigma_t$ of six orderings from one initial pair on the head and chassis sets (dotted: $8$\,mm target; markers: first $K$ reaching it).}
\label{fig:expcovacq}\end{figure*}

\textit{Fault detection.} The Detected Fraction (DF) is the share of corrupted observations removed, and the False-Removal Rate (FRR) the share of fault-free observations removed. The Detection Rate (DR), or power, is the fraction of trials in which an injected fault is detected, and the False-Alarm Rate (FAR) is the fraction of fault-free batches or splits that a test rejects~\citep{tahiraj2025calornocal,teunissen2018dia}. Paired detection outcomes are compared by the McNemar test~\citep{mcnemar1947note}. The Detection Limit (DL) is the largest $\MDB_{\mathrm{eff}}$ of \cref{eq:mdbeff} over the rotation or translation directions~\citep{baarda1968testing,teunissen2018dia}. Its random-split value is the median over $M$ random equal-half splits of the accepted set, one half fitting the extrinsic and the other certifying it. Its hold-out value certifies the final estimate on a validation set held out before selection (\cref{app:holdout}).

\paragraph{Implementation details.}
VeriCalib is implemented in Python~3.10 with NumPy, SciPy, and OpenCV~4.10. In all experiments we set $\sigma_{L,0}=4$\,mm, $\sigma_{L,1}=10^{-3}$, $\bar\rho=0.05$, $\sigma_r=12$\,mm, a marker-corner pixel noise of $0.5$\,px, $\alpha=0.05$, power $0.8$, $\alpha_{\mathrm{bias}}=0.01$, and a consensus threshold of $28.4$, the $99.72\%$ point of $\chi^2_{11}$, over every observation pair or at most $500$ sampled pairs. Unless mentioned otherwise, the requirements are $(\epsilon_R,\epsilon_t)=(0.15^\circ,8\,\mathrm{mm})$ and $(\delta_R,\delta_t)=3\,(\epsilon_R,\epsilon_t)=(0.45^\circ,24\,\mathrm{mm})$, and the reported DL is the median over $M=200$ random splits ($500$ for the robot-platform batches), whereas the Monte Carlo experiments state their own split counts. The same constants are used for every dataset, the noise magnitude is estimated as $\hat s$, and the test parameters are probabilities, whose sensitivity is reported in \cref{app:alphabias}. All experiments are run on a single thread of an Intel Core i7-12700K.

\subsection{Covariance Prediction and Weighting Accuracy}
\label{subsec:expcov}
\label{sec:results:compact}
\paragraph{Covariance prediction.}
Every reported precision, threshold, and detection limit rests on the predicted covariance, and we therefore first compare the structured covariance and the independent-hole approximation with the empirical scatter on batches of $K{=}8$. With the camera noise generated by the shared board pose, the independent-hole model under-predicts the worst-direction variance by a factor of $1.89$, whereas the rank-six block of~\cref{eq:camcov} yields a ratio of $0.90$, since all four centers inherit the error of one board pose (\cref{subsec:camcov}).\label{subsec:realcov} On real PnP output, with the ArUco corners of the platform observations perturbed by corner noise and the PnP re-solved, the structured covariance matches the scatter of the four hole centers, with median VRs of $1.05$ and $1.07$ on the head and chassis pairs. The independent-hole model under-predicts this scatter by a factor of $3.6$ to $3.7$ (\cref{fig:expcovacq}A).

On simulated partial arcs, the soft radius prior coincides with the free fit at full coverage and reduces the center RMSE from $16.73$ to $7.31$\,mm at $22\%$ coverage (\cref{app:shortarc}). In a noise sweep with $K{=}8$ and $2000$ trials per magnitude, the structured prediction tracks the empirical worst-direction deviation for a camera or LiDAR noise from $0.25\times$ to $4\times$ its nominal value, with VRs of $0.99$ to $1.03$. The ratio of the independent-hole model instead increases with the share of the camera noise, from $1.5$ to $3.1$.

\begin{table}[!t]
\centering\caption{Weighting efficiency on synthetic batches.}
\label{tab:gain}\footnotesize\setlength{\tabcolsep}{3pt}\renewcommand{\arraystretch}{1.15}
\begin{tabularx}{\columnwidth}{@{}l*{4}{C}@{}}
\toprule
\rowcolor{tabhead} & \multicolumn{2}{c}{Ours-U} & \multicolumn{2}{c}{\textbf{Ours-W}}\\
\cmidrule(lr){2-3}\cmidrule(lr){4-5}
\rowcolor{tabhead}PnP noise & $e_R$ [$^\circ$] & $e_t$ [mm] & $e_R$ [$^\circ$] & $e_t$ [mm]\\
\midrule
Constant & $0.111$ & $5.60$ & $0.103$ & $5.09$\\
Depth-dependent & $0.165$ & $7.36$ & $0.115$ & $4.97$\\
Depth-dep.\ + offset (all) & $0.216$ & $10.32$ & $0.189$ & $9.02$\\
Depth-dep.\ + offset (near) & $0.187$ & $9.27$ & $0.177$ & $8.75$\\
\bottomrule
\end{tabularx}
\par\vspace{1.5pt}
{\scriptsize\raggedright \textit{Note:} $K{=}16$, $2000$ trials; offset: unmodeled $6$\,mm LiDAR offset on all or on the nearest third of the observations.\par}
\end{table}

\paragraph{Weighting efficiency.}\label{subsec:expgain}
We evaluate \cref{thm:gain} on synthetic batches with $K{=}16$ and $2000$ trials. Besides the constant PnP covariance of the simulator, a second generator increases the PnP standard deviation with the depth $Z$ as $0.4^\circ\,(Z/3\,\mathrm{m})$ and $4\,\mathrm{mm}\,(Z/3\,\mathrm{m})^2$. This generator is also combined with an unmodeled random offset of $6$\,mm on each axis, applied to every observation or to the nearest third. \cref{tab:gain} shows that the error reduction of the weighting alone over Ours-U increases with the heterogeneity of the observations, from $0.5$ to $2.4$\,mm, and that the RMS errors predicted by part (a) differ from the empirical ones by less than $0.05$\,mm. Part (c) predicts how often a single calibration benefits, and the weighted estimate is closer in $61$ and $77\%$ of the trials. An unmodeled offset, a covariance error in the sense of part (b), reduces the error reduction to $1.3$\,mm, and to $0.5$\,mm when it affects only the nearest observations, to which the estimator assigns the largest weights. On a fixed geometry, the weighted estimate is closer along each axis in $62$ to $69\%$ of the trials, which matches the $63$ to $68\%$ predicted by~\cref{eq:winprob}, and every mis-weighted estimator remains below the bound of~\cref{eq:kantorovich}. An added far observation reduces $\sigma_t$ from $3.9$ to $3.7$\,mm for the weighted estimate but increases it from $4.5$ to $6.3$\,mm for Ours-U.

On real data, \cref{tab:shape} reports the SHD of each accepted set. The shared-plane block of~\cref{eq:lidar3d} yields a smaller rotation difference than an independent normal variance for each hole on eight of the nine sets and a smaller translation difference on seven, with a tie on head S2. The delete-one jackknife shows the same tendency. On the two S5 sets, the independent variance more than doubles the differences of Ours-U, and the shared-plane block removes most of this excess. The handheld motion of S5 adds an observation-level error that neither covariance represents (\cref{subsec:expstereo}). The weighted estimate is also more repeatable than Ours-U on six of the seven robot and FS sets, by $0.01^\circ$ to $0.24^\circ$ and $0.2$ to $11.5$\,mm, and comparable on head S4 (within $0.005^\circ$ and $0.1$\,mm). The improvement is the largest on chassis S2, for which the model predicts the largest difference between the worst-direction translation standard deviations, $8.8$ against $2.1$\,mm. On S5, where the model predicts no measurable gain, the two estimates differ by at most $0.04^\circ$ and $1$\,mm. The same splits also measure how well each covariance predicts this scatter. The VR of the six-dimensional split-half difference is $0.96$ to $1.57$ under the shared-plane block, against $0.99$ to $2.09$ under an independent variance for each hole, and it is closer to one on eight of the nine sets (\cref{tab:shape}).

\paragraph{Model mismatch.}\label{subsec:expmismatch}\label{sec:results:mismatch}
\cref{tab:mismatch} perturbs the generating model with the estimator and tests fixed. A camera-only or LiDAR-only variance inflation alters the ratio of the two covariance blocks (\cref{prop:ratio}). The reported $\sigma_t$ still follows the error, with ratios of the median error to $\sigma_t$ of $1.3$ and $1.1$ against $1.2$ for the nominal model, as \cref{prop:scale} predicts. Under a range-dependent shape error, the reported $\sigma_t$ matches the error ($7.5$\,mm) after scale estimation, and the trend test rejects $14\%$ of trials, against $8\%$ under the nominal model. A common bias remains statistically nominal (\cref{prop:unobservable}).

\begin{table}[!t]
\centering\caption{Covariance-model perturbations on synthetic batches.}
\label{tab:mismatch}\footnotesize\setlength{\tabcolsep}{2.6pt}\renewcommand{\arraystretch}{1.15}
\begin{tabularx}{\columnwidth}{@{}l*{4}{C}@{}}
\toprule
\rowcolor{tabhead}Generating model & $\hat s$ & Rej. [\%] & $\sigma_t$ [mm] & $e_t$ [mm]\\
\midrule
Nominal & $1.00$ & $8$ & $4.4$ & $5.3$\\
Camera noise $\times2$ & $3.21$ & $7$ & $7.9$ & $10.1$\\
LiDAR noise $\times2$ & $1.80$ & $8$ & $5.9$ & $6.5$\\
Range-dependent shape & $2.92$ & $14$ & $7.5$ & $7.5$\\
Common-mode bias & $0.99$ & $6$ & $4.4$ & $9.3$\\
\bottomrule
\end{tabularx}
\par\vspace{1.5pt}
{\scriptsize\raggedright \textit{Note:} $K=12$, $200$ trials; Rej.: trend-test rejection rate; $\sigma_t$, $e_t$: after scale estimation.\par}
\end{table}

\subsection{Directional Precision and Acquisition Efficiency}
\label{subsec:expinfo}
\label{sec:results:directional}
We confirm the parameter-level compensation of \cref{subsec:localinfo} and then test whether the weakest-direction score lowers the number of observations needed for the required precision.

\paragraph{Parameter-level compensation.}
The parameter-level compensation of \cref{prop:parameter} is examined by comparing the conditional blocks $\bm A$ and $\bm C$ of \cref{eq:localinfo}, which hold the complementary block fixed as a diagonal-block analysis does, with the marginal blocks of \cref{eq:lamR}. On the accepted robot-platform and flash-LiDAR sets of \cref{tab:shape}, the conditional blocks understate the worst-direction standard deviation by a factor of $1.16$ to $1.74$ in rotation and $1.83$ to $2.56$ in translation. Their weakest axes differ from the marginal ones by $42^\circ$ to $86^\circ$ in rotation and $62^\circ$ to $89^\circ$ in translation, and along the weakest conditional rotation axis the marginal standard deviation is $58$ to $89\%$ of its worst-direction value. On $2000$ synthetic batches for each $K$, the rotation factor decreases from a median of $1.17$ at $K=4$ to $1.03$ at $K=16$, whereas the translation factor remains about three, and the weakest translation axes differ by more than $30^\circ$ in about $65\%$ of the batches. A diagonal-block analysis therefore overstates the precision and indicates a weakest direction that does not limit the calibration.

\begin{figure*}[!t]\centering\includegraphics[width=\textwidth]{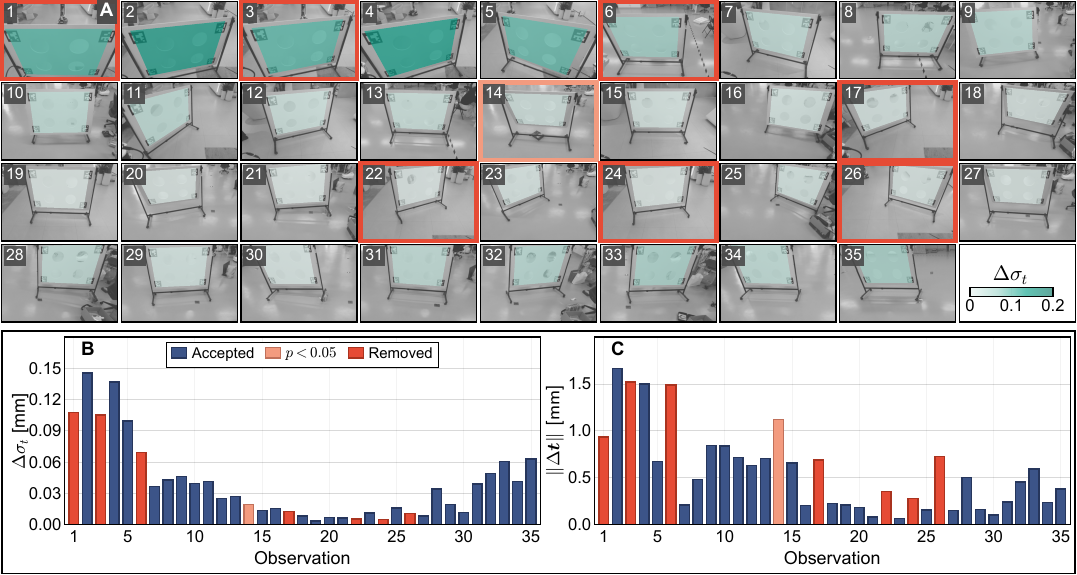}
\caption{Contribution of each observation to the head-pair extrinsic, with the observations sorted by depth. \textbf{A}: board images shaded by the resulting difference $\Delta\sigma_t$ of the worst-direction translation precision (salmon frames: accepted with $p<0.05$; red frames: removed). \textbf{B}: $\Delta\sigma_t$ when the observation is removed from the accepted-set estimate, or added if excluded. \textbf{C}: resulting translation shift $\Vert\Delta\bm t\Vert$.}
\label{fig:boardspace}\end{figure*}

\paragraph{Acquisition guidance.}
The Acquisition Cost (AC) is the number of observations required to reach the precision target~\citep{peng2019calibrationwizard,maye2016self}. Each ordering starts from two initial observations and adds one candidate at a time, on $20$ synthetic candidate sets of $24$ observations and, from $20$ initial pairs, on the $35$-observation head and $40$-observation chassis sets and on the $94$ head and $101$ chassis observations of S4 (\cref{tab:acqcost}).

\label{subsec:realacq}Under the default requirement, the weakest-direction score and the design criteria require comparable observation counts, since they aggregate the same information, and fewer than random and far-board-first ordering require (\cref{fig:expcovacq}B,C), whereas the weakest-direction score also reports the marginal block that fails the requirement. A far-board-first ordering that begins with the two farthest observations requires $14$ and $19$, as \cref{cor:standoff} predicts. On S4, every ordering reaches the $8$\,mm target after a median of at most four observations, and the weakest-direction score reaches $1.5$ times the precision of all recorded observations, the target of \cref{sec:results:checkerboard}, with fewer observations than random and far-board-first ordering.

A requirement separates the selection scores only when it binds. Under a rotation-only requirement, for which the weakest-direction score operates on $\LamR$, $\epsilon_R=0.05^\circ$ is reached on S4 after $21$ head and $33$ chassis observations, against $28$ to $30$ and $39$ to $44$ for the design criteria, and $\epsilon_R=0.10^\circ$ requires no more observations than any design criterion on every set. When the rotation is expressed in degrees and the translation in millimeters, the A- and E-optimal criteria reorder the candidates, which alters the acquisition cost of the E-optimal criterion in $17$ of the $20$ chassis trials, whereas the D-optimal criterion and the weakest-direction score retain their orders in all $20$ trials on each set. The weakest-direction score thus selects for the requirement that fails, independently of the units, and its largest advantage over the design criteria occurs on S4 under the strict rotation requirement, with up to $30\%$ fewer observations.

\begin{table}[!t]
\centering\caption{Median number of observations $\downarrow$ of the selection rules on synthetic and real-world sets.}
\label{tab:acqcost}\footnotesize\setlength{\tabcolsep}{2.6pt}\renewcommand{\arraystretch}{1.15}
\begin{tabularx}{\columnwidth}{@{}l*{4}{C}@{}}
\toprule
\rowcolor{tabhead}Set & Ours & D/A/E & Rand. & Far\\
\midrule
\multicolumn{5}{@{}l}{\textit{$(\epsilon_R,\epsilon_t)=(0.15^\circ,8\,\mathrm{mm})$}}\\
Synthetic & \cellcolor{tabbest}$4$ & \cellcolor{tabbest}$4$ & $6$ & $7$\\
S1+S2 H, C & \multicolumn{2}{c}{\cellcolor{tabbest}$3$--$4$} & $5$--$6$ & $5$--$12$\\
\midrule
\multicolumn{5}{@{}l}{\textit{$1.5\times$ full-set precision}}\\
S4 H & $24$ & \multirow{2}{*}{$23$--$29$} & $41.5$ & $72$\\
S4 C & \cellcolor{tabbest}$22.5$ & & $47.5$ & $79$\\
\midrule
\multicolumn{5}{@{}l}{\textit{$\epsilon_R=0.10^\circ$}}\\
Synthetic & \cellcolor{tabbest}$6$ & $6$--$8$ & $9$ & $10$\\
S1+S2 H & \cellcolor{tabbest}$6.5$ & $7$--$8$ & $13.5$ & $23$\\
S1+S2 C & \cellcolor{tabbest}$14$ & $15$--$18$ & $25$ & $35$\\
\midrule
\multicolumn{5}{@{}l}{\textit{$\epsilon_R=0.05^\circ$}}\\
S4 H & \cellcolor{tabbest}$21$ & $28$--$30$ & $37.5$ & $61$\\
S4 C & \cellcolor{tabbest}$33$ & $39$--$44$ & $52.5$ & $74$\\
\bottomrule
\end{tabularx}
\par\vspace{1.5pt}
{\scriptsize\raggedright \textit{Note:} H/C: head/chassis; D/A/E: range of the optimal-design criteria; Rand./Far: random and far-first; synthetic: $20$ candidate sets of $24$; real-world: $20$ initial pairs; \colorbox{tabbest}{\strut best} per row.\par}
\end{table}

Against the weighted estimate from the remaining observations of each session, the weakest-direction subsets of three to five observations deviate less than random subsets on most single-session sets, and from eight to ten observations random subsets are comparable or closer (\cref{app:subsetacc}).

The calibration state of \cref{subsec:status} also guides collection. On the recorded head S1, head S2, and chassis S2 sets, a loop that runs \cref{alg:pipeline} after each addition and adds the observation that most strengthens the direction indicated by the state meets both the precision and the detectability requirements with the fewest observations on all three sets, and on chassis S2 the unselected observations confirm the verified state in all $10$ replays (\cref{app:stateloop}).

\label{app:boardspace}The contribution of a single observation supports the same ordering. \cref{fig:boardspace} shows the variation of the worst-direction precision and of the translation when each of the $35$ head observations is removed from the accepted-set estimate in turn. The five observations nearest to the camera, at $1.1$ to $1.3$\,m, lower $\sigma_t$ most, whereas far observations contribute less, which is consistent with the larger acquisition cost of far-board-first ordering. No observation shifts the translation by more than $1.7$\,mm, and the shifts of the seven observations outside the accepted set overlap those of the accepted ones.

\subsection{Fault Detection and Detection-Limit Validation}
\label{subsec:expcons}
\label{sec:results:consistency}
The consistency assessment is designed to detect faulty observations at a stated level without knowledge of the noise scale, and to predict the smallest fault that it detects. We evaluate both properties and the resulting calibration state against the number of observations. Localized, independent, and shared faults are injected with $400$ trials for each single-fault and $250$ for each multi-fault condition.

\paragraph{Residual-level compensation and test levels.}
On $500$ fault-free batches of $K=8$, the within-view statistic is calibrated, with an empirical level of $0.056$ at $\alpha=0.05$ and a mean of $1.07$ for each degree of freedom, against $1.05$ under its $F$ reference. A rigid perturbation of one observation by $0.5^\circ$ and $20$\,mm does not alter this rate, as \cref{prop:residual} predicts. Local in-plane faults of $30$\,mm are instead detected with probability $0.854$ for a single correspondence and $0.998$ for a within-observation differential fault. The batch statistic~\cref{eq:batchint} also attains its level, with FARs of $0.044$ to $0.070$ for $K=4$ to $40$ after observation selection.

\begin{table*}[!t]
\centering\caption{Outlier removal on synthetic batches.}
\label{tab:robust}\footnotesize\setlength{\tabcolsep}{3pt}\renewcommand{\arraystretch}{1.12}
\begin{tabularx}{\textwidth}{@{}ll*{9}{C}@{}}
\toprule
\rowcolor{tabhead}& & \multicolumn{4}{c}{Detected [\%]} & \multicolumn{4}{c}{Falsely removed [\%]} & Rej. [\%]\\
\cmidrule(lr){3-6}\cmidrule(lr){7-10}\cmidrule(lr){11-11}
\rowcolor{tabhead}Faulty & Fault & LOO & RANSAC & Huber & \textbf{Ours} & LOO & RANSAC & Huber & \textbf{Ours} & \textbf{Ours}\\
\midrule
\multicolumn{11}{@{}l}{\textit{Noise at the nominal magnitude}}\\
$0/8$ & None & -- & -- & -- & -- & $12.5$ & $2.2$ & $2.8$ & $1.7$ & $10.0$\\
$1/8$ & Offset $30$\,mm & $62.3$ & $86.0$ & $96.0$ & $91.2$ & $5.4$ & $5.4$ & $4.6$ & $1.5$ & $12.8$\\
$1/8$ & Hole swap & $11.0$ & $100.0$ & $100.0$ & $100.0$ & $12.7$ & $2.2$ & $3.0$ & $1.9$ & $10.5$\\
$3/8$ & Indep.\ $30$\,mm & $27.7$ & $83.3$ & $94.3$ & $50.4$ & $3.4$ & $13.3$ & $6.2$ & $1.1$ & $66.8$\\
$3/8$ & Shared $30$\,mm & $17.2$ & $67.6$ & $77.9$ & $9.5$ & $9.7$ & $25.9$ & $13.3$ & $1.5$ & $84.4$\\
$5/8$ & Shared $30$\,mm & $7.8$ & $25.4$ & $14.5$ & $1.1$ & $20.3$ & $65.9$ & $72.9$ & $10.7$ & $87.2$\\
\midrule
\multicolumn{11}{@{}l}{\textit{Noise at twice the nominal standard deviation, unknown to every method}}\\
$0/8$ & None & -- & -- & -- & -- & $12.5$ & $78.3$ & $89.9$ & $1.4$ & $11.2$\\
$1/8$ & Offset $30$\,mm & $32.0$ & $96.8$ & $98.0$ & $18.5$ & $9.7$ & $76.6$ & $88.8$ & $1.5$ & $13.8$\\
$1/8$ & Offset $60$\,mm & $62.3$ & $99.5$ & $100.0$ & $91.2$ & $5.4$ & $76.2$ & $88.9$ & $1.5$ & $12.8$\\
\bottomrule
\end{tabularx}
\par\vspace{1.5pt}
{\scriptsize\raggedright \textit{Note:} $K{=}8$; $400$ trials per row ($250$ with several faults); Rej.: batch-level rejections; Indep./Shared: independent or shared offsets.\par}
\end{table*}

\paragraph{Fault detection.}
\cref{tab:robust} compares the complete procedure with the outlier-removal baselines on batches of eight observations, with the noise at its nominal magnitude and at twice its nominal standard deviation, which is not provided to any method. At the nominal magnitude and a single $30$\,mm fault, VeriCalib detects $91.2\%$ of the faults, against $62.3\%$ for LOO influence (paired McNemar test, $p{<}0.001$), $86.0\%$ for RANSAC, and $96.0\%$ for the Huber kernel. It removes the fewest fault-free observations, $1.5\%$ against $4.6$ to $5.4\%$, and attains a final error of $7.1$\,mm, against $7.3$\,mm for the Huber estimate and $11.6$\,mm for the unweighted registration. With three of eight observations displaced independently, the fixed thresholds of RANSAC and the Huber kernel, which here match the true noise magnitude, detect more faults but remove $13\%$ and $6\%$ of the fault-free observations, whereas VeriCalib holds its FRR at $1.1\%$ and reports $67\%$ of these batches as inconsistent or as requiring a validation set instead of returning a corrupted estimate. Under shared offsets, VeriCalib rejects $67$ to $87\%$ of the multi-fault batches at the batch level, against $10\%$ of the fault-free batches, and none of the baselines reports such a decision.

At twice the nominal noise, RANSAC and the Huber kernel remove $76$ to $90\%$ of the fault-free observations, whereas VeriCalib removes $1.4$ to $1.5\%$, since its thresholds follow from the scale estimated from the batch. Their detection rates then reflect the removal of most observations, whereas VeriCalib detects a $60$\,mm fault in $91.2\%$ of the trials, and its error of $13$ to $15$\,mm remains below the $21$ to $32$\,mm of the Huber estimate (\cref{tab:robust}). Every method except LOO rejects the hole swap in almost every trial, although the swap alters the error of the estimate by about $0.1$\,mm, since the within-view check detects a wrong correspondence independently of its influence. VeriCalib therefore detects moderate faults at a stated FRR without a known noise magnitude, and reports a batch state when observations cannot be removed.

\begin{figure}[!t]\centering\includegraphics[width=\linewidth]{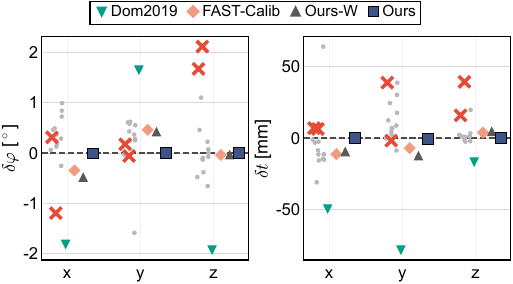}
\caption{Rotation (left) and translation (right) deviations of head S1 extrinsics from the reference (dashed): fault-free single-observation solutions (grey), the two corrupted observations (red crosses), and the joint solutions.}
\label{fig:paramdistfault}\end{figure}

\begin{figure*}[!t]\centering\includegraphics[width=\linewidth]{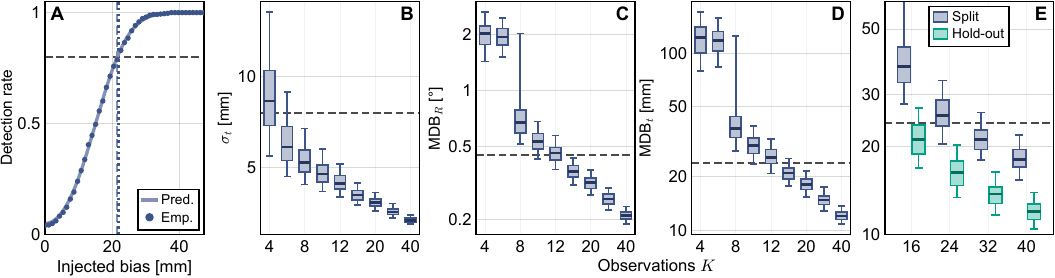}
\caption{Detection limit and calibration state on fault-free synthetic batches. \textbf{A}: detection rate of the focused test against a translation bias (dots: empirical; curve: predicted; dashed: $80\%$ power). \textbf{B}--\textbf{D}: $\sigma_t$, $\MDB_R$, and $\MDB_t$ against $K$ (dashed: requirements). \textbf{E}: hold-out against random-split $\MDB_t$. Boxes: quartiles; whiskers: $5$th to $95$th percentiles; logarithmic axes in C--E.}
\label{fig:storydirectional}\end{figure*}

\label{subsec:realfault}On head session S1, the rim-based hole centers of two observations are wrong by $165$ and $187$\,mm (\cref{fig:paramdistfault}). This is the case of \cref{fig:teaser}. In \cref{fig:paramdistfault}, the two corrupted single-observation solutions deviate by up to $2.1^\circ$ and $40$\,mm along one axis, yet on most axes they lie within the scatter of the fault-free solutions, so that they are not identifiable from the single-observation solutions alone. The official FAST-Calib detector returns four circles on none of the S1 observations. Its estimator, the unweighted registration Ours-U, deviates from $T^\star$ by $0.58^\circ$ and $13.7$\,mm, and the weighting alone by $16.2$\,mm, since the corrupted centers still form a rigid, non-degenerate pattern to which the weighting assigns a high precision. Dom2019 deviates by $3.11^\circ$ and $93.9$\,mm, as its circle detector displaces the hole centers of two detected clouds. VeriCalib removes the two observations and one further observation, and recovers $T^\star$ to within $0.01^\circ$ and $0.32$\,mm. With the nominal covariance, the Huber kernel classifies $12$ of the $14$ observations as outliers, and its estimate deviates by $2.1$\,mm. On FS-C, where five of the $16$ extractions are grossly wrong, the unweighted registration of the extracted centers has an RMSE of $229$\,mm, whereas the consensus stage excludes the five and the diagnosis two further observations. The remaining nine pass the batch test, but the range-trend test rejects them ($t=2.94$), and the state is inconsistent. On the real-world sets, with $\hat s$ from $2.8$ to $29.5$, the Huber kernel with the nominal covariance classifies $12$ of $21$ to $147$ of $147$ observations as outliers, whereas VeriCalib removes $0$ to $40$.

\paragraph{Detection limits.}\label{subsec:expstate}
\cref{fig:storydirectional}A injects a differential translation fault along the weakest validation direction into eight certification observations, with $600$ trials for each magnitude. The empirical $80\%$-power point, $21.8$\,mm, coincides with the predicted $\MDB_{\mathrm{eff}}$ of $21.5$\,mm, of which the finite-sample factor $\kappa_{\bm v}=1.39$ is a large part, and the FAR at zero fault is $0.042$. The same fault applied to all sixteen observations is detected in $5.0\%$ of the trials, as \cref{prop:unobservable} predicts for a common-mode fault.

We repeat this test on real residuals, for which the noise model is not known. Each accepted set is split into random halves, and a differential rigid fault of the predicted $\MDB_{\mathrm{eff}}$ is injected into the certification half along its worst rotation or translation direction, on $300$ splits ($100$ on S4 and S5). On the four sets of S1 and S2 and on FS-B, the fault is detected in $76$ to $85\%$ of the splits for rotation and in $73$ to $89\%$ for translation, against the nominal power of $80\%$. Without a fault, the test rejects $0$ to $9\%$ of the splits, against the nominal level of $5\%$. On S5 the DRs are $82$ to $100\%$ with no rejection without a fault, i.e., its reported covariance is conservative. On S4, the DRs are $79\%$ and $73\%$ on the head pair and $90\%$ and $73\%$ on the chassis pair, with FARs of $0$ to $8\%$, except for the head translation test, which rejects $18\%$ of the fault-free splits, since its reported translation covariance is smaller than the split-half differences (translation VR $1.7$).

\begin{table}[!t]
\centering\caption{Translation error [mm] $\downarrow$ of the cumulative ablation.}
\label{tab:ablation}\footnotesize\setlength{\tabcolsep}{3pt}\renewcommand{\arraystretch}{1.15}
\begin{tabularx}{\columnwidth}{@{}l*{4}{C}@{}}
\toprule
\rowcolor{tabhead}Method & Fault-free & Indep. & Shared & Head S1\\
\midrule
Ours-U & $4.84$ & $9.56$ & $14.09$ & $13.7$\\
Ours-W & \cellcolor{tabbest}$4.48$ & $8.85$ & \cellcolor{tabsec}$13.41$ & $16.2$\\
Ours-WC & \cellcolor{tabbest}$4.48$ & \cellcolor{tabsec}$6.93$ & $13.64$ & \cellcolor{tabsec}$0.37$\\
\textbf{Ours (full)} & \cellcolor{tabsec}$4.52$ & \cellcolor{tabbest}$5.85$ & \cellcolor{tabbest}$13.18$ & \cellcolor{tabbest}$0.32$\\
\midrule
\multicolumn{5}{@{}l}{\textit{Diagnosis}}\\
\quad Retained, cons. & $16$ & $13$ & $13$ & $12$\\
\quad Retained, full & $16$ & $12$ & $13$ & $11$\\
\quad Detected, cons. & -- & $72\%$ & $34\%$ & $2$ of $2$\\
\quad Detected, full & -- & $91\%$ & $40\%$ & $2$ of $2$\\
\bottomrule
\end{tabularx}
\par\vspace{1.5pt}
{\scriptsize\raggedright \textit{Note:} Medians of $300$ trials, $K{=}16$; Head S1: error against $T^\star$; cons.: consensus stage; Indep./Shared: independent or shared offsets; \colorbox{tabbest}{\strut best} and \colorbox{tabsec}{\strut second} per column.\par}
\end{table}
\paragraph{Calibration state.}
\cref{fig:storydirectional}B--D shows how the limiting condition shifts with $K$ over $500$ fault-free batches. Small batches are precision-limited, with $65\%$ in this state at $K=4$. From $K=6$ to $K=12$ the random-split DL exceeds its requirement in $75$ to $91\%$ of the batches ($50$ random splits for each batch). The verified fraction is $20\%$ at $K=12$, $79\%$ at $K=16$, and $89$ to $91\%$ from $K=20$ to $K=40$. The procedure reports $4.6$ to $7.8\%$ of the fault-free batches as inconsistent, below the bound $g\alpha=0.10$, which thus holds empirically also after the selection steps. With half of each batch held out before selection (\cref{fig:storydirectional}E), the hold-out limit of the final estimate is $0.57$ to $0.67$ times the random-split limit. The three validation tests reject $2.5$ to $9\%$ of the fault-free batches, and a fault of the predicted limit is detected in $76$ to $81.5\%$ of the trials against the nominal power of $80\%$. In this hold-out design, $95\%$ of the batches of $K=40$ are verified, and the held-out half confirms $88.5\%$.

On $200$ batches of $16$ estimation and $16$ validation observations, a camera noise of twice its nominal standard deviation and a $30$\,mm offset shared by $6$ estimation observations are both rejected by the batch test in its lower tail, in $100\%$ and $99.5\%$ of the batches, as \cref{prop:ratio} predicts. A LiDAR noise of twice its nominal standard deviation is instead rejected in the upper tail in $90.5\%$. The hold-out tests separate the two conditions, since they pass $84\%$ of the camera-condition batches and $86.5\%$ of the matched batches, but only $11\%$ of the shared-offset batches. With the absorbed-component scale of \cref{eq:scale}, the validation tests therefore remain at their nominal level under the camera-noise mismatch, and $89\%$ of the shared-offset batches are reported as inconsistent and $9\%$ as verified.

\begin{figure*}[!t]\centering\includegraphics[width=\textwidth]{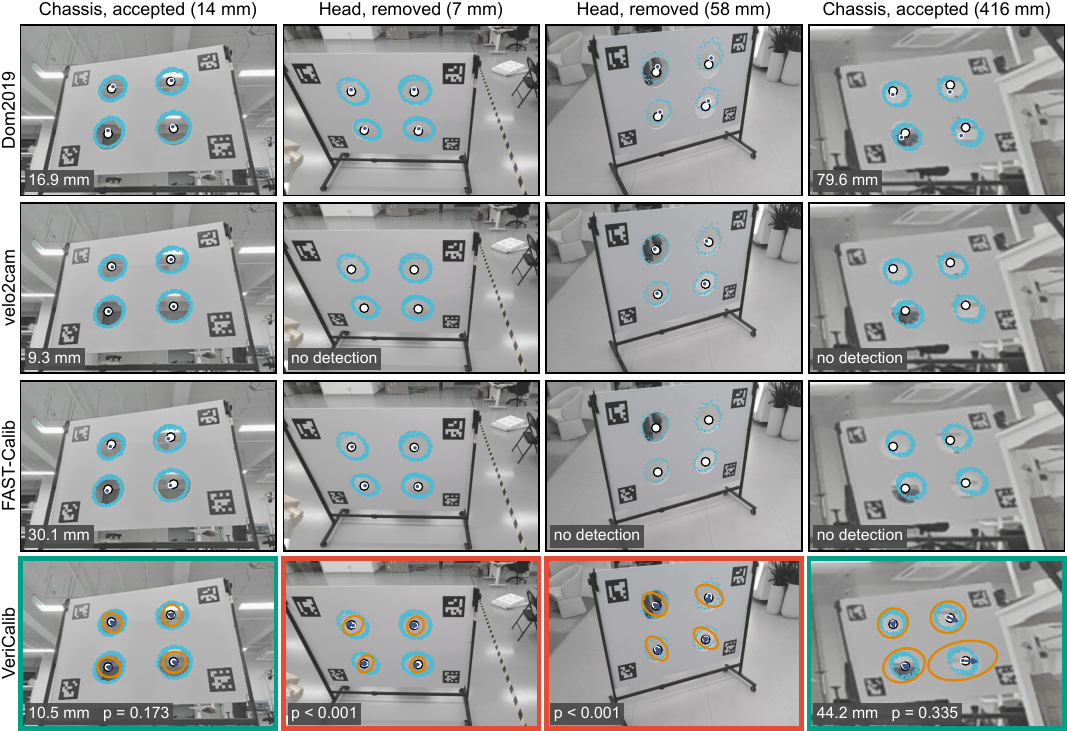}
\caption{Observation decisions on the combined robot-platform batches. Rows: joint extrinsic of each method; columns: an accepted observation, observations removed by the within-view and the cross-view check, and a far three-marker board explained by its covariance. Cyan: projected rim points; white and blue: camera and LiDAR hole centers; labels: held-out alignment error; bottom: $3\sigma$ residual ellipses ($\times5$) and smallest $p$-value; frames: teal, accepted; salmon, $p<0.05$; red, removed.}
\label{fig:status}\end{figure*}

\subsection{Ablation Study and Runtime Analysis}\label{subsec:ablation}

\cref{tab:ablation} adds the components cumulatively on $16$ synthetic observations ($300$ trials) and on head S1. On fault-free data the structured covariance reduces the median error of Ours-U from $4.84$ to $4.48$\,mm. The diagnosis removes $1.9\%$ of the fault-free observations at level $\alpha_{\mathrm{bias}}$, which alters the median error by $0.04$\,mm. Under independent offsets the covariance alone cannot recover the estimate, since a mis-extracted hole is biased and not noisy, and the diagnosis raises the DF from $72\%$ after the consensus stage to $91\%$. A shared offset, which no observation-level test attributes to individual observations, is caught at the batch level: the batch test rejects $79\%$ of the batches, all in the lower tail, at DFs of $34$ and $40\%$. With the other tests of $G_c$, $80\%$ of the batches are reported as inconsistent, $19\%$ as validation-limited because only the lower tail rejects them, and $0.7\%$ as verified.

\begin{table}[!t]
\centering\caption{Runtime [ms] on the head batch.}
\label{tab:timing}\footnotesize\setlength{\tabcolsep}{6pt}\renewcommand{\arraystretch}{1.15}
\begin{tabular}{@{}lccc@{}}
\toprule
\rowcolor{tabhead}Stage & $K{=}5$ & $K{=}14$ & $K{=}35$\\
\midrule
\multicolumn{4}{@{}l}{\textit{Estimation}}\\
\quad FAST-Calib & $22.0$ & $62.5$ & $167.9$\\
\quad \textbf{Ours} & $28.7$ & $81.6$ & $215.8$\\
\midrule
\multicolumn{4}{@{}l}{\textit{Verification (Ours)}}\\
\quad Information & $1.1$ & $2.6$ & $6.0$\\
\quad Diagnosis & $41.9$ & $279.8$ & $1663.5$\\
\quad Full & $311$ & $3259$ & $12584$\\
\bottomrule
\end{tabular}
\par\vspace{1.5pt}
{\scriptsize\raggedright \textit{Note:} Medians of $5$ single-thread runs, extraction excluded; Diagnosis: observation-level tests; Full: \cref{alg:pipeline} with $200$ detection-limit splits.\par}
\end{table}

\label{subsec:timing}\cref{tab:timing} shows that the radius-prior fits, the structured covariance, and the weighted solution require $29$ to $216$\,ms for $5$ to $35$ observations, within a factor of about $1.3$ of the FAST-Calib back end. The complete procedure, with the consensus search, the diagnosis, the state, and $200$ detection-limit splits, processes $35$ observations in less than $13$\,s on a single thread.

\subsection{Calibration States across Repeated Sessions}
\label{subsec:expreal}
\label{sec:results:realexp}
On the robot platform, the observations of one session may be sparse or internally inconsistent, and the extrinsic may vary between sessions. We examine the calibration state of each session and confirm it on an independent recording day.

\paragraph{Calibration states.}
The batch test~\cref{eq:batchint} accepts both pairs on S1 and rejects both pairs on S2 and both combined sets in the lower tail, with ratios $\hat s_{\mathrm{in}}/\hat s$ of $0.61$ and $0.39$ on S2 and $0.27$ and $0.24$ on the combined sets. Such ratios arise from rigid discrepancies between observations or from a camera covariance that is too small (\cref{prop:ratio}). On S1, the head pair is verified, with a precision of $0.066^\circ$ and $1.8$\,mm and random-split DLs of $0.39^\circ$ and $10.4$\,mm, and the chassis pair is validation-limited through its rotation limit of $0.61^\circ$. Both S2 pairs meet the precision requirement, and the lower-tail rejection directs them, as validation-limited, to a held-out validation set, which the cross-day recording below provides for the chassis pair. The combined sets also reject the range-trend test ($t=-3.53$ and $-2.57$) and are inconsistent.

\paragraph{Comparison with the baselines.}
\cref{fig:status} shows these decisions together with the baselines. The adapted velo2cam detector returns four centers on only $7$ of the $35$ head observations, and its extrinsic departs from VeriCalib by $0.73^\circ$ and $24.0$\,mm despite an in-sample RMSE of $10.6$\,mm, whereas Dom2019 departs by $2.54^\circ$ and $42.9$\,mm. The official FAST-Calib detector returns four centers on $19$ head and $24$ chassis observations, none from S1. On the head pair it differs from VeriCalib by $0.30^\circ$ and $13.9$\,mm. On the chassis pair it retains two camera hole-center estimates wrong by $296$ and $274$\,mm and departs by $1.86^\circ$ and $69.8$\,mm, with a held-out error of $49.9$\,mm against $14.5$\,mm. On held-out scenes, every extrinsic that deviates from VeriCalib by $0.7^\circ$ or more aligns significantly worse with the image or, for Dom2019, not significantly better (\cref{app:projection}). FAST-Calib reports only its RMSE, whereas VeriCalib reports both combined batches as inconsistent, which is consistent with the displacement of the LiDAR frame between S1 and S2 analyzed below.

\begin{figure}[!t]\centering\includegraphics[width=\columnwidth]{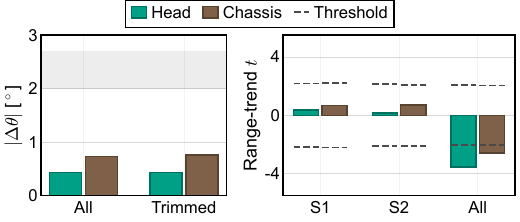}
\caption{Cross-session analysis. \textbf{Left}: rotation difference between S1 and S2 on all observations and without gross errors (Trimmed: SODs above $100$\,mm removed; grey: inter-unit range). \textbf{Right}: residual range trend with level-$\alpha$ thresholds.}
\label{fig:multisession}\end{figure}

Deviation magnitude separates neither corrupted from informative observations nor precise from imprecise ones (\cref{fig:status}). The largest SOD, $416$\,mm, belongs to a far and oblique chassis board with three detected markers. Its HAE of $44$\,mm lies within its PnP standard deviation of $83$\,mm along the depth axis ($p=0.34$), and it is accepted. Conversely, a head observation with a deviation of only $6.6$\,mm is removed, since its within-view test rejects it ($p<0.001$).

\paragraph{Cross-session and cross-day validation.}
\cref{fig:multisession} (left) shows that, after removing the observations with SODs above $100$\,mm, the equal-weight solutions of S1 and S2 differ by $0.43^\circ$ and $7.3$\,mm on the head pair and by $0.76^\circ$ and $18.1$\,mm on the chassis pair. This difference is below the $2.0$ to $2.7^\circ$ separation between units, but it exceeds the session precision by a factor of four to seven. In the LiDAR frame, the two displacements are rotations of $0.49^\circ$ and $0.84^\circ$ about axes within $2^\circ$ of each other. The composed head-to-chassis camera transform, which is invariant to a rigid displacement of the LiDAR frame (\cref{app:composition}), varies by only $0.36^\circ$ and $6.4$\,mm. A large part of the discrepancy is thus one rigid displacement of the LiDAR frame between S1 and S2, which no test within a single session detects (\cref{prop:unobservable}).

\begin{figure*}[!t]\centering\includegraphics[width=\textwidth]{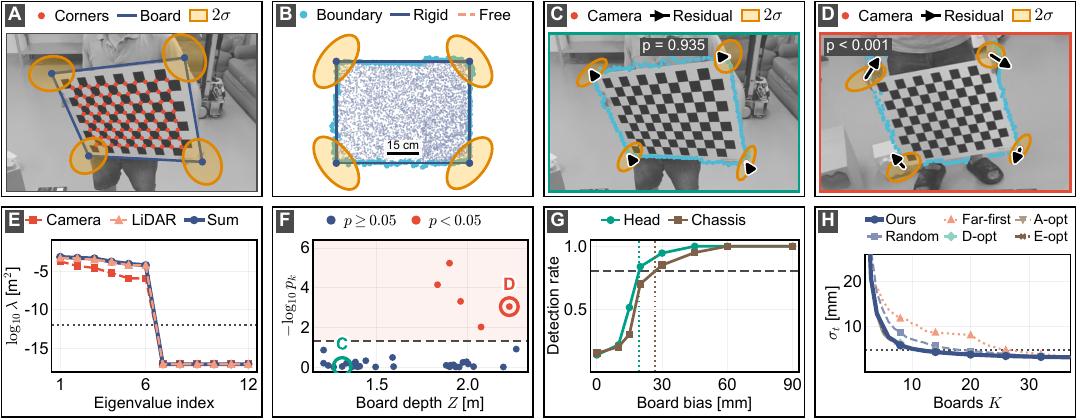}
\caption{Checkerboard target (\textbf{A}--\textbf{F}: head pair; \textbf{G}, \textbf{H}: both pairs). \textbf{A}, \textbf{B}: camera and LiDAR factors. \textbf{C}, \textbf{D}: an accepted (green) and a rejected (red) board with residual ellipses and $p$-values. \textbf{E}: eigenvalues of the residual covariance blocks. \textbf{F}: board consistency against depth. \textbf{G}: detection rate of an injected board bias. \textbf{H}: acquisition (dotted: precision target).}
\label{fig:checkerboard}\end{figure*}

\cref{tab:holdout} holds out one recording day of the chassis pair as the validation set, with the board placed and extracted again on each day. On both estimation days the batch test rejects in the lower tail ($p<0.001$ on both), which directs the procedure to the held-out day (\cref{subsec:status}). The held-out day deviates from the estimate of the other day by $0.099^\circ$ and $5.0$\,mm and by $0.070^\circ$ and $3.8$\,mm, below the hold-out DLs, and no validation test rejects ($p=0.71$ and $0.65$). Both estimates meet the precision requirement, and both DLs meet the detectable-fault requirement. Both days are therefore verified, and each is confirmed by an independent recording day. The absence of a displacement is consistent with a camera noise above its nominal value or with independent rigid errors, whose mean the scale of~\cref{eq:scale} includes in the reported covariance. A rigid offset larger than the DLs would have been detected with power $0.80$.

\begin{table}[!t]
\centering\caption{Cross-day hold-out validation of the chassis pair.}
\label{tab:holdout}\footnotesize\setlength{\tabcolsep}{2.0pt}\renewcommand{\arraystretch}{1.15}
\begin{tabularx}{\columnwidth}{@{}lc*{6}{C}cc@{}}
\toprule
\rowcolor{tabhead}& & \multicolumn{2}{c}{Precision} & \multicolumn{2}{c}{MDB} & \multicolumn{2}{c}{Deviation} & &\\
\cmidrule(lr){3-4}\cmidrule(lr){5-6}\cmidrule(lr){7-8}
\rowcolor{tabhead}Split & $|\calS|/|\calV|$ & [$^\circ$] & [mm] & [$^\circ$] & [mm] & [$^\circ$] & [mm] & $T$ & $p$\\
\midrule
S2a$\to$S2b & $8/17$ & $0.097$ & $3.1$ & $0.39$ & $12.9$ & $0.099$ & $5.0$ & $3.7$ & $0.71$\\
S2b$\to$S2a & $15/9$ & $0.084$ & $3.1$ & $0.32$ & $11.1$ & $0.070$ & $3.8$ & $4.2$ & $0.65$\\
\bottomrule
\end{tabularx}
\par\vspace{1.5pt}
{\scriptsize\raggedright \textit{Note:} S2a$\,\to\,$S2b: S2a estimates, S2b validates; $|\calS|/|\calV|$: accepted/validation observations; $T$: six times~\cref{eq:valtest} (thresholds $13.9$ and $13.3$), with $p$-value $p$.\par}
\end{table}

Session S4 records $94$ head and $101$ chassis observations of Robot-B. The accepted sets of $79$ and $89$ observations reach worst-direction precisions of $0.029^\circ$ and $0.93$\,mm and of $0.036^\circ$ and $1.11$\,mm, and random-split DLs of $0.17^\circ$ and $5.3$\,mm and of $0.21^\circ$ and $6.3$\,mm, well below the requirement. As the batch test rejects both pairs in its lower tail (ratio $0.07$), these DLs direct the procedure to a validation set (\cref{subsec:status}). On the head pair, the estimate from every second observation in recording order is verified and confirmed by the other half, which deviates by $0.056^\circ$ and $2.4$\,mm ($p=0.73$) against hold-out DLs of $0.15^\circ$ and $4.8$\,mm. With alternate observations in range order, the validation tests pass in both directions ($p=0.86$ and $0.51$), and the estimate is verified and confirmed in one of them.

\subsection{Generalization to a Checkerboard Target}
\label{sec:results:checkerboard}
We apply the framework to a standard $11\!\times\!8$ checkerboard ($45$\,mm squares) observed by the head and chassis cameras and the Mid-360 LiDAR of Robot-B (set S3), replacing only the target-specific factor (\cref{sec:checkerboard}). The PnP pose and its covariance are estimated from the $88$ inner corners and propagated to the four outer corners, and the LiDAR boundary is the outermost point of each $1$\,cm strip along every edge (\cref{fig:checkerboard}B). The free-rectangle fit yields an outer size of $0.614\!\times\!0.469$\,m, about $6$\,mm outside the printed frame, which is calibrated once and fixed for both sensors.

The corner correspondence search retains $37$ head and $20$ chassis boards, and on every board the projected residual covariance has rank six (\cref{fig:checkerboard}E). The complete procedure retains $30$ and $17$ boards, and it reports the head batch as inconsistent through the range-trend test ($t=-3.12$) and the chassis batch as precision-limited. On a fixed eight-board simulation, the cross-subset test and the cross-view board test reject $4.0\%$ and $4.8\%$ of the fault-free batches. On the recorded boards, a rigid-board translation injected into each board is detected with $80\%$ probability from $19.4$\,mm on the head and $26.7$\,mm on the chassis (\cref{fig:checkerboard}G). Weakest-direction ordering reaches $1.5$ times the full-set precision in $12$ head and $6$ chassis boards, comparable to the design criteria, against $19$ and $11$ for random and $27$ and $18$ for far-board-first ordering (\cref{fig:checkerboard}H). Therefore, the weighted estimator, the directional information, and the batch tests transfer to the checkerboard, which detects a faulty board through the cross-view test.

\subsection{Accuracy and Fault Detection against a Stereo Reference}
\label{subsec:expstereo}
All preceding real-world references are target-based. The handheld platform of S5 provides a reference independent of the LiDAR: its stereo extrinsic is calibrated from AprilGrid images (Kalibr, $117.6$\,mm baseline) without LiDAR data, and the composition $\hatT_{10}=\hatT_{1}\hatT_{0}^{-1}$ of the two camera-LiDAR estimates is compared with it. As a rigid LiDAR bias cancels in the composition (\cref{app:composition}), the comparison examines the camera-side errors of both estimates. The platform observes the stationary four-hole target during manual motion, with $260$ synchronized single scans over $26$\,s, from which extraction yields $147$ and $92$ observations for the two cameras, and the complete procedure calibrates each pair.

\begin{figure}[!t]
\centering
\includegraphics[width=\linewidth]{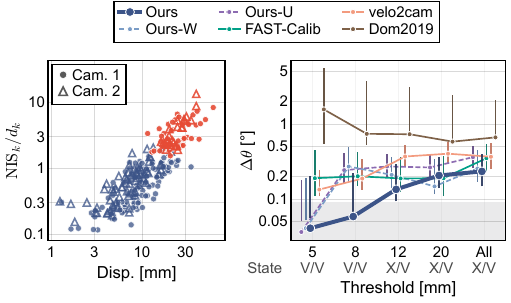}
\caption{Stereo check on S5. \textbf{Left}: $\mathrm{NIS}_k/d_k$ against the predicted board displacement (red: excluded). \textbf{Right}: rotation discrepancy to the stereo reference for each displacement threshold, with states, $10$th to $90$th bootstrap percentiles, and reference uncertainty (grey).}
\label{fig:ousterflir}
\end{figure}

On the accepted observations, the composed VeriCalib transform differs from the stereo reference by $0.23^\circ$ and $3.0$\,mm, about $1.2$ times its predicted worst-direction rotation standard deviation of $0.19^\circ$ (\cref{fig:ousterflir}, right). The official FAST-Calib, velo2cam, and Dom2019 differ by $0.35^\circ$ and $6.5$\,mm, $0.37^\circ$ and $7.0$\,mm, and $0.66^\circ$ and $12.1$\,mm, and the unweighted estimate on the accepted observations by $0.34^\circ$ and $4.4$\,mm. A bootstrap over the observations yields $10$th to $90$th percentile rotation discrepancies of $0.16^\circ$ to $0.40^\circ$ for VeriCalib and $0.23^\circ$ to $0.53^\circ$ for FAST-Calib.

The recording also contains a rigid observation error outside the static-scene model. The spinning LiDAR samples the board about $67$\,ms after the image within its $100$\,ms sweep, so that the board moves between the two samples, by a predicted $11$ and $9$\,mm in the median and up to $61$ and $39$\,mm, as obtained from the IMU rotation and the camera board pose. This displacement moves the four centers of an observation together, which is the error that the extrinsic absorbs at the residual level. The within-view statistic accordingly increases only weakly with the displacement (Spearman $0.14$ and $0.25$) and rejects no observation below $20$\,mm, as \cref{prop:residual} predicts. The cross-view statistic, evaluated at the estimate of the low-motion observations, increases with the displacement (Spearman $0.89$ and $0.91$) and, at the $1\%$ level, rejects $3\%$ of the observations below $5$\,mm and all observations above $20$\,mm. At the batch level, the predicted displacement explains $81$ and $84\%$ of the residual energy, and the normalized residual increases with it (Pearson $\rho=0.82$ and $0.88$, \cref{fig:ousterflir}, left). The procedure accepts $107$ of $147$ and $67$ of $92$ observations, and the batch test rejects both pairs in the lower tail with ratios $\hat s_{\mathrm{in}}/\hat s$ of $0.20$ and $0.34$ ($p<10^{-22}$), the signature of a rigid error in the absorbed subspace (\cref{prop:ratio}). The first pair is inconsistent, as its two competing consensus sets were recorded during opposite horizontal motion, and the second is validation-limited.

Restricting the estimation to the observations whose predicted displacement is below a threshold of the fixed grid of $2$, $3$, $5$, $8$, $12$, and $20$\,mm suppresses this error (\cref{fig:ousterflir}, right), where $2$ and $3$\,mm leave fewer than eight observations. At $5$\,mm ($20$ and $18$ observations), $\hat s$ decreases from $29.5$ and $17.9$ to $5.6$ and $4.6$, both pairs pass the batch test with ratios of $1.01$ and $1.33$, and the first pair becomes validation-limited, as the detection limits of $0.69^\circ$ to $0.81^\circ$ of these small sets exceed the requirement. The discrepancy decreases to $0.04^\circ$ and $1.9$\,mm, within the uncertainty of the reference itself, whose Kalibr covariance implies a root-mean-square rotation error of $0.09^\circ$ and which differs by $0.12^\circ$ from a second Kalibr calibration of the platform. On the same subset, FAST-Calib, velo2cam, and Dom2019 differ by $0.19^\circ$ and $4.9$\,mm, $0.13^\circ$ and $2.4$\,mm, and $1.57^\circ$ and $12.3$\,mm. At $8$\,mm, the consistency assessment excludes seven observations with predicted displacements of $6.6$ to $7.8$\,mm and reduces the discrepancy to $0.06^\circ$, against $0.24^\circ$ for Ours-U and $0.27^\circ$ for the weighting alone on all observations below $8$\,mm.

\section{Discussion}
\label{sec:discussion}
\paragraph{What does the validation-limited state indicate?}
It marks a calibration that meets its precision requirement before its detection limits meet theirs: a set that just meets the precision requirement needs about $3.5$ times its observations to meet the detectability requirement under $(\delta_R,\delta_t)=3(\epsilon_R,\epsilon_t)$ (\cref{subsec:mdb,app:focused}). The real-world sets are consistent with this prediction: the chassis pair of S1 meets the precision requirement but not the detectability requirement, whereas on S4, whose $79$ and $89$ accepted observations reach detection limits well below the requirement, the head-pair estimate from half of the observations is verified and confirmed by the other half. The state also indicates the next action, namely validation observations along the largest detection-limit direction (\cref{subsec:status}), and an independent recording day provided this set and verified the chassis pair of S2.

\paragraph{What does the analysis imply for target placement?}
The rotation precision follows from the orientation diversity and the lateral spread of the boards, and a common increase of their distance leaves it invariant for position-independent covariances (\cref{cor:standoff}). Distant boards lose lateral translation information and, under the range-incidence noise model, also have a larger LiDAR noise. Target placement should therefore vary the board orientation and cover the field of view before it increases the distance, and the weakest-direction score indicates the block that the next observation strengthens. When the detectable-fault requirement applies, the collection should also reserve a validation set, whose information enters the detection limit through \cref{eq:mdbeff}.

\paragraph{When does the weighting not improve the estimate?}
By \cref{thm:gain}, the improvement grows with the heterogeneity of the observation covariances and requires a shape close to the true one. The weighting therefore does not improve the estimate when the covariance reduction predicted by \cref{eq:gain} vanishes, as on S5, where they differ by at most $0.04^\circ$ and $1$\,mm, whereas on chassis S2 the largest predicted difference coincides with the largest gain (\cref{tab:shape}). A fault on a precise observation also displaces the weighted estimate more than the unweighted one (\cref{rem:scope}), as on head S1, which is why the weighting is always applied together with the consistency assessment.

\paragraph{Why is the noise scale estimated?}
On the real-world sets, the residual variance exceeds that of the nominal model by the estimated scale $\hat s$ of $2.8$ to $29.5$. In simulation, thresholds fixed under the nominal magnitude then remove most fault-free observations at twice the nominal noise (\cref{tab:robust}), whereas thresholds referred to an independent scale estimate retain their level, and the reported precision and detection limits scale with $\hat s$ by \cref{eq:scaleeffect}.

\paragraph{Generality.}
The framework requires an observation residual, its supported covariance, and the local extrinsic Jacobian. The checkerboard instantiation shows that all three are available for a target whose modality blocks are both shared-pose, and that transfer also requires checking which consistency statistics retain degrees of freedom.

\paragraph{Scope.}
The verification is internal by construction. A bias common to all observations, such as the extrinsic-equivalent part of an intrinsic error, is indistinguishable from a different extrinsic at any $K$ (\cref{prop:unobservable}), and the calibration state establishes consistency, directional precision, and detectability with respect to the observations of the batch under \cref{ass:model}. Where the batch cannot attribute a rigid discrepancy, the procedure requests a held-out validation set instead of accepting the batch (\cref{prop:ratio}).

\paragraph{Future work.}
The two-level compensation is expected to arise wherever a rigid pose is estimated from a few structured observations, as in hand-eye, multi-LiDAR, and camera-IMU calibration and in registration-based localization, since the pose absorbs the rigid part of each observation error, and its rotation and translation compensate each other. Future work will derive the structured covariance, directional precision, and consistency checks for these problems.

\section{Conclusion}
\label{sec:conclusion}
We formulated the acceptance of a target-based LiDAR-camera calibration as a verifiable estimation problem. The two-level extrinsic compensation explains why a fitted residual reveals neither an observation inconsistency nor the reliability of the extrinsic along each direction, and VeriCalib replaces the inspection of this residual with a calibration state that reports consistency, directional precision, and detection limits at a stated significance level and directs data collection toward the weakest direction. We expect verifiable estimation to become a standard requirement for robotic pose estimation, derived from the structure of the measurement model.

\begin{acks}
The authors thank Shenzhen Fortsense Co., Ltd.\ for providing part of the flash-LiDAR data of this work. The authors also acknowledge the use of Claude (Anthropic) and GPT-6 Astra (OpenAI) for manuscript polishing and for checking the consistency of the equations; the authors reviewed and revised all resulting text and take full responsibility for the content of this article.
\end{acks}

\begin{dci}
The authors declared no potential conflicts of interest with respect to the research, authorship, and/or publication of this article.
\end{dci}

\begin{funding}
The authors received no funding support for the research.
\end{funding}

\bibliographystyle{SageH}
\bibliography{refs}

\appendix
\section{Proofs and Derivations}
\label{app:proofs}\label{app:derivations}
This appendix proves the results of \crefrange{sec:formulation}{sec:method:obs}, the invariance of the stereo composition, and derives the main-text covariance and information expressions, including those of the checkerboard target.

\paragraph{Proof of \cref{prop:unobservable}.}
For an observation subset $\mathcal{A}$, the linearized weighted estimate minimizes $\sum_{k\in\mathcal{A}}(\bm r_k+\bm J_k\delta\bm\xi_0+\bm J_k\delta\bm\xi)^\top(\bm\Sigma^r_k)^{-1}(\bm r_k+\bm J_k\delta\bm\xi_0+\bm J_k\delta\bm\xi)$. The substitution $\delta\bm\xi'=\delta\bm\xi+\delta\bm\xi_0$ reduces it to the problem without bias, and hence $\delta\hat{\bm\xi}_{\mathcal{A}}=\delta\hat{\bm\xi}'_{\mathcal{A}}-\delta\bm\xi_0$ for every $\mathcal{A}$, and the residual of every observation $k$, inside or outside $\mathcal{A}$, is $\bm r_k+\bm J_k\delta\hat{\bm\xi}'_{\mathcal{A}}$, its value without bias. Fitted residuals ($\mathcal{A}=\calS$), cross-validated residuals ($\mathcal{A}=\calS\setminus\{k\}$), and cross-subset differences $\delta\hat{\bm\xi}_A-\delta\hat{\bm\xi}_B$ are therefore invariant, and so is every statistic of \cref{sec:method:tests} built from them, since the information matrices do not depend on the residuals. \cref{app:composition} states the corresponding result across two sensor pairs.

\paragraph{Invariance of the stereo composition to common-mode bias.}\label{app:composition}
As the stereo check composes two camera-LiDAR estimates that share one LiDAR, a bias of that LiDAR must be shown to cancel. Let the LiDAR data of both pairs be perturbed by one common rigid map $\bm T_\Delta\in\SE(3)$, $\bm p\mapsto\bm T_\Delta\bm p$ for every support point and hole center, which is the extrinsic-equivalent common-mode class of the LiDAR. Every residual depends on the LiDAR data only through $\bm T\bm p$, since the LiDAR-frame circle and plane fits are invariant under a rigid map and their outputs transform with $\bm T_\Delta$. The perturbed cost and its minimizer therefore satisfy
\begin{equation}
\left\{
\begin{aligned}
&\mathcal C_i'(\bm T)=\mathcal C_i(\bm T\bm T_\Delta),\qquad \hatT_i'=\hatT_i\bm T_\Delta^{-1},\\
&\hatT_1'(\hatT_0')^{-1}=\hatT_1\bm T_\Delta^{-1}\bm T_\Delta\hatT_0^{-1}=\hatT_1\hatT_0^{-1},
\end{aligned}
\right.
\label{eq:cmcancel}
\end{equation}
so the composition is invariant. In the left-perturbation parametrization the perturbed cost is the same function of the perturbation, $\mathcal C_i'(\Exp(\delta\bm\xi)\hatT_i')=\mathcal C_i(\Exp(\delta\bm\xi)\hatT_i)$, and the information matrices and the composed covariance $\bm S$ are invariant. The result is exact, requires no linearization, and holds for any weighting. The composed check of \cref{subsec:expstereo} therefore is insensitive to extrinsic-equivalent LiDAR bias and examines the differential camera errors.

\paragraph{Proof of \cref{prop:residual}.}
With $\bar{\bm J}_k$ of full column rank, the minimizer of $\lVert\bar{\bm b}_k+\bar{\bm J}_k\delta\bm\xi\rVert$ is $\delta\bm\xi^\star=-(\bar{\bm J}_k^\top\bar{\bm J}_k)^{-1}\bar{\bm J}_k^\top\bar{\bm b}_k$, and the minimized residual $(\bm I-\bm P_k)\bar{\bm b}_k$, with $\bm P_k$ the orthogonal projector onto $\rangeop(\bar{\bm J}_k)$, retains only the component in $\nullop(\bar{\bm J}_k^\top)$. Since $\bm N_k^\top\bm L_k\bm J_k=\bm 0$ by \cref{eq:inttest}, $\bm N_k^\top\bm L_k(\bm r_k+\bm J_k\delta\bm\xi)=\bm e^{\mathrm{in}}_k$ for every $\delta\bm\xi$. Under \cref{ass:model} the whitened residual at the true extrinsic has covariance $\bm L_k\bm\Sigma^r_k\bm L_k^\top=\bm I_{d_k}$, hence $\bm e^{\mathrm{in}}_k$ has covariance $\bm N_k^\top\bm N_k=\bm I_{d_k-6}$, and an error adds only $\bm N_k^\top\bar{\bm b}_k$, which is zero for an error in $\rangeop(\bar{\bm J}_k)$.

\paragraph{Proof of \cref{prop:parameter}.}
With $\bm H\succ\bm 0$, the block $\bm C$ is positive definite, which implies $\bm B\bm C^{-1}\bm B^\top\succeq\bm 0$ and $\LamR=\bm A-\bm B\bm C^{-1}\bm B^\top\preceq\bm A$. Equality requires $\bm x^\top\bm B\bm C^{-1}\bm B^\top\bm x=\lVert\bm C^{-1/2}\bm B^\top\bm x\rVert^2=0$ for every $\bm x$, that is $\bm B=\bm 0$. The translation block follows from the same argument applied to $\LamT$. By the block inversion of $\bm H$, $\LamR^{-1}=[\bm H^{-1}]_{\phi\phi}$ is the rotation covariance with the translation estimated jointly, whereas $\bm A^{-1}$ is the rotation covariance with the translation held fixed, and their difference arises from $\bm B\bm C^{-1}\bm B^\top$ alone.

\paragraph{Proof of \cref{prop:scale}.}
In the whitened coordinates of the shape $\bm\Sigma^r_{0,k}$, the weighted estimate is $\delta\hat{\bm\xi}=-\bm H_0^{-1}\sum_k\bar{\bm J}_k^\top\bar{\bm r}_k$, and $\bar{\bm J}_k^\top\bar{\bm r}_k=\bar{\bm J}_k^\top\bm P_k\bar{\bm r}_k$ with the projector $\bm P_k=\bm V_k\bm V_k^\top$ onto $\rangeop(\bar{\bm J}_k)$. Since $\operatorname{Cov}(\bm P_k\bar{\bm r}_k)=s\,\bm P_k$ and $\bm H_0=\sum_k\bar{\bm J}_k^\top\bar{\bm J}_k$, the covariance of the estimate is $s\,\bm H_0^{-1}$. With $\bm\Omega_k\succ\bm0$ the covariance of the internal component, the inverse of the assumed residual covariance is $\bm W_k=s^{-1}\bm P_k+\bm N_k\bm\Omega_k^{-1}\bm N_k^\top$, which satisfies $\bar{\bm J}_k^\top\bm W_k=s^{-1}\bar{\bm J}_k^\top$. The generalized least-squares estimate under $\bm W_k$ therefore has the same normal equations, and the weighted estimate is the best linear unbiased one. Finally, the stacked absorbed components $\bm V_k^\top\bar{\bm r}_k$ follow a linear model with the design $\bm V_k^\top\bar{\bm J}_k$ of rank six and the covariance $s\,\bm I$, whose residual sum of squares $\sum_k\lVert\bm e^{\mathrm{ab}}_k\rVert^2$ has the expectation $s\,d_{\mathrm{ab}}$.

\paragraph{Proof of \cref{thm:gain}.}\label{app:precision}\label{app:gain}
We prove the three parts of \cref{thm:gain} in turn. All quantities are stacked over the observations and restricted to the support of $\bm\Sigma$, which leaves every estimator invariant, the unweighted one included, since the Jacobian of a four-hole observation has no component along the removed saddle mode (\cref{subsec:residcov}).

We first establish the lower bound and the common-factor case of part (b), on which the equality condition of part (a) relies. Propagating $\operatorname{Cov}(\bm r_k)=\bm\Sigma^r_k$ through the linear estimator~\cref{eq:estcov} weighted by $(\tilde{\bm\Sigma}^r_k)^{-1}$, with residuals independent across observations, yields the sandwich form~\cref{eq:sandwich}. Since the estimator is unbiased for the linearized model~\cref{eq:obsmodel} with $\bm b_k=\bm 0$, the Gauss-Markov theorem applies in the Loewner order,
\begin{equation}
\operatorname{Cov}(\delta\hat{\bm\xi})\succeq\bm H^{-1},
\label{eq:gmbound}
\end{equation}
with equality exactly when the weighted estimator coincides with the correctly weighted one, that is, when a matrix $\bm M\in\R^{6\times6}$ exists such that the stacked matrices satisfy
\begin{equation}
\bm\Sigma\tilde{\bm\Sigma}^{-1}\bm J=\bm J\bm M
\label{eq:gmequality}
\end{equation}
\citep{zyskind1967canonical,rao1967least}. Two weight errors satisfy~\cref{eq:gmequality}. A common factor $\tilde{\bm\Sigma}^r_k=c\,\bm\Sigma^r_k$ cancels in the weighted normal equations,
\begin{equation}
\begin{gathered}
\tilde{\bm H}=\bm H/c,\\
\tilde{\bm H}^{-1}\bm J_k^\top(\tilde{\bm\Sigma}^r_k)^{-1}=\bm H^{-1}\bm J_k^\top(\bm\Sigma^r_k)^{-1},
\end{gathered}
\label{eq:commonfactor}
\end{equation}
so the estimate is invariant and~\cref{eq:sandwich} returns $\bm H^{-1}$. A perturbation confined to the internal subspace,
\begin{equation}
(\tilde{\bm\Sigma}^r_k)^{-1}=(\bm\Sigma^r_k)^{-1}+\bm L_k^\top\bm N_k\bm Z\bm N_k^\top\bm L_k
\label{eq:nullweight}
\end{equation}
with any symmetric $\bm Z$, leaves the normal-equation terms $\bm J_k^\top(\tilde{\bm\Sigma}^r_k)^{-1}\bm J_k$ and $\bm J_k^\top(\tilde{\bm\Sigma}^r_k)^{-1}\bm r_k$ invariant, since $\bm N_k^\top\bm L_k\bm J_k=\bm0$ by~\cref{eq:inttest}. A factor that differs between observations alters the relative weights and in general breaks the equality. Since the validation information also scales as $1/s$, the MDB of \cref{eq:mdb} scales as $\sqrt s$ (\cref{eq:scaleeffect}).

For part (a), with $\bm\varepsilon$ the stacked noise of~\cref{eq:obsmodel}, we write the two errors as $\delta\hat{\bm\xi}=-\bm E\bm\varepsilon$ and $\delta\hat{\bm\xi}_{\mathrm u}=-\bm E_{\mathrm u}\bm\varepsilon$, with $\bm E=\bm H^{-1}\bm J^\top\bm\Sigma^{-1}$ and $\bm E_{\mathrm u}=(\bm J^\top\bm J)^{-1}\bm J^\top$. Since $\bm E_{\mathrm u}\bm\Sigma\bm E^\top=\bm E\bm\Sigma\bm E^\top=\bm H^{-1}$, the difference of the two estimates is uncorrelated with the weighted one,
\begin{equation}
\operatorname{Cov}\bigl(\delta\hat{\bm\xi}_{\mathrm u}-\delta\hat{\bm\xi},\,\delta\hat{\bm\xi}\bigr)=(\bm E_{\mathrm u}-\bm E)\bm\Sigma\bm E^\top=\bm 0,
\label{eq:uncorrelated}
\end{equation}
which is the property of an efficient estimator on which the specification test of \citet{hausman1978specification} relies. Therefore, $\bm V_{\mathrm u}=\bm H^{-1}+\operatorname{Cov}(\delta\hat{\bm\xi}_{\mathrm u}-\delta\hat{\bm\xi})$, which proves~\cref{eq:gain} and recovers the optimality of generalized least squares~\citep{aitken1935least}. The difference vanishes exactly when $\bm E_{\mathrm u}=\bm E$, which is the condition~\cref{eq:gmequality} with $\tilde{\bm\Sigma}=\bm I$, i.e., $\rangeop(\bm\Sigma\bm J)\subseteq\rangeop(\bm J)$~\citep{zyskind1967canonical}.

For the upper bound of part (b), let $\bm X=\tilde{\bm\Sigma}^{-1/2}\bm J$ and $\bm S=\tilde{\bm\Sigma}^{-1/2}\bm\Sigma\tilde{\bm\Sigma}^{-1/2}$, so that $\bm V(\tilde{\bm\Sigma})=(\bm X^\top\bm X)^{-1}\bm X^\top\bm S\bm X(\bm X^\top\bm X)^{-1}$ and $\bm H=\bm X^\top\bm S^{-1}\bm X$. Since $\bm S$ is block diagonal, its eigenvalues are the generalized eigenvalues of the pairs $(\bm\Sigma^r_k,\tilde{\bm\Sigma}^r_k)$ and lie in $[\mu,\omega\mu]$. The upper bound of~\cref{eq:kantorovich} is then the matrix Kantorovich inequality of \citet{marshall1990matrix}.

For part (c), let $a=\bm v^\top\delta\hat{\bm\xi}$ and $g=\bm v^\top(\delta\hat{\bm\xi}_{\mathrm u}-\delta\hat{\bm\xi})$. By~\cref{eq:uncorrelated} and the Gaussian assumption, $a$ and $g$ are independent with $\operatorname{Var}(g)/\operatorname{Var}(a)=\vartheta_{\bm v}-1$. The event $\lvert a\rvert<\lvert a+g\rvert$ is equivalent to $g\,(g+2a)>0$. For $a$ and $g$ divided by their standard deviations, this event is the union of two opposite wedges, each with an opening angle of $\pi/2+\arctan(\tfrac12\sqrt{\vartheta_{\bm v}-1})$, and the rotational symmetry of the standard normal distribution yields~\cref{eq:winprob}.

For the added observation of \cref{subsec:shapescale}, $\bm H_{K+1}\succeq\bm 0$ implies $\bm H+\bm H_{K+1}\succeq\bm H$, and inversion reverses the Loewner order. For the unweighted mean of $K$ scalar measurements with variances $\sigma_i^2$, the variance is $\sum_i\sigma_i^2/K^2$. With an added measurement of variance $\sigma^2$, the variance becomes $(\sum_i\sigma_i^2+\sigma^2)/(K+1)^2$, which is larger whenever $\sigma^2>(2+1/K)\sum_i\sigma_i^2/K$.

\paragraph{Derivation of the radius-prior covariance~\cref{eq:radiuscov}.}\label{app:radiusprior}
The derivation shows where the center-radius trade-off of a short arc affects the covariance and how the prior removes it. Boundary points are modeled as $\bm y_l=\bm c+\varrho\,\bm u(\phi_l)+\bm\varepsilon_l$ with $\bm\varepsilon_l\sim\mathcal N(\bm0,\sigma_L^2\bm I_2)$, and the radial residual $e_l=\lVert\bm y_l-\bm c\rVert-\varrho$ has the gradients $-\hat{\bm u}_l^\top$ in $\bm c$ and $-1$ in $\varrho$. With the prior $\varrho\sim\mathcal N(r,\sigma_r^2)$, the joint information of the center and the radius is therefore~\cref{eq:circinfo} with $N_{\mathrm{eff}}=N$, that is, before the effective-count correction. Down-weighting the $N$ correlated boundary points to $N_{\mathrm{eff}}$ effective points scales the data block by $N_{\mathrm{eff}}/N$, and the Schur complement of~\cref{eq:circinfo} onto $\bm c$ is exactly~\cref{eq:radiuscov}. The two limits of the prior standard deviation, with $\bm\Gamma_{kj}=\sum_l\hat{\bm u}_l\hat{\bm u}_l^\top$, $\bm\gamma_{kj}=\sum_l\hat{\bm u}_l$, $\alpha_{r,kj}\to1/N$, and $\alpha_{r,kj}\to0$, yield the free-radius and the known-radius covariances
\begin{equation}
\bm\Sigma^{\mathtt{L},\mathrm{pl}}_{kj}\to\frac{N\sigma_L^2}{N_{\mathrm{eff}}}
\begin{cases}
\bigl(\bm\Gamma_{kj}-\tfrac1N\bm\gamma_{kj}\bm\gamma_{kj}^\top\bigr)^{-1}, & \sigma_r\to\infty,\\[2pt]
\bm\Gamma_{kj}^{-1}, & \sigma_r\to0,
\end{cases}
\label{eq:radiuslimits}
\end{equation}
respectively. On a partial arc, $\sum_l\hat{\bm u}_l$ is large, which increases the coupling term of~\cref{eq:radiuscov}, and the prior contracts the center covariance along the radial (arc-symmetry) axis, where the center-radius trade-off of the free fit is concentrated. The arc-tangential direction remains weak for short arcs, since no prior on the radius constrains it. \cref{eq:radiuscov} is the regularized (posterior) covariance of the center under the prior. For a fixed radius equal to the nominal value, it bounds the sampling covariance of the regularized center from above along the radial axis. The two coincide on a complete arc and diverge as the angular coverage decreases, and this direction-dependent margin is conservative and is not removed by the global scale of \cref{subsec:scaleest}.

\paragraph{Marginal information under rank deficiency.}\label{app:rankdef}
The decoupling of~\cref{eq:lamR} is stated through an inverse of the translation block, which a rank-deficient geometry does not provide. A variational form removes that restriction. For $\bm H\succeq0$,
\begin{equation}
\bm v^\top\LamR\bm v=\min_{\bm u}\,[\bm v;\bm u]^\top\bm H\,[\bm v;\bm u]
=\bm v^\top\bigl(\bm A-\bm B\bm C^{\dagger}\bm B^\top\bigr)\bm v
\label{eq:schurvar}
\end{equation}
with the Moore-Penrose inverse. The minimum is always attained, since $\bm H\succeq0$ implies $\rangeop(\bm B^\top)\subseteq\rangeop(\bm C)$, and~\cref{eq:schurvar} reduces to~\cref{eq:lamR} when $\bm C\succ0$. The minimization also determines the interpretation of every decoupled quantity, since the nuisance block is co-adjusted to compensate the perturbation optimally, and a decoupled directional precision is the worst case over that compensation, the conservative reading adopted throughout.

\paragraph{Derivation of the multi-observation information~\cref{eq:multi}.}\label{app:multi}
The decomposition expresses rotation information as an inertia, in which orientation diversity and board spread are the two geometric quantities that determine it. With equal isotropic weights over the $n=4K$ hole centers, the blocks of~\cref{eq:localinfo} are
\begin{equation}
\left\{
\begin{aligned}
&\bm A=\tfrac{1}{\sigma^2}\textstyle\sum_i\bigl(\lVert\bm x_i\rVert^2\bm I-\bm x_i\bm x_i^\top\bigr),\\
&\bm B=\tfrac{1}{\sigma^2}\textstyle\sum_i\skewop{\bm x_i},\qquad \bm C=\tfrac{n}{\sigma^2}\bm I_3 .
\end{aligned}
\right.
\label{eq:isoblocks}
\end{equation}
The coupling term is then the inertia of the centroid $\bar{\bm x}$,
\begin{equation}
\bm B\bm C^{-1}\bm B^\top=\tfrac{n}{\sigma^2}\bigl(\lVert\bar{\bm x}\rVert^2\bm I-\bar{\bm x}\bar{\bm x}^\top\bigr),
\label{eq:paraxis}
\end{equation}
so, by the parallel-axis theorem, $\LamR$ is the point-cloud inertia about the centroid, and the standoff of the whole set contains no rotation information once translation absorbs it. Substituting $\bm x_{kj}=\bm c_k+\bm Q_k\bm a_j$ and applying $\sum_j\bm a_j=\bm0$, which eliminates all cross terms and yields $\bar{\bm x}=\bar{\bm c}$, splits the centered inertia into the rotated hole-layout inertia and the board-center-cloud inertia, which is~\cref{eq:multi}.

\paragraph{Checkerboard corner covariance.}\label{app:cornercov}\label{subsec:cornercov}
As the LiDAR does not observe the printed pattern, the four cross-modal correspondences are the outer corners of the board, recovered from the scanned boundary. The edge-point noise follows the range-incidence model~\cref{eq:lidarnoise}. Whereas the circular-hole construction fits one circle to each hole, the checkerboard construction fits one rigid rectangle of known outer size $(w_b,h_b)$ to the boundary points of the whole board. Let $\bm U_k=[\bm t_{k1}\;\bm t_{k2}]$ be the fitted plane basis and $(\bm b_i)$ the in-plane boundary points. The rectangle has an in-plane pose $\bm g=(\bm\tau,\psi)$ with center $\bm\tau$ and yaw $\psi$. Its four edges have in-plane outward normals $\bm m_e(\psi)$ and known half-extents $\eta_e\in\{w_b/2,h_b/2\}$. The signed distance of a boundary point assigned to edge $e$ is $d_i(\bm g)=\bm m_e(\psi)^\top(\bm b_i-\bm\tau)-\eta_e$, with the known half-extent $\eta_e$ held fixed as the direct analog of the known radius in~\cref{eq:radiuscov}. Its gradient in $\bm g$ is $\bm j_i=[-\bm m_e^\top\;\;(\bm m_e^\perp)^\top(\bm b_i-\bm\tau)]$, where $\bm m_e^\perp=\partial_\psi\bm m_e$. With the same effective-count correction $N_{\mathrm{eff}}/N$ of~\cref{eq:neff} that down-weights correlated accumulated points, the rigid-pose information and covariance are
\begin{equation}
\bm\Lambda_g=\frac{N_{\mathrm{eff}}}{N\sigma_L^2}\sum_i\bm j_i\bm j_i^\top,\qquad
\bm\Sigma_g=\bm\Lambda_g^{-1}.
\label{eq:rectinfo}
\end{equation}
Fixing the two half-extents removes the size directions, which are weak for a free rectangle on a partially scanned board.

Since the four corners share the single pose $\bm g$, their in-plane positions $\bm q_j=\bm\tau+\bm R_\psi\bm a_j$ have the correlated covariance
\begin{equation}
\bm\Sigma^{\mathtt{L},\mathrm{pl}}_{k,jl}=\bm G_j\bm\Sigma_g\bm G_l^\top ,\qquad
\bm G_j=[\,\bm I_2\;\;\bm R_\psi\bm J_\perp\bm a_j\,],
\label{eq:cornercov}
\end{equation}
with $\bm R_\psi$ the in-plane rotation and $\bm J_\perp=\left[\begin{smallmatrix}0&-1\\1&0\end{smallmatrix}\right]$. Stacked over four corners, $(\bm\Sigma^{\mathtt{L},\mathrm{pl}}_{k,jl})_{jl}$ has rank three and is the LiDAR counterpart of the rank-six shared-pose camera block, with the rigid in-plane board pose replacing the camera board pose.

The fitted plane contributes three further shared parameters, which describe its motion normal to the board. For boundary coordinates $(u_i,v_i)$, let $\bm X_i=[1\;u_i\;v_i]$ and
\begin{equation}
\bm\Sigma_\pi=\sigma_\perp^2\frac{N}{N_{\mathrm{eff}}}(\bm X^\top\bm X)^{-1},\qquad
\bm P_j=\bm n_k[1\;u_j\;v_j],
\label{eq:cornerindep}
\end{equation}
where the in-plane and normal errors are uncorrelated to first order, and the covariance of the rigid corners is
\begin{equation}
\bm\Sigma^{\mathtt{L}}_{k,jl}=\bm U_k\bm G_j\bm\Sigma_g\bm G_l^\top\bm U_k^\top+
\bm P_j\bm\Sigma_\pi\bm P_l^\top.
\label{eq:ckbdlidar3d}
\end{equation}
Stacked over the four corners, this covariance has rank six, three in-plane pose parameters and three plane parameters, and no independent corner term is added to complete it, since the corners are generated by the rigid fit alone. Both modality blocks are then represented on the common rigid-board tangent space at the camera-board linearization, and the residual covariance is projected onto that space before whitening. With the positive eigendecomposition $\bm\Sigma^r_{0,k}=\bm U_d\bm D_d\bm U_d^\top$, the whitening becomes
\begin{equation}
\begin{gathered}
\bm L_k=\bm D_d^{-1/2}\bm U_d^\top,\qquad d_k=\rank(\bm\Sigma^r_{0,k}),\\
\bar{\bm r}_k=\bm L_k\bm r_k,\qquad \bar{\bm J}_k=\bm L_k\bm J_k,
\end{gathered}
\label{eq:supportwhitening}
\end{equation}
the information is $\bar{\bm J}_k^\top\bar{\bm J}_k$, and every residual dimension of a rigid board is absorbed, so that $d_k=\rank(\bar{\bm J}_k)=6$ and the scale denominator of \cref{eq:scale} is $d_{\mathrm{ab}}=6|\calS|-6$, as for the four-hole target. A non-degenerate rigid board thus has no within-observation degree of freedom, which renders the within-view check of \cref{subsec:diagnosis} not applicable, and the consistency assessment relies on the cross-observation test of \cref{eq:crosssubset}. A free rectangle fitted to the same boundary monitors the known dimension, as the free circle fit monitors the radius, and a systematic size gap is determined once for the board and fixed for both sensors (\cref{sec:results:checkerboard}).

\section{Consistency Statistics}
\label{app:aux}
This appendix derives the deflation and influence results of \cref{sec:method:tests}, proves \cref{prop:ratio}, and states the radius diagnostics, the observation-level, subset, and trend statistics, the scale references, the consensus search, and the hold-out confirmation with its focused statistic.

\paragraph{Derivation of~\cref{eq:deflation}.}\label{app:insample}
The fitted residual is smaller than the measurement error, since the fit has already absorbed part of it, and the derivation quantifies this reduction. To first order,
\begin{equation}
\left\{
\begin{aligned}
&\hat{\bm r}_k=\bm r_k+\bm J_k\delta\hat{\bm\xi},\qquad
\delta\hat{\bm\xi}=-\bm H^{-1}\textstyle\sum_l\bm J_l^\top(\bm\Sigma^{r}_l)^{-1}\bm r_l,\\
&\mathbb E[\delta\hat{\bm\xi}\,\bm r_k^\top]=-\bm H^{-1}\bm J_k^\top,\qquad
\bm\Sigma_{\delta\hat{\bm\xi}}=\bm H^{-1} .
\end{aligned}
\right.
\label{eq:insampleproof}
\end{equation}
Expanding $\operatorname{Cov}(\hat{\bm r}_k)$ with~\cref{eq:insampleproof} yields $\bm\Sigma^{\hat r}_k=\bm\Sigma^{r}_k-\bm J_k\bm H^{-1}\bm J_k^\top$, and the trace against $(\bm\Sigma^{r}_{0,k})^{-1}$ with $\bm\Sigma^r_k=s\,\bm\Sigma^r_{0,k}$ yields
\begin{equation}
\begin{gathered}
\mathbb E[\NIS_k]=s\bigl({d_k}-\tr(\bm H^{-1}\bm H_k)\bigr),\\
\textstyle\sum_k\tr(\bm H^{-1}\bm H_k)=\tr(\bm I_6)=6 ,
\end{gathered}
\label{eq:leveragesum}
\end{equation}
which is~\cref{eq:deflation}. The six fitted parameters thus remove six degrees of freedom, distributed according to the leverages.

\paragraph{Expected influence of a fault-free observation.}\label{app:influence}
The influence of an observation is set by its geometry and not by its correctness. Influence alone therefore cannot detect a fault. At the true extrinsic, $\operatorname{Cov}(\bm r_k)=\bm\Sigma^{r}_k$ yields
\begin{equation}
\left\{
\begin{aligned}
&\mathbb E[\bm g_k\bm g_k^\top]=\bm J_k^\top(\bm\Sigma^{r}_k)^{-1}\bm J_k=\bm H_k,\\
&\mathbb E[I_k]=\tr(\bm H^{-1}\bm H_k)=h_k .
\end{aligned}
\right.
\label{eq:influenceproof}
\end{equation}
A fault-free observation is therefore expected to displace the solution by its own leverage. At the estimate, the deflation of \cref{eq:deflation} shrinks the expectation to $h_k-\tr((\bm H^{-1}\bm H_k)^2)$, which also depends on the geometry alone.

\paragraph{Proof of \cref{prop:ratio}.}
With $\bm x_{kj}=\bm c_k+\bm Q_k\bm a_j$ at the fit, \cref{eq:camjac,eq:jacobian} give $\bm J_{kj}=\bm F_{kj}\bm G_k^{-1}$ with $\bm G_k^{-1}=\bigl[\begin{smallmatrix}\bm I_3&\bm0\\-\skewop{\bm c_k}&\bm I_3\end{smallmatrix}\bigr]$, and hence $\bm F_k=\bm J_k\bm G_k$. The error $\bm J_k\bm\beta_k$ therefore has covariance $\gamma\,\bm J_k\bm G_k\bm\Sigma_{\theta,k}\bm G_k^\top\bm J_k^\top=\gamma\,\bm F_k\bm\Sigma_{\theta,k}\bm F_k^\top=\gamma\,\bm\Sigma^{\mathtt{C}}_k$. As it is Gaussian and independent of the other errors, the residual of observation $k$ has the covariance of \cref{eq:residcov} with $\bm\Sigma^{\mathtt{C}}_k$ replaced by $(1+\gamma)\,\bm\Sigma^{\mathtt{C}}_k$, and the batch jointly follows the factor model with this camera block. The weighted estimate is linear in the residuals to first order, and a zero-mean error does not bias it.

\paragraph{Radius diagnostics.}\label{app:radius}
If the free fit of hole $j$ in observation $k$ yields the radius $\hat\varrho_{kj}$ with marginal variance $\sigma^2_{\hat\varrho,kj}$ from the inverse of~\cref{eq:circinfo} before adding the prior, the radius diagnostic of each hole is
\begin{equation}
T^{r}_{kj}=
\frac{(\hat\varrho_{kj}-r)^2}
{\sigma^2_{\hat\varrho,kj}+\sigma_r^2}
\label{eq:radiuscheck}
\end{equation}
referred to the $\chi^2_1$ distribution. This reference is exact for a radius distributed as the prior and conservative for a fixed radius equal to $r$. For the batch-level statistic~\cref{eq:radiuspool} and a fixed radius equal to $r$, the reference is likewise conservative, and it tolerates a class-level deviation within the prior standard deviation, which sets its detection limit. The induced center displacement $(\sum_l\hat{\bm u}_l\hat{\bm u}_l^\top)^{-1}\sum_l\hat{\bm u}_l\,(\bar{\hat\varrho}-r)$, the coupling term of \cref{eq:radiuscov}, is reported for the most partial arc as the bias budget of \cref{subsec:lidarcov}. For a fully sampled boundary, $\sum_l\hat{\bm u}_l\!\approx\!\bm0$ and the prior is inert.

\paragraph{Observation-level and subset statistics.}\label{app:tests}
The studentized fitted-residual statistic is
\begin{equation}
T^{\mathrm{sd}}_k=\hat{\bm r}_k^\top\bigl(\bm\Sigma^{\hat r}_k\bigr)^{+}\hat{\bm r}_k\sim\chi^2_{\rank(\bm\Sigma^{\hat r}_k)},
\label{eq:stdnis}
\end{equation}
with $\bm\Sigma^{\hat r}_k$ the fitted-residual covariance of~\cref{eq:deflation}. For $T^{\mathrm{in}}_k$ the scale $\hat s_{\mathrm{in},-k}$ is the internal energy of the other observations, which does not involve $\bm r_k$. For $T^{\mathrm{cv}}_k$ and $T^{\mathrm{sd}}_k$ it is the cross-fitted working scale $\hat s_{-k}$ of the complement, computed from the fitted residuals of the leave-one-out solution,
\begin{equation}
\begin{gathered}
\hat s_{\mathrm{in},-k}=\frac{\sum_{l\ne k}T^{\mathrm{in}}_l}{\sum_{l\ne k}\bigl(d_l-\rank(\bar{\bm J}_l)\bigr)},\\
\hat s_{-k}=\frac{\sum_{l\ne k}\NIS_l(\hatT_{-k})}{{\sum_{l\ne k}d_l}-6}.
\end{gathered}
\label{eq:loscale}
\end{equation}
Since the residuals of a Gaussian least-squares fit are independent of its estimate, $\hat s_{-k}$ is independent of $\hatT_{-k}$ and of $\bm r_k$, and, for a fixed full-rank four-hole Gaussian model before selection,
\begin{equation}
\begin{gathered}
\frac{T^{\mathrm{cv}}_k}{11\,\hat s_{-k}}\sim F_{11,\,11(|\calS|-1)-6},\\
\frac{T^{\mathrm{in}}_k}{5\,\hat s_{\mathrm{in},-k}}\sim F_{5,\,5(|\calS|-1)},
\end{gathered}
\label{eq:obsF}
\end{equation}
with the analogous distribution for $T^{\mathrm{sd}}_k$. The cross-fitted reference mixes both subspaces in the model proportion, and $T^{\mathrm{cv}}_k$ and $T^{\mathrm{sd}}_k$ are therefore mean-matched but not exactly calibrated under a scale mismatch between them.

A cross-subset Mahalanobis distance detects differential bias when two sufficiently large subsets converge to different solutions. For disjoint subsets $A,B$,
\begin{equation}
\left\{
\begin{aligned}
&\bm d_{AB}=\Log(\hat{\bm T}_{A}\hat{\bm T}_{B}^{-1}),\\
&\bm S_{AB}\doteq \bm H_A^{-1}+\bm H_B^{-1},\\
&M_{AB}=\bm d_{AB}^\top\bm S_{AB}^{-1}\bm d_{AB}\sim\chi^2_6 .
\end{aligned}
\right.
\label{eq:subsettest}
\end{equation}
Under the null, $\bm d_{AB}\doteq\bm\epsilon_A-\bm\epsilon_B$ in the common left-perturbation convention, and its covariance is the sum of the two local covariances. With an estimated scale, the shape-only covariance $\bm C_{AB}=\bm H_{0,A}^{-1}+\bm H_{0,B}^{-1}$ and the joint residual scale $\hat s_{AB}=(\mathrm{RSS}_A+\mathrm{RSS}_B)/(\sum_k d_k-12)$ yield
\begin{equation}
F_{AB}=\frac{\bm d_{AB}^\top\bm C_{AB}^{-1}\bm d_{AB}}{6\,\hat s_{AB}}
\sim F_{6,\,\sum_kd_k-12},
\label{eq:crosssubset}
\end{equation}
provided that each subset has at least two observations and full-rank information. The distribution is exact for a fixed linear Gaussian design. For a rigid checkerboard, whose internal statistic has no degrees of freedom, the batch criterion evaluates~\cref{eq:crosssubset} in its upper tail on a fixed odd/even split of the accepted set, which compares two rigid poses and is insensitive to the board shape. The consensus stage applies the same distance to overlapping candidate sets as an uncalibrated ambiguity check. Like every internal comparison, $M_{AB}$ is insensitive to a bias shared by both subsets (\cref{subsec:scope}).

\paragraph{Residual-trend test.}
For each accepted observation, the normalized residual $e_k=\NIS_k/d_k$ is regressed by ordinary least squares on the LiDAR range $\rho_k$ of the centroid of its hole centers,
\begin{equation}
e_k=\beta_0+\beta_1\rho_k+\varepsilon_k,\qquad
t=\hat\beta_1/\widehat{\mathrm{se}}(\hat\beta_1),
\label{eq:trend}
\end{equation}
and a range-dependent shape error is reported when $|t|$ exceeds the two-sided level-$\alpha$ point of the Student $t$ distribution with $|\calS|-2$ degrees of freedom. Under a covariance of correct shape, $e_k$ has expectation $s(1-h_k/d_k)$ by~\cref{eq:leveragesum}. It is therefore uncorrelated with range up to the leverage term, whose batch mean is $6/(d_0K)$ for equal $d_k=d_0$, and a misspecified range-growth coefficient induces a slope beyond this term. Neglecting the leverage term makes this threshold an approximation, whose false-alarm rate is checked in \cref{subsec:expcov}. The scale $\hat s$ is reported with the tests as a model-mismatch indicator. A large scale with a large discrepancy indicates inconsistent observations, whereas a large scale with a small discrepancy indicates a noise model that is too small but approximately proportional.

\paragraph{Scale references.}
A statistic divided by a scale estimated from the same residuals is not $\chi^2$ distributed, since the batch statistic~\cref{eq:batchint} is itself a large part of $\sum_k\NIS_k$. Its ratio to $\hat s$ therefore rejects fewer fault-free batches than the nominal level at the $\chi^2$ threshold. Three scales are therefore distinguished. The working scale $\hat s_1$ is applied only to the evaluation of minimal consensus sets at a fixed threshold, the independent references of~\cref{eq:ftest} calibrate the tests, and the accepted-set scale $\hat s$ of \cref{eq:scale} determines the reported precision and detection limits. The absorbed component $\bm e^{\mathrm{ab}}_k$ from which $\hat s$ is formed is, after the joint fit, the part of the absorbed residual that the common extrinsic cannot remove because the observations require different corrections. It is independent of every internal component $\bm e^{\mathrm{in}}_k$ under the null, since the estimate depends on the range-space components only, and the internal reference of $T^{\mathrm{in}}_k$ therefore remains calibrated under a scale mismatch between the two subspaces.

\paragraph{Consensus search.}
The consensus search seeds minimal sets with every pair of observations, or with a fixed number of pairs drawn with a fixed seed in an order defined by the observation content when more pairs exist. The outcome thus depends on the set of observations and is independent of their order. It operates under the block covariance scaled by $\hat s_1$, with a fixed tail probability mapped to the threshold of $\chi^2_{d_k}$ for each observation. The total NIS of each consensus set of maximal size is computed at its own weighted fit, and the set with the smallest total is returned as the dominant set $\calS_0$.

\paragraph{Hold-out confirmation.}\label{app:holdout}
The validation set $\calV$ of \cref{subsec:mdb} is evaluated at the fixed final estimate, whose fit information is $\bm H_{\mathrm f}=\bm H$. Since a re-estimation that includes $\calV$ would require a new confirmation, $\calV$ is not merged into the reported estimate. The six-degree-of-freedom validation test uses the validation-only correction $\bm c$,
\begin{equation}
\frac{\bm c^\top\bigl(\bm H_{0,\mathrm v}^{-1}+\bm H_{0}^{-1}\bigr)^{-1}\bm c}{6\,\hat s}\sim F_{6,\,d_{\mathrm{ab}}},\qquad
\bm c=-\bm H_{0,\mathrm v}^{-1}\bm g_{0,\mathrm v},
\label{eq:valtest}
\end{equation}
where the subscript $0$ denotes information matrices and scores built from the covariance shapes $\bm\Sigma^r_{0,k}$ alone, so that the accepted-set scale $\hat s$ of \cref{eq:scale} on $d_{\mathrm{ab}}$ degrees of freedom appears once. The two sets are independent, and the covariance of $\bm c$ is the sum of the validation and estimation covariances. The $F$ distribution holds for a fixed accepted set, and the scale estimated after observation selection is examined empirically in \cref{subsec:expstate}. The focused statistic along a worst direction (\cref{app:focused}) has the non-centrality $\delta^2\bm v^\top\bm\Lambda^{\mathrm v}\bm v/\kappa_{\bm v}^2$ under a differential fault $\delta\bm v$, which makes $\MDB_{\mathrm{eff}}(\bm v)$ its detection limit along $\bm v$ under the normal approximation of \cref{def:mdb}. The six-degree-of-freedom test has a different non-centrality for the same fault, and the two limits are not in a fixed ratio.

\paragraph{Focused validation statistic and equal-half splits.}\label{app:focused}\label{app:halfsplit}
Along a unit direction $\bm v$ of one block, with $\bm c_{\mathrm b}$ the block of $\bm c$ in \cref{eq:valtest} and $\bm\Lambda_0$, $\bm\Lambda^{\mathrm v}_0$ the shape-only counterparts of $\bm\Lambda_{\mathrm f}$ and $\bm\Lambda^{\mathrm v}$ ($\bm\Lambda_{\mathrm f}=\bm\Lambda_0/\hat s$), the focused score is $u_{\bm v}=\bm v^\top\bm\Lambda^{\mathrm v}_0\bm c_{\mathrm b}$ and
\begin{equation}
\frac{u_{\bm v}^2}{\hat s\,\bm v^\top\bigl(\bm\Lambda^{\mathrm v}_0+\bm\Lambda^{\mathrm v}_0\bm\Lambda_{0}^{-1}\bm\Lambda^{\mathrm v}_0\bigr)\bm v}\sim F_{1,d_{\mathrm{ab}}},
\label{eq:focused}
\end{equation}
whose denominator is the null variance of the score inflated by the fit error of the assessed estimate, the inflation that defines $\kappa_{\bm v}$ in \cref{eq:mdbeff}. The validation set is excluded from estimation, as in the camera-calibration evaluation of \citet{peng2019calibrationwizard}. For independent sets with comparable information for each observation, let $\hat\theta_E$ and $\hat\theta_V$ denote their local estimates along the same fixed rotation or translation direction in a common left-perturbation frame. To first order,
\begin{equation}
\begin{aligned}
\operatorname{Var}(\hat\theta_E-\hat\theta_V)
&=\operatorname{Var}(\hat\theta_E)+\operatorname{Var}(\hat\theta_V)\\
&\simeq c_{\bm v}\left(\frac{1}{n_E}+\frac{1}{n_V}\right),
\end{aligned}
\label{eq:halfsplit}
\end{equation}
where $n_E$ and $n_V$ are the numbers of observations in the two estimates and $c_{\bm v}$ is their common directional variance coefficient. For fixed $n_E+n_V$, this variance is minimized by equal halves, which the half-split experiments therefore adopt. Different pose information or observation rejection can shift this balance, and the detection limit is computed from the actual information matrices in \cref{eq:mdbeff}. For equal halves, the fit and validation information are comparable, which yields $\kappa_{\bm v}\approx\sqrt2$, and the random-split limit is about $2.8\sqrt2\,\kappa_{\bm v}\approx5.6$ times the worst-direction standard deviation of the full accepted set. Under the default requirement $\delta=3\epsilon$ it meets the detection-limit requirement only when the worst-direction standard deviation is below $0.54\,\epsilon$. With $\sigma\propto K^{-1/2}$, this requires about $3.5$ times the number of observations that meet the precision requirement. The factor holds for equal halves with a common dominant direction.

\section{Additional Experimental Results}
\label{app:extra}

\paragraph{Baseline configuration.}\label{app:baselines}
velo2cam and Dom2019 are given the camera hole centers of VeriCalib and run their unmodified LiDAR detectors. As both detectors expect a multi-ring scan, each Livox cloud is cropped to the board region and converted to a virtual $128$-ring scan over its measured elevation field of view~\citep{hu2026p2calib}. velo2cam uses a circle radius of $0.12$\,m on Robot-A, and its other parameters retain their defaults. Dom2019 retains its released configuration except the board and sensor entries, namely $128$ layers, the board region as pass-through filter, circle-radius limits scaled to the $0.12$\,m radius, and a $0.50\times0.40$\,m refinement template. Dom2019 is reported only if at least three circles are detected. The D- and A-optimal criteria follow the released code of \citet{maye2016self} and \citet{peng2019calibrationwizard}, and the E-optimal criterion the published formula of \citet{lv2022observability}, as its selection module is unreleased. All criteria rank candidates without unit normalization between rotation and translation.

\begin{figure*}[!t]\centering\includegraphics[width=\textwidth]{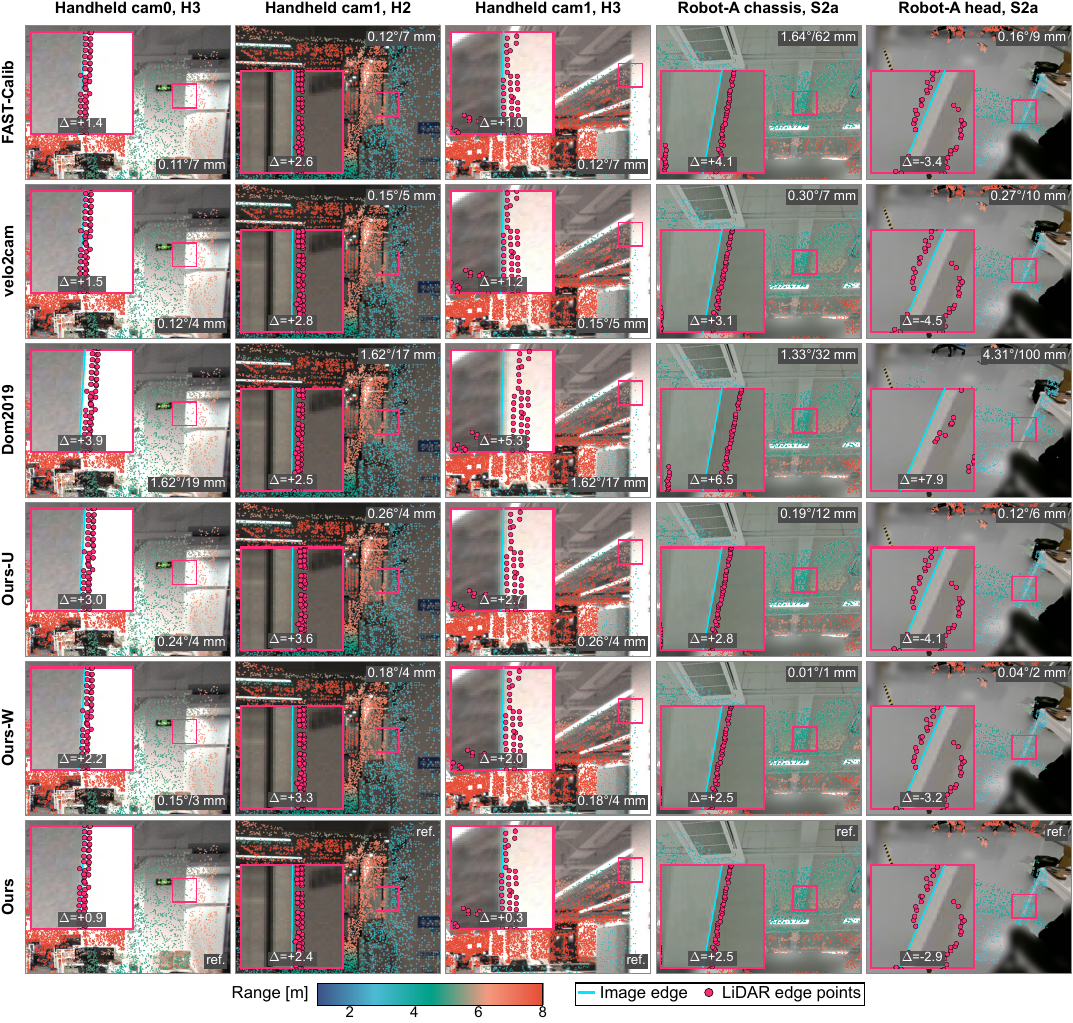}
\caption{LiDAR points of held-out scenes projected with each extrinsic (rows) on three handheld windows and two Robot-A captures, colored by range. Insets: a near-vertical image edge (cyan) with the depth-discontinuity points (magenta) and their mean horizontal offset $\Delta$ [px]; tags: deviation from Ours (ref.). In each image, the inset edge is the candidate with the most depth-discontinuity points, chosen before any offset is computed.}
\label{fig:projqual}\end{figure*}
\paragraph{Projection consistency on held-out scenes.}\label{app:projection}
As a check independent of the calibration target, LiDAR data of scenes not used for calibration are projected with every extrinsic and scored against the image. The scenes are three handheld sequences distinct from S5 (H1 to H3, both cameras) and the background of the S2a and S2b captures of Robot-A with the target masked. They yield $79$ handheld windows of five images with a local LiDAR map and $65$ Robot-A captures, all selected by data quality before any score is computed. Five scores are evaluated on fixed point sets: the edge score of \citet{levinson2013automatic} on LiDAR depth discontinuities, the median and the truncated mean of the distance of these points to the nearest image edge, the cross-view intensity consistency of map points, and the mutual information of LiDAR and image intensity~\citep{pandey2015automatic}. Each extrinsic is compared with VeriCalib by a paired Wilcoxon test over windows and captures with Holm correction at $p<0.01$. Of the $26$ pairs of a score and an extrinsic at least $0.7^\circ$ from VeriCalib, $19$ are significantly worse and none is better, and the seven non-significant cases all concern Dom2019, so that the held-out scenes detect the coarse errors of the compared extrinsics. \cref{fig:projqual} shows the alignment on image edges, where VeriCalib attains the smallest absolute offset $|\Delta|$ in every shown image.

\paragraph{Accuracy of selected subsets.}\label{app:subsetacc}
Beyond the predicted precision, we compare the estimate from the first $k$ selected observations with the weighted estimate from the remaining observations of the session, which the selection does not use, on the single-session sets with $20$ initial pairs. On head S2, the weakest-direction subsets of three to five observations deviate by medians of $0.09^\circ$ to $0.15^\circ$ and $4.2$ to $5.0$\,mm, against $0.16^\circ$ to $0.19^\circ$ and $6.0$ to $8.2$\,mm for random and $0.18^\circ$ to $0.27^\circ$ and $6.7$ to $12.8$\,mm for far-board-first subsets. On FS-B the rotation deviations are $0.21^\circ$ to $0.27^\circ$ against $0.41^\circ$ to $0.54^\circ$ for random subsets. On S4, the weakest-direction subsets deviate by $0.15^\circ$ to $0.26^\circ$ and $5.5$ to $9.4$\,mm on the chassis pair, against $0.31^\circ$ to $0.41^\circ$ and $9.2$ to $12.1$\,mm for random subsets, and by $0.15^\circ$ to $0.26^\circ$ against $0.19^\circ$ to $0.32^\circ$ in rotation on the head pair, with comparable translation. The advantage is largest for subsets of three to five observations, for which the choice of each observation matters most.

\begin{table}[!t]
\centering\caption{Circle-center RMSE [mm] $\downarrow$ of three circle fits on synthetic partial arcs.}
\label{tab:shortarc}\footnotesize\setlength{\tabcolsep}{7pt}\renewcommand{\arraystretch}{1.15}
\begin{tabular}{@{}lccc@{}}
\toprule
\rowcolor{tabhead}Arc fraction & Free & \textbf{Soft prior} & Fixed radius\\
\midrule
$1.00$ & $1.95$ & $1.95$ & $1.93$\\
$0.45$ & $4.17$ & $3.98$ & $2.01$\\
$0.33$ & $6.94$ & $5.72$ & $2.15$\\
$0.22$ & $16.73$ & $7.31$ & $2.76$\\
\bottomrule
\end{tabular}
\par\vspace{1.5pt}
{\scriptsize\raggedright \textit{Note:} $2000$ paired trials per row; $40$ points with $6$\,mm noise on a $120$\,mm circle; Fixed radius: exact radius assumed.\par}
\end{table}

\paragraph{State-guided collection.}\label{app:stateloop}
We replay the loop between assessment and collection on the recorded head S1, head S2, and chassis S2 sets ($14$, $21$, and $26$ observations). From two initial observations, the loop runs \cref{alg:pipeline} after each addition and adds the recorded observation that most increases the Schur-complement information along the direction that the state indicates, and the observations that it does not select validate the final estimate. Over $10$ initial pairs, every ordering meets the precision requirement after a median of $4$ to $6.5$ observations, and the DL is the binding requirement. The state-guided loop meets both requirements after medians of $10$, $10$, and $12.5$ observations on the three sets, against $11$, $11.5$, and $19$ for random ordering, $12$, $14.5$, and $20$ for far-board-first ordering, and $12$, $11$, and $16.5$ for the recording order. On chassis S2, every set chosen by the loop rejects the batch test in its lower tail, so that the state requests the validation set, and the unselected observations confirm the verified state in all $10$ replays, against $9$ for random, none for far-board-first ordering, and $6$ for the recording order.

\paragraph{Short-arc circle fit.}\label{app:shortarc}
\cref{tab:shortarc} compares the free three-parameter circle fit, the soft radius prior of \cref{eq:circlemap}, and a fit with the exact radius fixed on simulated boundary arcs of decreasing angular coverage, on which the free fit becomes ill-conditioned along the radial axis (\cref{subsec:lidarcov}).

\begin{table}[!t]
\centering\caption{Comparison of the beam and range-incidence LiDAR edge-noise models on real-world sets.}
\label{tab:lidarnoise}\footnotesize\setlength{\tabcolsep}{1.8pt}\renewcommand{\arraystretch}{1.15}
\begin{tabularx}{\linewidth}{@{}l*{7}{C}@{}}
\toprule
\rowcolor{tabhead} & & \multicolumn{2}{c}{Error [mm]} & \multicolumn{2}{c}{$\hat s$} & \multicolumn{2}{c}{$\hat\rho_z$} \\
\cmidrule(lr){3-4}\cmidrule(lr){5-6}\cmidrule(lr){7-8}
\rowcolor{tabhead}Dataset & $K$ & B & RI & B & RI & B & RI \\
\midrule
FS-B & $15$ & $15.14$ & \cellcolor{tabbest}$15.04$ & $50.67$ & \cellcolor{tabbest}$17.98$ & \cellcolor{tabbest}$-0.12$ & $-0.23$\\
Head S2 & $21$ & $6.85$ & \cellcolor{tabbest}$6.44$ & $14.69$ & \cellcolor{tabbest}$3.43$ & $-0.42$ & \cellcolor{tabbest}$-0.31$\\
Chassis S2 & $26$ & $17.82$ & \cellcolor{tabbest}$17.48$ & $6.85$ & \cellcolor{tabbest}$4.64$ & $-0.17$ & \cellcolor{tabbest}$+0.02$\\
\bottomrule
\end{tabularx}
\par\vspace{1.5pt}
{\scriptsize\raggedright \textit{Note:} B/RI: beam/range-incidence model; Error: leave-one-out alignment error; $\hat\rho_z$: residual-range correlation; \colorbox{tabbest}{\strut better} model per column pair.\par}
\end{table}

\paragraph{LiDAR edge-noise model.}\label{app:lidarnoise}
The range-incidence model of~\cref{eq:lidarnoise} is selected on held-out data against the beam covariance that LiDAR odometry back ends assign to each return~\citep{yuan2022efficient,zheng2024fast},
\begin{equation}
\bm\Sigma^{\mathrm{beam}}=\sigma_d^2\,\bm w\bm w^\top+d^2\sigma_\omega^2\bigl(\bm I-\bm w\bm w^\top\bigr),
\label{eq:beamcov}
\end{equation}
with ray direction $\bm w$, range $d$, ranging noise $\sigma_d$, and range-proportional bearing noise $\sigma_\omega$. When the beam covariance is resolved onto the board plane, grazing rays project ranging noise onto the in-plane direction, and incidence effects thus appear geometrically instead of through the explicit $1/\cos^2\eta$ term. The two models differ only in the point variance of the boundary Fisher information, with $\sigma_d=5$\,mm and $\sigma_\omega=0.3$\,mrad against $\sigma_{L,0}=4$\,mm and $\sigma_{L,1}=10^{-3}$, and they are compared inside an otherwise identical structured factor on the three raw-support sets with $K\ge14$ (\cref{tab:lidarnoise}). The range-incidence model attains the lower held-out discrepancy, $12.99$ against $13.27$\,mm on average, and the smaller residual scale on all three sets.

\begin{table}[!t]
\centering\caption{Sensitivity to the removal level on the combined Robot-A head and chassis batches.}
\label{tab:alphabias}\footnotesize\setlength{\tabcolsep}{1.8pt}\renewcommand{\arraystretch}{1.15}
\begin{tabularx}{\linewidth}{@{}l*{8}{C}@{}}
\toprule
\rowcolor{tabhead} & \multicolumn{4}{c}{Head ($K{=}35$)} & \multicolumn{4}{c}{Chassis ($K{=}40$)}\\
\cmidrule(lr){2-5}\cmidrule(lr){6-9}
\rowcolor{tabhead}$\alpha_{\mathrm{bias}}$ & $|\calS|$ & $t$ & $\sigma_t$ [mm] & $\MDB_t$ [mm] & $|\calS|$ & $t$ & $\sigma_t$ [mm] & $\MDB_t$ [mm]\\
\midrule
$0.002$ & $30$ & $-2.94$ & $1.84$ & $10.7$ & $34$ & $-2.62$ & $2.06$ & $11.9$\\
$0.005$ & $29$ & $-2.97$ & $1.95$ & $11.4$ & $32$ & $-2.57$ & $2.02$ & $11.7$\\
$0.01$ & $28$ & $-3.53$ & $1.88$ & $11.0$ & $32$ & $-2.57$ & $2.02$ & $11.7$\\
$0.02$ & $26$ & $-3.19$ & $1.92$ & $11.3$ & $32$ & $-2.57$ & $2.02$ & $11.7$\\
$0.05$ & $26$ & $-3.19$ & $1.92$ & $11.3$ & $28$ & $-2.14$ & $1.96$ & $11.4$\\
\bottomrule
\end{tabularx}
\par\vspace{1.5pt}
{\scriptsize\raggedright \textit{Note:} $|\calS|$: accepted observations; $t$: trend statistic~\cref{eq:trend}.\par}
\end{table}

\paragraph{Parameter sensitivity.}\label{subsec:sensitivity}\label{app:alphabias}
In a simulation with $K{=}16$, one random half split, and the median over $120$ trials, the observation-level test attains its level, with false-positive rates of $0.007$, $0.046$, and $0.095$ for $\alpha=0.01$, $0.05$, and $0.10$. Increasing the required power from $0.70$ through $0.80$ to $0.90$ increases the median $\MDB_t$ from $19.5$ through $21.9$ to $25.4$\,mm, and lowering the level from $0.10$ through $0.05$ to $0.01$ increases it from $19.5$ through $21.9$ to $26.8$\,mm, as expected for a stricter detection requirement. Across $\alpha_{\mathrm{bias}}=0.005$ to $0.05$, the diagnosis detects at least $0.99$ of the injected faults, and its false-removal rate, from $0.007$ to $0.051$, follows the removal level. On the combined Robot-A sets, varying $\alpha_{\mathrm{bias}}$ from $0.002$ to $0.05$ (\cref{tab:alphabias}) reduces the accepted count by four and six observations on the head and chassis pairs, and the reported precision and detection limit of both pairs vary by less than $7\%$. The head trend statistic lies between $-2.9$ and $-3.5$ and the chassis statistic between $-2.1$ and $-2.6$, and both combined sets remain inconsistent at every level. The removal level therefore determines which marginal observations are removed and leaves the state unaffected. Sweeping the boundary-point equicorrelation $\bar\rho$ of \cref{eq:neff} from $0$ to $0.3$ on session S1 lowers the scale from $12.7$ to $0.8$ on the head pair and from $12.8$ to $2.0$ on the chassis pair. It does not alter the accepted sets except at $\bar\rho=0$ on the chassis and alters the precision and detection limits by less than $15\%$ on the head and $13\%$ on the chassis of their values at the default $\bar\rho$. The batch decision of the chassis session differs for $\bar\rho\ge0.1$, and $\bar\rho$ is fixed at $0.05$ for all sets (\cref{sec:results:setup}).

\end{document}